\documentclass{article}

\usepackage[final]{neurips_2026}

\usepackage[utf8]{inputenc} 
\usepackage[T1]{fontenc}    
\usepackage[colorlinks,linkcolor=blue,citecolor=blue,pagebackref=true]{hyperref}
\usepackage{url}            
\usepackage{booktabs}       
\usepackage{amsfonts}       
\usepackage{nicefrac}       
\usepackage{microtype}      
\usepackage{xcolor}         
\usepackage{graphicx}
\makeatletter
\def\@trackname{}
\makeatother
\usepackage{subcaption}
\usepackage{algorithm}
\usepackage{wrapfig}
\usepackage{amsmath}
\usepackage{amssymb}
\usepackage{mathtools}
\usepackage{amsthm}
\usepackage{xcolor}
\usepackage{enumitem}
\definecolor{myhighlight}{RGB}{235,185,150} 
\usepackage[capitalize,noabbrev]{cleveref}

\newtheorem{theorem}{Theorem}
\newtheorem{lemma}{Lemma}
\newtheorem{proposition}{Proposition}

\newtheorem{assumption}{Assumption}
\newtheorem{definition}{Definition}

\crefname{theorem}{Theorem}{Theorems}
\Crefname{theorem}{Theorem}{Theorems}

\crefname{lemma}{Lemma}{Lemmas}
\Crefname{lemma}{Lemma}{Lemmas}

\crefname{proposition}{Proposition}{Propositions}
\Crefname{proposition}{Proposition}{Propositions}

\crefname{corollary}{Corollary}{Corollaries}
\Crefname{corollary}{Corollary}{Corollaries}

\crefname{assumption}{Assumption}{Assumptions}
\Crefname{assumption}{Assumption}{Assumptions}

\crefname{definition}{Definition}{Definitions}
\Crefname{definition}{Definition}{Definitions}

\newcommand{\decp}[1]{\textcolor{red}{\scriptsize$\downarrow$#1\%}}

\usepackage[textsize=tiny]{todonotes}
\usepackage{titlesec}

\titlespacing*{\section}{0pt}{1.5ex plus 0.5ex minus 0.2ex}{0.8ex plus 0.2ex}
\titlespacing*{\subsection}{0pt}{1.2ex plus 0.4ex minus 0.2ex}{0.6ex plus 0.2ex}
\titlespacing*{\subsubsection}{0pt}{1.0ex plus 0.3ex minus 0.2ex}{0.5ex plus 0.2ex}
\titlespacing*{\paragraph}{0pt}{0.8ex plus 0.2ex minus 0.1ex}{0.4ex}

\crefname{theorem}{Theorem}{Theorems}
\Crefname{theorem}{Theorem}{Theorems}

\crefname{lemma}{Lemma}{Lemmas}
\Crefname{lemma}{Lemma}{Lemmas}

\crefname{proposition}{Proposition}{Propositions}
\Crefname{proposition}{Proposition}{Propositions}

\crefname{corollary}{Corollary}{Corollaries}
\Crefname{corollary}{Corollary}{Corollaries}

\crefname{assumption}{Assumption}{Assumptions}
\Crefname{assumption}{Assumption}{Assumptions}

\crefname{definition}{Definition}{Definitions}
\Crefname{definition}{Definition}{Definitions}

\usepackage{amsmath,amsfonts,bm}

\def\eqref#1{equation~\ref{#1}}

\def\1{\bm{1}}

\DeclareMathAlphabet{\mathsfit}{\encodingdefault}{\sfdefault}{m}{sl}
\SetMathAlphabet{\mathsfit}{bold}{\encodingdefault}{\sfdefault}{bx}{n}

\usepackage{xcolor}
\definecolor{forestgreen}{rgb}{0.13, 0.55, 0.13}

\definecolor{frenchblue}{rgb}{0.0, 0.45, 0.73}

\definecolor{cherryblossompink}{rgb}{1.0, 0.72, 0.77}

\definecolor{bittersweet}{rgb}{1.0, 0.44, 0.37}

\definecolor{navyblue}{rgb}{0.0, 0.0, 0.5}

\usepackage{booktabs}       
\usepackage{amsfonts}       
\usepackage{nicefrac}       
\usepackage{microtype}      
\usepackage{graphicx}
\usepackage{float}
\usepackage{mathtools}
\usepackage{bbm}
\usepackage{amssymb}
\usepackage{amsmath}
\usepackage{wrapfig}
\usepackage{color}
\usepackage{hyperref}
\usepackage{url}
\usepackage{amsmath}
\usepackage{algpseudocode}
\usepackage{algpseudocode}

\usepackage{tabularx}
\usepackage{makecell}
\usepackage{booktabs}
\usepackage{svg}
\usepackage{tikz}
\usepackage{multirow}
\usepackage{multicol}
\usepackage{colortbl}
\usepackage{caption}
\usepackage{subcaption}
\usepackage{multirow}
\usepackage{soul}

\definecolor{LightCyan}{rgb}{0.88,1,1}
\definecolor{Gray}{gray}{0.95}

\usepackage{bm}
\usepackage{mathtools}

\usepackage{multirow}
\usepackage{xspace}
\usepackage{tocloft}

\usepackage[textsize=tiny]{todonotes}

\definecolor{cyan}{cmyk}{.3,0,0,0}
\definecolor{darkcyan}{rgb}{0.0, 0.55, 0.55}

\usepackage{amsthm}

\newcommand{\sd}{\text{SD}}

\usepackage{bm}

\usepackage{comment}

\newcommand{\cE}{\mathcal{E}}

\newcommand{\cG}{\mathcal{G}}

\newcommand{\cL}{\mathcal{L}}

\newcommand{\cP}{\mathcal{P}}

\newcommand{\cV}{\mathcal{V}}

\newcommand{\bfA}{\mathbf{A}}

\newcommand{\bfB}{\mathbf{B}}

\newcommand{\bfD}{\mathbf{D}}

\newcommand{\bfI}{\mathbf{I}}

\newcommand{\bfL}{\mathbf{L}}

\newcommand{\bfu}{\mathbf{u}}

\newcommand{\bfW}{\mathbf{W}}

\newcommand{\bsalpha}{\boldsymbol{\alpha}}
\newcommand{\bsbeta}{\boldsymbol{\beta}}

\newcommand{\bslambda}{\boldsymbol{\lambda}}

\title{StructSAM: Structure- and Spectrum-Preserving Token Merging for Segment Anything Models}

\author{Duy M. H. Nguyen$^{1,2,3}$, Tuan A. Tran$^{2,4}$, Duong Nguyen$^{2}$, Siwei Xie$^{1}$, \\ Trung Q. Nguyen$^{2}$, Mai T. N. Truong$^{2}$ 
Daniel Palenicek$^{5}$, An T. Le$^{5,6,7}$, Michael Barz$^{2}$ \\, Eric Hannus$^{8}$, TrungTin Nguyen$^{9}$, Tuan Dam$^{10}$ 
Tran Le$^{11}$, Ngan Le$^{12}$, Minh Vu$^{6,7}$, \\ Khoa Doan$^{7}$, Vien Ngo$^{6,7}$, Pengtao Xie$^{13}$
James Zou$^{14}$, Daniel Sonntag$^{2,4}$\\ Jan Peters$^{2,5}$, Mathias Niepert$^{1,3}$ \\[0.5em]
$^1$University of Stuttgart \quad $^2$DFKI \\
$^3$Max Planck Research School \quad $^4$University of Oldenburg \\
$^5$Technical University of Darmstadt \quad $^6$VinRobotics \\
$^7$VinUniversity \quad $^8$Aalto University \\
$^9$Queensland University of Technology \quad $^{10}$HUST \\
$^{11}$Technical University of Denmark \quad $^{12}$University of Arkansas \\
$^{13}$UC San Diego \quad $^{14}$Stanford University}

\begin{document}

\maketitle
\vspace*{-0.2in}
\begin{abstract}
  \vspace{-0.1in}
Recent token merging techniques for Vision Transformers (ViTs) provide substantial speedups by reducing the number of tokens processed by self-attention, often without retraining. However, their direct application to the Segment Anything Model (SAM) family is nontrivial: SAM’s image encoder mixes windowed and global attention, and its mask decoder relies on dense, prompt-conditioned features for precise boundary prediction.
We systematically evaluate representative token-merging methods on the SAM family models in a strict off-the-shelf setting, and find that existing destination-selection heuristics can erode boundaries and leak prompt information as merge rates increase.
We propose \textbf{StructSAM}, a resolution-preserving framework for SAM that computes lightweight token-energy scores from first-order feature gradients, protects boundary and prompt regions via grid-based flatness screening, and merges tokens in flat areas toward low-energy targets with explicit recovery.
We further present a spectral graph coarsening view, showing that score-guided merging yields bounded Laplacian spectral distortion relative to random or window-restricted baselines.
Across five natural and medical benchmarks, StructSAM reduces encoder FLOPs by 25--30\% (up to 40\%+ with prompt-aware merging) with minor drops in mIoU/Dice, consistently outperforming the latest merging techniques at the same compute.
\end{abstract}
\vspace{-0.1in}

\addtocontents{toc}{\protect\setcounter{tocdepth}{-1}}
\section{Introduction}
\vspace{-0.1in}
SAM~\cite{kirillov2023segment} has recently emerged as a foundation model for segmentation, demonstrating remarkable generalization across diverse visual domains. By combining a powerful Vision Transformer (ViT) image encoder~\cite{dosovitskiy2020image} with prompt-conditioned mask decoding, SAM enables flexible, interactive segmentation using points, boxes, and masks, often without any task-specific training. This versatility has led to rapid adoption beyond natural images, including medical imaging, e.g., MedSAM~\cite{ma2024segment}, robot surgery~\cite{wang2023sam}, and embodied AI systems~\cite{li2025controlvla,noh2025graspsam}, where robust segmentation is a critical component. Despite these strengths, SAM's practical deployment is severely constrained by its computational cost~\cite{zhang2023faster}. In particular, SAM follows a heavy encoder–light decoder design: for large variants such as ViT-L and ViT-H, the image encoder alone accounts for over 98\% of the total model parameters and FLOPs, making inference expensive even when the downstream task is lightweight or interactive.

Recognizing this bottleneck, recent efforts to improve SAM's efficiency have largely focused on model compression strategies, including knowledge distillation to smaller architectures~\cite{zhang2023faster,zhou2025edgesam}, lightweight re-designs of the vision backbone~\cite{xiong2024efficientsam}, and post-training quantization~\cite{lv2024ptq4sam,zhang2025ahcptq}. While effective, these approaches typically require retraining or fine-tuning the image encoder on large-scale datasets or applying task-specific calibration during deployment. Such requirements limit their applicability in scenarios where SAM is used off-the-shelf, training data is unavailable, or where domain-specific fine-tuning is undesirable due to cost and stability concerns.

In practice, SAM's pre-trained representations are already highly expressive across many domains, \textbf{motivating approaches that retain the original model weights while reducing inference cost}. Recently, token merging~\cite{bolya2022token,tran2024accelerating} has emerged as a powerful, training-free strategy to accelerate Vision Transformers. By dynamically grouping redundant patches, it significantly boosts throughput for both classification and segmentation tasks~\cite{bolya2022token,norouzi2024algm} without sacrificing model performance.
\textit{However, directly applying existing token merging methods to SAM is nontrivial} (Fig. \ref{fig:sam_encoder}). Unlike conventional ViTs, (i) SAM's image encoder \textit{interleaves windowed and global attention} and preserves fine-grained spatial details, which is crucial for mask prediction. Moreover, (ii) segmentation tasks inherently \textit{require dense, structured outputs}, making aggressive token reduction~\cite{bolya2023tomesd,kim2024token} incompatible without careful unmerging or feature upsampling mechanisms.

\begin{wrapfigure}{r}{0.5\columnwidth}
    \centering
    \vspace{-0.2in}
    \includegraphics[width=0.5\columnwidth]{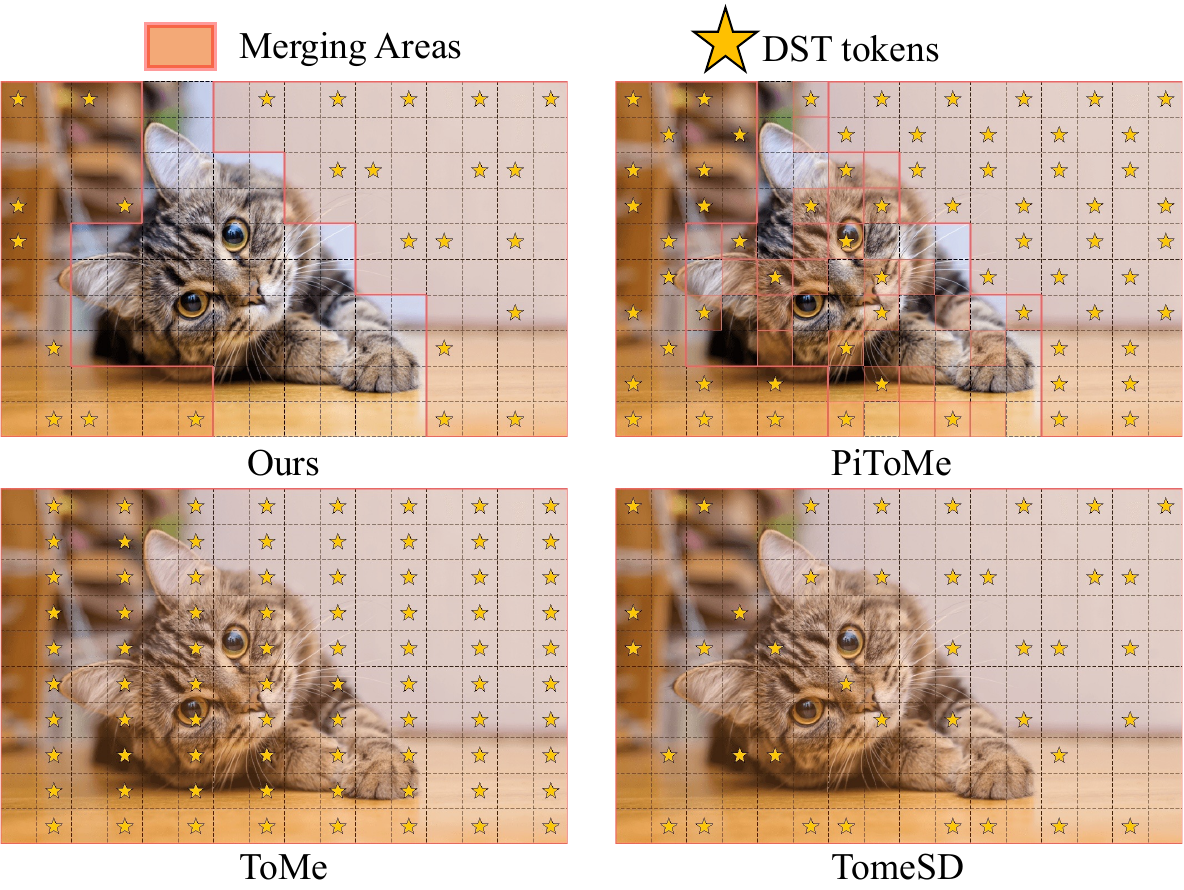}
    \caption{\textbf{Token merging strategies}. ToMe~\cite{tome} and ToMeSD~\cite{bolya2023tomesd} merge tokens using fixed or random patterns within local windows, often ignoring structural importance. PiToMe~\cite{tran2024accelerating} introduces a protected set but degrades at high merge rates. In contrast, StructSAM performs structure-aware merging by preserving boundary- and object-critical tokens (cells outside \colorbox{myhighlight}{Mergable Areas}) while merging tokens in homogeneous regions.}
    \vspace{-0.15in}
    \label{fig:TokenMergingIllustration}
\end{wrapfigure}

Motivated by these challenges, we systematically study representative token-merging techniques originally developed for ViTs in images, videos~\cite{tome,tran2024accelerating,li2024vidtome} and dense segmentation settings~\cite{norouzi2024algm,bolya2023tomesd}, and adapt them to the SAM family, including SAM, Efficient-SAM~\cite{xiong2024efficientsam}, and Medical SAM~\cite{ma2024segment}. We evaluate commonly used SAM backbones (ViT-B and ViT-L) on boundary-sensitive natural-image benchmarks and on cross-domain medical segmentation datasets, tracking tasks in robotic manipulation. Crucially, all experiments are conducted in a \textit{strict off-the-shelf setting}, without any fine-tuning, to assess the accuracy--efficiency limits of inference-time token merging for foundation segmentation models. Our findings show that prior approaches - which typically rely on random, global, or window-restricted destination selection (Fig.~\ref{fig:TokenMergingIllustration}) - struggle to preserve object boundaries and prompt-relevant regions, leading to noticeable degradation as the merge rate increases.

To mitigate this issue, we propose a \textit{structure}- and \textit{spectrum-preserving} token-merging framework tailored to SAM-style architectures, called StructSAM. Our method, as demonstrated in Figure~\ref{fig:fullpage}, (i) identifies boundary-critical tokens using a lightweight energy score computed from first-order finite differences on the encoder feature map~\cite{ziou1998edge,forsyth2002computer}, inspired by connections to spectral graph energy~\cite{balakrishnan2004energy,gutman2006laplacian}. Tokens near object boundaries exhibit large feature gradients and are therefore protected from merging, while tokens in visually flat regions can be safely merged. Building on this distinction, StructSAM (ii) groups tokens into grid-based cells and ranks cells by flatness to select mergeable regions with spatial coherence, and (iii) merges tokens within selected cells toward low-energy destinations while explicitly unmerging to recover the original token resolution required by SAM's mask decoder. When (iv) box prompts are available, a prompt-aware variant further restricts aggressive merging to tokens outside the prompted region for additional speedups. Finally, we show that our merging one admits a spectral graph-theoretic interpretation~\cite{jin2020graph,tran2024accelerating}, under which it provably preserves intrinsic spectral properties of the original token space under mild conditions, offering a principled explanation for its stability and effectiveness.

In summary, our work contributes the following points:
\begin{itemize}[topsep=0pt]  
    \setlength{\itemsep}{4pt}   
    \setlength{\topsep}{1pt}    
    \setlength{\parskip}{0pt}   
    \item We present the first systematic evaluation of \emph{inference-time} token merging for the SAM family in a strict off-the-shelf setting, revealing why existing merging strategies can fail under boundary- and prompt-sensitive segmentation.
    \item We propose a boundary- and prompt-aware, \textit{structure-preserving} merging strategy that leverages gradient-based token energy and cell flatness to protect informative regions while merging redundant background tokens. StructSAM achieves strong accuracy–efficiency trade-offs, reducing FLOPs by 25–30\% (up to 40\%+ with prompt-aware merging) while maintaining segmentation quality across benchmarks, and extends to tracking via Efficient-SAM in robotic manipulation tasks.
    \item We provide a spectral graph-theoretic analysis showing that, under mild assumptions, our score-guided merging yields a provable bound on spectral distortion, offering a principled explanation for its robustness in dense segmentation and the limitations of prior random or similarity-only merging approaches.
\end{itemize}

\section{Related Work}
\begin{figure*}[!hbt]
\centering
\vspace{-0.1in}
\includegraphics[width=0.95\textwidth,height=\textheight,keepaspectratio]{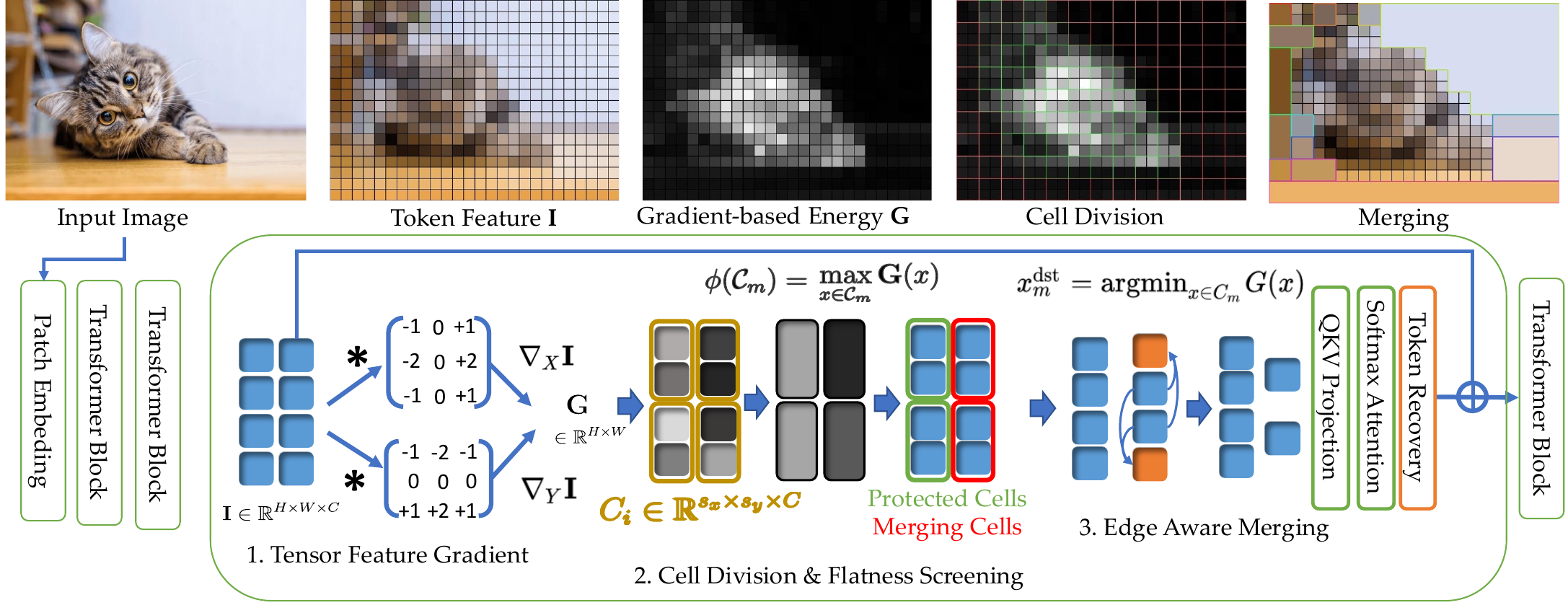}
\caption{\textbf{StructSAM overview.} Feature-gradient energy identifies structurally important tokens, forming a \textbf{protected set} that is kept at full resolution. Visually flat regions are selectively merged (one representative per mergeable cell) and followed by lightweight token recovery (unmerging), resulting in SAM’s mask decoder still receiving a dense feature grid.}
\vspace{-0.1in}
\label{fig:fullpage}
\end{figure*}
\paragraph{Compression, Distillation and Quantization Methods for SAM.}
To improve SAM's efficiency, most existing methods focus on \textit{backbone replacement or structured compression}. Approaches such as MobileSAM~\cite{zhang2023faster}, FastSAM~\cite{zhao2023fast}, EdgeSAM~\cite{zhou2025edgesam}, and EfficientSAM~\cite{xiong2024efficientsam} replace the original ViT-H encoder with lightweight architectures, including TinyViT~\cite{wu2022tinyvit}, or EfficientViT~\cite{zhang2024efficientvit}, often combined with task-specific training strategies. While effective, these methods typically \textit{require training new models from scratch}, incurring high data and training costs and potentially weakening SAM's pre-trained generalization. More recent work compresses the original SAM via structured pruning and distillation~\cite{chen2024slimsam}, which better preserves pre-trained knowledge but introduces additional optimization complexity and limited flexibility across domains and hardware settings.

In contrast, StructSAM investigates \textit{SAM off-the-shelf acceleration} via inference-time token merging, \textit{without modifying model weights or requiring retraining}, offering a lightweight alternative to architectural compression. Additionally, while quantization methods~\cite{liu2024pq,lv2024ptq4sam,xiao2023smoothquant} have been explored to reduce SAM's memory footprint and bit-width requirements, our merging method is fundamentally orthogonal to these approaches; by combining both, we can achieve synergistic gains, further reducing FLOPs and accelerating inference on already quantized models (Fig.~\ref{fig:quantization}).

\vspace{-0.05in}
\paragraph{Token Pruning and Merging in Transformers.}
Existing methods for reducing transformer complexity mainly fall into dynamic token pruning and token merging. \textsc{Token pruning} has been explored in both NLP \cite{goyal2020power, zhong2023revisiting, ahmadpanah2025dynamic} and vision transformers \cite{yin2022vit,wang2023zero,eliopoulos2025pruning}, where tokens are dynamically removed based on learned importance. These methods typically require training and introduce input-dependent token counts, complicating batching and limiting practical speedups. \textsc{Token merging} methods, led by ToMe \cite{tome} and follow-up works \citep{DiffRate,shi2023crossget,kim2024token}, merge similar tokens using lightweight bipartite matching, achieving better efficiency-accuracy trade-offs. While attention- or guidance-based improvements exist, these approaches remain sensitive to token partitioning and distribution imbalance. More principled clustering or graph-based methods \citep{loukas2018spectrally,tran2024accelerating} provide stronger guarantees but introduce significant overhead, limiting their practicality for efficient ViT inference. Moreover, their progressive token reduction across layers is ill-suited to SAM’s dense segmentation.

\textbf{Token Reduction for Semantic Segmentation.}
A number of methods aim to reduce tokens in transformer-based segmentation. Early token halting methods stop processing high-confidence tokens, but may hinder information flow in deeper layers~\cite{tang2023dynamic,liu2024dynamic}. Token clustering approaches, such as ELViT~\cite{liang2022expediting} and AiluRus~\cite{li2023ailurus}, merge tokens within a layer but achieve limited speedup due to clustering overhead and one-shot reduction. Content-aware merging~\cite{lu2023content} introduces a policy network, increasing computational cost, while ALGM~\cite{norouzi2024algm} performs adaptive local-to-global merging based on similarity, improving efficiency but relying on feature-level heuristics.

In contrast, we design a \textit{structure-aware merging} strategy that leverages fast feature-gradient-based energy scores to preserve boundary-critical tokens while merging redundant regions, enabling more effective and principled token reduction with minimal overhead.

\vspace{-0.05in}
\section{Method} \label{sec:method}
\vspace{-0.05in}
\subsection{SAM architecture}
\vspace{-0.05in}
SAM~\cite{kirillov2023segment} employs a transformer-based image encoder for dense image representation learning.
Images are first embedded into a grid of visual tokens via a patch embedding layer and
processed by a hierarchical encoder that produces multi-scale features, which are then
consumed by a lightweight mask decoder.

\begin{wrapfigure}{r}{0.4\columnwidth}
    \centering
    \vspace{-0.12in}
    \includegraphics[width=0.4\columnwidth]{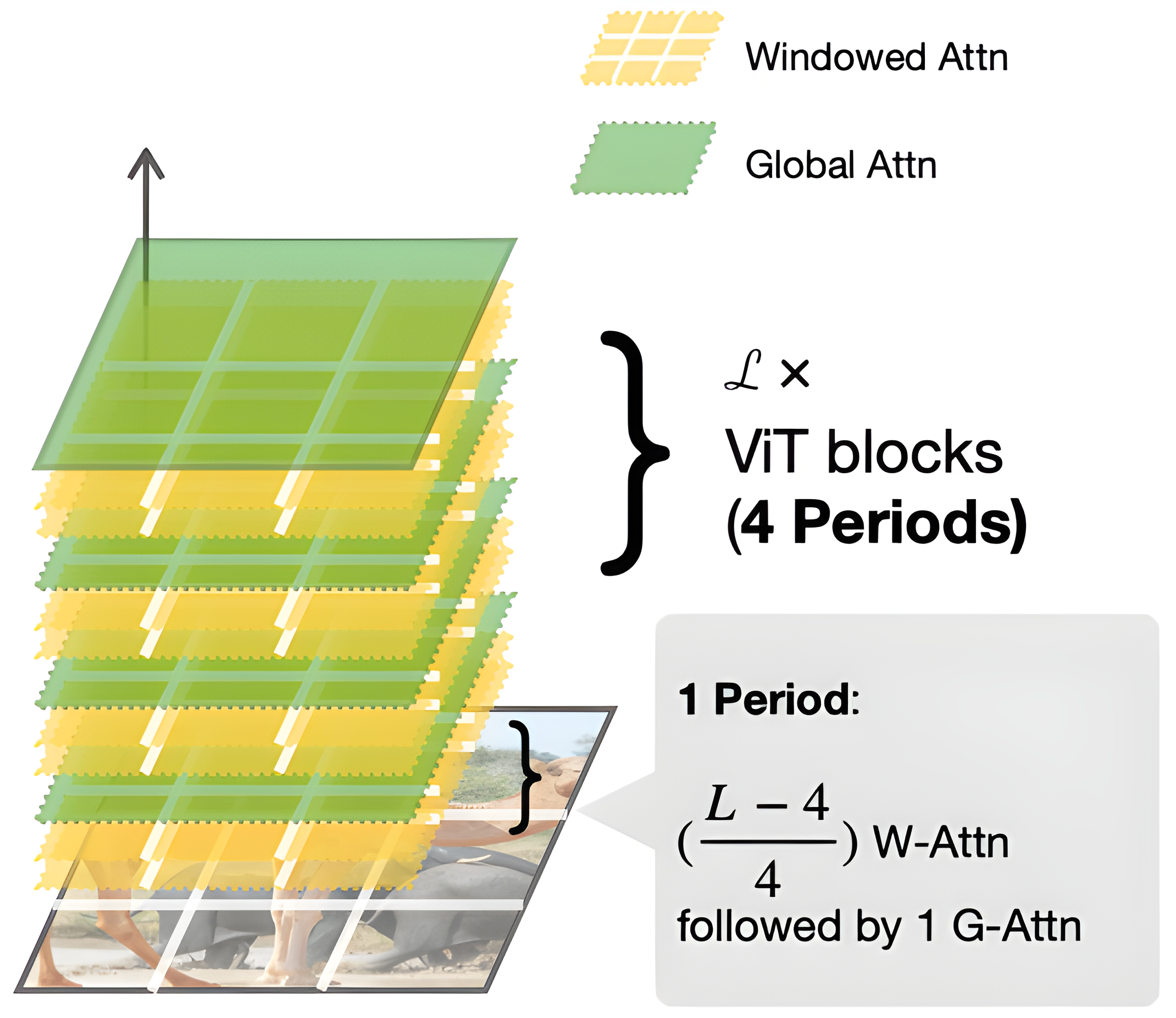}
    \caption{SAM’s ViT encoder mixes windowed and global attention, making standard token merging methods designed for discriminative ViTs difficult to apply directly.}
    \label{fig:sam_encoder}
\end{wrapfigure}

To balance efficiency and global context modeling, SAM interleaves
\textit{local} within a window size and \textit{global attention} in each encoder
(Figure~\ref{fig:sam_encoder}). Given an image token set
\( \mathcal{X} = \{x_1, \dots, x_N\} \), tokens are partitioned into disjoint spatial
windows \( \{\mathcal{P}_k\}_{k=1}^{K} \) such that
\setlength{\abovedisplayskip}{2pt}
\setlength{\belowdisplayskip}{2pt}
\setlength{\abovedisplayshortskip}{2pt}
\setlength{\belowdisplayshortskip}{2pt}
\begin{equation*}
    \bigcup_{k=1}^{K} \mathcal{P}_k = \mathcal{X}, \qquad
\mathcal{P}_i \cap \mathcal{P}_j = \emptyset \ \text{for } i \neq j .
\end{equation*}
At most layers, self-attention is computed independently within each window to capture
local spatial dependencies. For a token \( x_i \), local attention is defined as
\begin{equation*}
    \mathrm{Attn}_{\mathrm{local}}(x_i) =
\sum_{x_j \in \mathcal{P}(i)}
\mathrm{softmax}_j
\left(
\frac{\mathbf{q}_i^\top \mathbf{k}_j}{\sqrt{d}}
\right)
\mathbf{v}_j ,
\end{equation*}
where \( \mathbf{q}_i = \mathbf{q}(x_i) \),
\( \mathbf{k}_j = \mathbf{k}(x_j) \), and
\( \mathbf{v}_j = \mathbf{v}(x_j) \).

To enable long-range information exchange, SAM periodically applies
\textit{global attention} over locally updated tokens
\setlength{\abovedisplayskip}{2pt}
\setlength{\belowdisplayskip}{2pt}
\setlength{\abovedisplayshortskip}{2pt}
\setlength{\belowdisplayshortskip}{2pt}
\( \tilde{x}_i = \mathrm{Attn}_{\mathrm{local}}(x_i) \):
\begin{equation*}
    \mathrm{Attn}_{\mathrm{global}}(\tilde{x}_i) =
\sum_{j=1}^{N}
\mathrm{softmax}_j
\left(
\frac{\tilde{\mathbf{q}}_i^\top \tilde{\mathbf{k}}_j}{\sqrt{d}}
\right)
\tilde{\mathbf{v}}_j .
\end{equation*}
This design captures global context while amortizing the cost of full self-attention across layers.

\vspace{-0.1in}
\paragraph{Why SAM needs explicit token recovery.}
Unlike standard ViTs that can progressively reduce tokens, SAM requires a \emph{dense} feature grid for its mask decoder and relies on a consistent 2D token layout. Therefore, we adopt a \emph{merge--compute--unmerge} scheme within each block, reducing attention cost while restoring full-resolution tokens for subsequent layers and decoding.

\vspace{-0.05in}
\subsection{Resolution-preserving merge--unmerge interface}
To reduce SAM’s self-attention cost, we introduce a \textbf{token merging framework} tailored to its mixed local–global attention. At an encoder layer \(\ell\), let the full-resolution token set be
\( \mathcal{X}^{(\ell)} = \{x_1^{(\ell)}, \dots, x_{N_\ell}^{(\ell)}\} \).
We define a merging operator:
\[
f_\ell: \mathcal{X}^{(\ell)} \rightarrow \widetilde{\mathcal{X}}^{(\ell)}, \qquad
|\widetilde{\mathcal{X}}^{(\ell)}| < |\mathcal{X}^{(\ell)}|,
\]
which aggregates spatially redundant tokens while preserving semantically relevant information.
Self-attention is computed on the merged tokens and then lifted back to the original resolution by an explicit unmerging operator \(f_\ell^{-1}\):
\begin{equation*}
\widetilde{\mathcal{X}}^{(\ell+1)}
=\mathrm{Attn}\!\left(\widetilde{\mathcal{X}}^{(\ell)}\right),
\qquad
\mathcal{X}^{(\ell+1)}
=f_\ell^{-1}\!\left(\widetilde{\mathcal{X}}^{(\ell+1)}\right),
\end{equation*}
where \(\mathrm{Attn}\)(.) denotes either local or global attention depending on the layer.
This design enables efficient attention computation while preserving \emph{the segmentation accuracy of SAM}.

\paragraph{Plug-in baselines.}
Under this merge--unmerge interface, existing token merging strategies (e.g., \textsc{ToMe}~\cite{tome}, \textsc{Pi-ToMe}~\cite{tran2024accelerating}, \textsc{ToMe-SD}~\cite{bolya2023tomesd}, \textsc{VidToMe}~\cite{li2024vidtome}, \textsc{ALGM}~\cite{norouzi2024algm}, etc) correspond to different choices of how \(f_\ell\) selects merge candidates and destinations.
We compare these methods using the same interface in the Experiments section, validating across different SAM encoder sizes and datasets.

\vspace{-0.05in}
\subsection{StructSAM: gradient-guided structure-aware token merging}
\vspace{-0.05in}
We overview our method in Figure~\ref{fig:fullpage}.
At each transformer layer \( \ell \) of the SAM image encoder, we consider a set of image tokens
$    \mathcal{X}^{(\ell)} = \{x_1^{(\ell)}, x_2^{(\ell)}, \dots, x_{N_\ell}^{(\ell)}\},$
which can be reshaped into a tensor feature map
$
\mathbf{I}^{(\ell)} \in \mathbb{R}^{H_\ell \times W_\ell \times C_\ell},\ \mathrm{ where  }\ |\mathcal{X}^{(\ell)}| = H_\ell \times W_\ell.
$
Each token \( x_i^{(\ell)} \) is associated with a spatial position
\(
p(i) = (h(i), w(i)),
\)
corresponding to its height and width indices on the token grid.
These spatial indices match those used in SAM’s decomposed relative positional embedding, providing a consistent notion of spatial locality across attention and feature processing.
Our goal is to reduce the number of tokens participating in self-attention at layer
\( \ell \) by merging spatially redundant tokens, while preserving tokens that are
critical for object boundaries and prompt-conditioned segmentation. This structure-aware selection enables targeted merging in redundant regions while preserving important boundaries.
\vspace{-0.05in}
\paragraph{Feature gradient--based energy estimation.}
We interpret token features at layer \( \ell \) as a discrete feature field
\[
\mathbf{I}^{(\ell)} : (h,w) \mapsto \mathbf{f}^{(\ell)}_{h,w} \in \mathbb{R}^{C_\ell},
\]
and approximate local feature gradients via finite differences (or Sobel operators~\cite{gonzalez2009digital}) over adjacent tokens:
\begin{equation*}
    \nabla_x \mathbf{I}^{(\ell)}(h,w)
 \approx
\mathbf{f}^{(\ell)}_{h,\,w+1}
-
\mathbf{f}^{(\ell)}_{h,\,w-1};\ 
\nabla_y \mathbf{I}^{(\ell)}(h,w)
 \approx
\mathbf{f}^{(\ell)}_{h+1,\,w}
-
\mathbf{f}^{(\ell)}_{h-1,\,w}.
\end{equation*}

The gradient magnitude
\[
\mathbf{G}^{(\ell)}(h,w)
=
\sqrt{
\left\lVert \nabla_x \mathbf{I}^{(\ell)}(h,w) \right\rVert_2^2
+
\left\lVert \nabla_y \mathbf{I}^{(\ell)}(h,w) \right\rVert_2^2
}
\]
acts as a lightweight \emph{energy} score (see Figure~\ref{fig:sam-all_models_miou_flops_avg_combined-right}), where larger values \emph{tend to reflect} stronger local feature variations (e.g., boundaries) and are thus more likely to be preserved during merging. As shown in Table~\ref{tab:cost_energy}, this first-order measure provides a competitive proxy to graph-based energy formulations~\cite{tran2024accelerating}, while incurring substantially lower computational overhead (saving 65-75\% FLOPs). Additional visualizations demonstrate that this energy consistently highlights semantically important tokens even in deeper layers (Figure~\ref{fig:pca_layers}, Appendix).
\vspace{-0.05in}
\paragraph{Cell partitioning aligned to attention windows.}
To enable controlled and spatially coherent merging, we partition the token grid into non-overlapping spatial cells of size \( s \times s \) (Fig.~\ref{fig:fullpage}, \textit{Cell Division}).
For local-attention layers, we apply this partition \emph{within each attention window} \(\mathcal{P}_k\) to prevent cross-window collapse; for global-attention layers we treat the entire grid as a single window.
Let
\[
\mathcal{C}^{(\ell)} =
\{\mathcal{C}_1^{(\ell)}, \dots, \mathcal{C}_M^{(\ell)}\},
\quad
\bigcup_{m=1}^{M} \mathcal{C}_m^{(\ell)} = \mathbf{I}^{(\ell)},
\quad
\mathcal{C}_i^{(\ell)} \cap \mathcal{C}_j^{(\ell)} = \emptyset.
\]
\vspace{-0.1in}
\paragraph{Cell flatness and protected set.}
We define a cell-level flatness score as
$
\phi(\mathcal{C}_m^{(\ell)})
=
- \max_{(h,w) \in \mathcal{C}_m^{(\ell)}} \mathbf{G}^{(\ell)}(h,w)
$, 
where higher values indicate smoother regions with weaker structural variation. Intuitively, cells with low gradient responses are more likely to lie in homogeneous areas, while those with strong responses tend to contain boundaries or semantically important structures.

Given a target \emph{merge rate} \(r\in[0,1)\) (fraction of tokens removed at layer \(\ell\)), we sort cells by \(\phi\) and select the first \(M_{\mathrm{merge}}\) cells as \emph{mergeable}, where \(M_{\mathrm{merge}}\) is chosen so that the post-merge token count matches \((1-r)H_\ell W_\ell\).
Since each mergeable \(s\times s\) cell reduces \(s^2\) tokens to one token, each selected cell removes \(s^2-1\) tokens, and we choose \(M_{\mathrm{merge}}\) accordingly.
All remaining cells form the \emph{protected set} (Figure \ref{fig:TokenMergingIllustration}) whose tokens are kept at full resolution. This structure-aware selection enables targeted merging in redundant regions while preserving important boundaries

\paragraph{Destination and source token selection.}
Within each mergeable cell \( \mathcal{C}_m^{(\ell)} \), we designate a single
\emph{destination token} as the most stable representative in that region:
\[
x^{(\ell)}_{m,\mathrm{dst}}
=
\arg\min_{x_i^{(\ell)} \in \mathcal{C}_m^{(\ell)}}
\mathbf{G}^{(\ell)}(h(i), w(i)).
\]
All remaining tokens in the same cell form the \emph{source set}
$
\mathcal{S}_m^{(\ell)}
=
\mathcal{C}_m^{(\ell)} \setminus \{x^{(\ell)}_{m,\mathrm{dst}}\}.
$

\paragraph{Cell-wise token merging.}
For each mergeable cell, we merge its source tokens into the destination token by running bipartite soft matching and then averaging:
$
\widetilde{\mathbf{f}}^{(\ell)}_{x^{(\ell)}_{m,\mathrm{dst}}}
=
(1/|\widetilde{\mathcal{C}_m}^{(\ell)}|)
\sum_{x_i^{(\ell)} \in \widetilde{\mathcal{C}_m}^{(\ell)}}
\mathbf{f}_i^{(\ell)},
$ where $\widetilde{\mathcal{C}_m}^{(\ell)}$ be a set of matched tokens for each ${x^{(\ell)}_{m,\mathrm{dst}}}$.
The merged token set \(\widetilde{\mathcal{X}}^{(\ell)}\) is formed by (i) all tokens in protected cells, and (ii) the destination tokens for mergeable cells.

\paragraph{Token recovery (unmerging).}
After computing self-attention on the merged token set, we restore token representations to their original spatial resolution through a lightweight \emph{token recovery} operation.
For each mergeable cell \(m\), the updated destination feature is duplicated back to all tokens in
\(\mathcal{S}_m^{(\ell)} \cup \{x^{(\ell)}_{m,\mathrm{dst}}\}\).
Tokens in protected cells keep their own updated features.
This unmerging step preserves the original token layout required for dense, prompt-conditioned mask prediction, while incurring negligible computational overhead.
We provide a pseudocode in Algorithm 1 in the appendix.

\paragraph{Prompt-aware variant.}
While SAM’s image encoder is prompt-independent, prompts specify \emph{which} structures the decoder will emphasize.
When box prompts are available, we map the box to token coordinates and apply a lower merge rate inside the prompted region and a higher merge rate outside, preserving prompt-relevant detail while enabling additional speedups in background areas.
\vspace{-0.05in}
\section{Graph Coarsening View and Spectral Stability in StructSAM}\label{sec_theory_structsam}
\vspace{-0.1in}
We analyze token merging in \emph{StructSAM} through the lens of \emph{spectral graph theory}.
At a high level, tokens within each attention window define a weighted graph, and StructSAM’s
merge--unmerge procedure induces a \emph{graph coarsening} followed by a canonical \emph{lifting}
that restores the original resolution required for dense mask prediction.
Concretely, mergeable cells act as coarse nodes (one destination per cell), while protected regions
remain at full resolution; the score used to select mergeable cells and destinations is the
feature-gradient energy from \cref{fig:fullpage} (see Sec.~\ref{sec:method}).

At encoder layer \( \ell \), tokens in each attention window \( \mathcal{P}_k \)
(spatial windows for local attention or \( \mathcal{P}_1=\mathcal{X} \) for global
attention) form a graph \( \mathcal{G}_{\ell,k} \) with normalized Laplacian
\( \mathcal{L}_{\ell,k} \).
After merging and lifting, we obtain a lifted graph
\( \mathcal{G}_{\ell,k,l} \) on the original token set.
We quantify structural distortion at layer \( \ell \) via the \emph{spectral discrepancy}
\(\sd_{\ell}
\;\triangleq\;
\sum_{k=1}^{K_\ell}
\bigl\|
\bslambda_{\ell,k}
-
\bslambda_{\ell,k,l}
\bigr\|_1,\)
where \( \bslambda_{\ell,k} \) and \( \bslambda_{\ell,k,l} \) denote the eigenvalues of the
original and lifted Laplacians (sorted order).

Our assumptions are intentionally aligned with empirical properties of SAM features.
In Appendix~\ref{appendix_proofs}, \cref{ass_A1,ass_A2} (within-region concentration and margin separation) are supported by the
PCA visualisations of encoder features in \cref{fig:pca_layers}, which show coherent,
region-consistent structure across layers under aggressive merging, and sparse difference maps.
StructSAM’s key additional requirement, \cref{ass_A3_gradsep}, is a \emph{checkable} gradient-separation
condition: boundary (mixed) cells exhibit large \(\max_{x\in\mathcal{C}}\mathbf{G}^{(\ell)}(x)\) while
interior cells stay low-gradient, so flatness screening excludes boundary cells with probability
\(1-\delta_{\ell,k,s}\), where \(\delta_{\ell,k,s}\to 0\) as the grid is refined
(see Appendix~\ref{appendix_proofs} and \cref{fig:fullpage,fig:pca_layers}). See Appendix~\ref{sec:a3_practicality_pca} for the practicality of these assumptions.

\begin{theorem}[Informal: Layerwise spectrum stability of score-guided merging]\label{thm_structsam_spectrum_stability}
Fix an encoder layer \( \ell \) with windows \( \{\mathcal{P}_k\}_{k=1}^{K_\ell} \).
Let \( \sd_{\ell}(\mathrm{SG}) \) denote the spectral discrepancy induced by StructSAM’s
score-guided merging, and \( \sd_{\ell}(\mathrm{Base}) \) that of a non--score-guided
baseline (e.g., random or stride-based destination selection as ToMe-SD~\cite{bolya2023tomesd}).
Then, under \cref{ass_A1,ass_A2,ass_A3_gradsep} in Appendix \ref{sec_ass}:
\begin{enumerate}
\item \textbf{(StructSAM: vanishing drift)} Flatness screening protects boundary cells and forces merges to occur within coherent regions; hence \( \mathbb{E}[\sd_{\ell}(\mathrm{SG})] \to 0 \).
\item \textbf{(Baselines: irreducible drift)} Non--score-guided dst selection can hit boundary-crossing cells with non-negligible probability (Appendix~\ref{sec_proof_thm_structsam_spectrum_stability_formal}); under \cref{ass_A2}, this induces cross-region merges and implies \( \liminf\,\mathbb{E}[\sd_{\ell}(\mathrm{Base})] > 0 \).
\end{enumerate}
\end{theorem}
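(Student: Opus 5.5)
The plan is to reduce both parts to one per-window decomposition of the spectral discrepancy. After merge and lift, the result splits the window graph into a block-structured part coming from coherent cells and a perturbation coming from mixed (boundary-crossing) cells. I would model each window graph $\mathcal{G}_{\ell,k}$ with a similarity kernel on token features. Under \cref{ass_A1}, tokens in the same region have features concentrated within radius $\varepsilon$ of a region prototype. Under \cref{ass_A2}, prototypes of distinct regions are separated by a margin $\gamma>0$. The lifting of a cell $\mathcal{C}_m$ replaces every feature in the cell by the cell average, so the lifted weight matrix $\bfW_{l}$ differs from $\bfW$ only in rows and columns indexed by mergeable cells. The first step is a Lipschitz bound on the kernel: for a cell lying entirely inside one region, every entry change is $O(\varepsilon)$; for a mixed cell, the entries change by an amount bounded below by a function $c(\gamma)>0$.

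For \textbf{part 1}, I would bound the spectral discrepancy by a Hoffman--Wielandt or Weyl-type inequality, $\|\bslambda_{\ell,k}-\bslambda_{\ell,k,l}\|_1 \le \sqrt{n_k}\,\|\cL_{\ell,k}-\cL_{\ell,k,l}\|_F$. I would then control the normalized Laplacian difference through the weight and degree perturbations. This needs the degrees to be bounded below, which I would obtain from the kernel being bounded away from $0$ within regions. Splitting on the event $E$ that flatness screening admits a mixed cell, \cref{ass_A3_gradsep} gives $\Pr(E)\le\delta_{\ell,k,s}$. On $E^c$, every merged cell is coherent, so the discrepancy is $O(n_k\,\varepsilon)$, which tends to $0$ as the grid is refined and within-cell concentration tightens. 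On $E$, the discrepancy is trivially bounded by $2n_k$, since normalized Laplacian eigenvalues lie in $[0,2]$. Summing over windows then gives $\mathbb{E}[\sd_\ell(\mathrm{SG})]\le \sum_k \bigl(O(n_k\varepsilon)+2n_k\delta_{\ell,k,s}\bigr)\to 0$. The destination choice by minimal $\mathbf{G}^{(\ell)}$ serves only to make the representative an interior point, which keeps the coherent-cell constant uniform.

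For \textbf{part 2}, I need a lower bound, and I expect this to be the main obstacle, because eigenvalue sums can in principle be insensitive to a perturbation. My plan is to first show that a random or stride destination rule selects, with probability at least $p>0$ independent of refinement, a cell that straddles a boundary; this is the boundary-density statement deferred to Appendix~\ref{sec_proof_thm_structsam_spectrum_stability_formal}. Lifting such a cell forces tokens from two regions to share one feature. This creates a near-duplicate cross-region connection of weight at least $c(\gamma)$ where the original graph had weight $O(e^{-\gamma})$. I would then lower-bound the eigenvalue change through a trace identity rather than an eigenvector argument: since $\|\bslambda-\bslambda_l\|_1 \ge |\operatorname{tr}(\cL^2)-\operatorname{tr}(\cL_l^2)|/4$ (using eigenvalues in $[0,2]$), I would show that $\sum_{ij}(\bfD^{-1/2}\bfW\bfD^{-1/2})_{ij}^2$ increases by at least a constant $c'(\gamma)$ per hit. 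Failing that, I would argue that the second-smallest eigenvalue (the Cheeger gap between regions) strictly increases by a margin-dependent amount. Taking the expectation yields $\mathbb{E}[\sd_\ell(\mathrm{Base})]\ge p\,c'(\gamma)$, and hence $\liminf \mathbb{E}[\sd_\ell(\mathrm{Base})]>0$.
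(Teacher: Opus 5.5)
\textbf{Part 1.} Your route is different from the paper's, and in one respect more careful. The paper bounds a per-step row discrepancy $\Delta_{\ell,k,s}$ (\cref{prop_row_drift}), pushes it through \cref{prop_L_perturb,prop_one_step_spec}, and telescopes over merge steps. You compare $\cL_{\ell,k}$ with $\cL_{\ell,k,l}$ in one shot via Hoffman--Wielandt and use the trivial bound $2n_k$ on the screening-failure event. That is the right treatment of the bad event; the paper's Step~2 instead uses the \emph{lower} bound $c_0(1-m)$ of \cref{prop_row_drift} as if it were an upper bound.

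There are two caveats. First, your bound $\sum_k\bigl(O(n_k\varepsilon)+2n_k\delta_{\ell,k,s}\bigr)$ vanishes only at fixed window size (true in SAM, and consistent with the paper's fixed $d_{\min},d_{\max}$), or with rates $\varepsilon,\delta=o(1/n_k)$ that \cref{ass_A1,ass_A3_gradsep} do not supply. So drop ``as the grid is refined.'' Second, the paper's $\cG_{\ell,k,l}$ is the block-averaged adjacency of \cref{def_Gl}, not the kernel graph of cell-averaged features. Your coherent-cell $O(\varepsilon)$ estimate transfers to that lifting, but Part~2 must be computed for it.

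\textbf{Part 2.} The genuine gap is the per-hit lower bound, and the paper's proof has the same hole. It combines $\mathbb{P}(\mathcal{E}^c_{\ell,k,s})\ge p_0q_0$ (\cref{prop_tomesd_counterexample}, effectively a hypothesis) with $\Delta_{\ell,k,s}\ge c_0(1-m)$. But its only link from $\Delta$ to $\sd_\ell$ is the upper bound $\sd_\ell\le c_{\mathrm{sp}}\sum\Delta$, so no lower bound on $\sd_\ell$ follows. You correctly see that a spectral invariant is needed, and $\|\bslambda-\bslambda_l\|_1\ge|\operatorname{tr}\cL^2-\operatorname{tr}\cL_l^2|/4$ is valid. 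What is missing is a proof that this trace moves by a constant per hit, and the claim as you state it is not right, for three reasons.
\begin{enumerate}
\item With $A=\bfD^{-1/2}\bfW\bfD^{-1/2}$ one has $\operatorname{tr}\cL^2=n_k-2\operatorname{tr}A+\|A\|_F^2$. The self-loop term $\operatorname{tr}A=\sum_i\bfW_{ii}/d_i$ also moves, so you must exclude cancellation.
\item ``Increases'' is doubtful. Under \cref{def_Gl}, $\|\bfW_l\|_F\le\|\bfW\|_F$ exactly, since block-averaging removes within-block variance. The mixed cell loses strong within-region weights as well as gaining cross-region ones, so you need a two-sided estimate.
\item In your dense-kernel model (degrees $\asymp$ region size), the normalized adjacency is near block-constant. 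Lifting one mixed $s\times s$ cell perturbs it only through a token fraction $s^2/n_k$, so the spectral change per hit is $O(s^2/n_k)$. The baseline's total discrepancy is then $O(\#\{\text{lifted mixed tokens}\}/n_k)$, which tends to zero under refinement. No refinement-independent $c'(\gamma)$ exists, and Part~2 can hold only at fixed window size, with a constant depending on $(n_k,s,\gamma)$.
\end{enumerate}
To close the gap, take the exact two-prototype configuration with one lifted mixed cell, compute the change in $\operatorname{tr}\cL^2$ (or in $\lambda_2$, for your Cheeger fallback), and show it is nonzero. The matrices there are low-rank block matrices, so this is a finite-dimensional calculation. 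Finiteness of configurations at fixed $n_k$, compactness of the prototype parameters under the margin, and continuity in $\varepsilon$ then give a uniform positive constant.
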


\cref{thm_structsam_spectrum_stability}, proved in Appendix~\ref{appendix_proofs}, explains why StructSAM
preserves segmentation quality at high merge rates, while heuristic destination selection can degrade
dense prediction performance.
\vspace{-0.1in}
\section{Experiments}

\vspace{-0.1in}
\textbf{Q1. Boundary and Thin Structure Preservation.}
We evaluate StructSAM’s ability to preserve boundary precision and fine structural details under token reduction. To this end, we conduct experiments on four boundary-sensitive datasets: (i) \textsc{DIS5K}~\cite{qin2022highly}, which provides high-resolution images with pixel-accurate boundary annotations; (ii) \textsc{ThinObject-5K} and \textsc{COIFT}~\cite{liew2021deep}, which focus on globally thin structures (e.g., wires, poles) and objects with fine interior details; (iii) \textsc{HRSOD}~\cite{zeng2019towards}, which emphasizes precise boundary delineation in complex high-resolution scenes.
We validate StructSAM across two SAM variants, ViT-B and ViT-L, to verify robustness across architectures.

\textbf{Token Merging Comparison.}  We compare against five representative token merging baselines. \textsc{ToMe}~\cite{tome} uses bipartite soft matching to merge similar tokens, while ToMeSD~\cite{bolya2023tomesd} extends this with timestep-aware merging for diffusion models in image generation. PiToMe~\cite{tran2024accelerating} improves selection via pivot-based strategies to preserve important tokens, and VidToMe~\cite{li2024vidtome} leverages temporal redundancy in video transformers. ALGM~\cite{norouzi2024algm} introduces adaptive local-to-global merging to the ViT architecture for dense segmentation. 

\textbf{Observations.} We evaluate performance using mIoU and boundary mIoU to measure segmentation quality, along with GFLOPs, memory, and inference speed (imgs/s) to assess efficiency. Table \ref{tab:dis5k_thinobject5k_coift_hrsod} and Figure~\ref{fig:sam-all_models_miou_flops_avg_combined} indicate that StructSAM achieves a favorable efficiency–accuracy trade-off compared to the baseline, reducing FLOPs by $\sim$28.5\% (ViT-B) and $\sim$21.9\% (ViT-L) while lowering memory usage, with largely comparable performance across most datasets. It maintains strong accuracy on boundary-sensitive benchmarks, with only minor degradation in some cases (e.g., COIFT), indicating effective preservation of structural details under token reduction.

Compared with existing token merging methods, StructSAM exhibits more stable performance across varying merge rates (35\%–65\%), often achieving higher or comparable accuracy, particularly on datasets requiring precise boundary delineation. Notably, even at high merge rates, performance degradation remains limited and can match or exceed the baseline on some datasets (e.g., HRSOD). While StructSAM \textit{does not always achieve the highest throughput}, it provides a more balanced trade-off by prioritizing segmentation quality and structural fidelity (Fig~\ref{fig:side_by_side-right}).

\begin{table}[H]
\footnotesize
\renewcommand{\arraystretch}{0.9}
\setlength{\tabcolsep}{3pt}
\centering
\vspace{-0.2in}
\caption{Performance of merging methods on DIS5K, ThinObject5K-TE, COIFT, and HRSOD datasets at the merging rate $r=$ 55\%. Note that \textsc{ToMe} and \textsc{PiToMe} can not work with $r > 50\%$ .}
\vspace{0.05in}
\label{tab:dis5k_thinobject5k_coift_hrsod}
\resizebox{0.95\textwidth}{!}{
\begin{tabular}{llcc
                cccc
                cccc
                cccc
                cccc}
\toprule
& & \multirow{2}{*}{\textbf{GFlops}} & \multirow{2}{*}{\textbf{Mem (GB)}}
& \multicolumn{2}{c}{\textbf{DIS5K}}
& \multicolumn{2}{c}{\textbf{ThinObject5K-TE}}
& \multicolumn{2}{c}{\textbf{COIFT}}
& \multicolumn{2}{c}{\textbf{HRSOD}} \\

\textbf{Model} & \textbf{Method} &  &
& mIoU & b-mIoU
& mIoU & b-mIoU
& mIoU & b-mIoU
& mIoU & b-mIoU \\
\midrule

\multirow{5}{*}{ViT-B}
& Baseline
& 486.4 & 3.53
& 55.30 & 46.97 
& 63.28 & 52.65 
& 89.14 & 82.54 
& 86.64 & 76.56 \\

& TomeSD
& 362.3 $_{\textcolor{gray}{\downarrow 25.5\%}}$ & \textbf{2.68}
& 51.39 & 43.16 
& 58.09 & 48.24 
& 87.29 & 80.39 
& 85.34 & 75.66 \\

& ViDTome
& 399.8 $_{\textcolor{gray}{\downarrow 17.8\%}}$ & 3.43
& 43.80 & 35.77 
& 49.74 & 37.21 
& 80.52 & 69.55 
& 75.89 & 62.20 \\

& ALGM
& 381.1 $_{\textcolor{gray}{\downarrow 21.6\%}}$ & 2.93
& 51.26 & 42.30 
& 50.24 & 40.97 
& 49.10 & 39.48 
& 47.33 & 37.04 \\

& \cellcolor{gray!20}StructSAM
& \cellcolor{gray!20}\textbf{347.8}$_{\textcolor{blue}{\downarrow 28.5\%}}$ & \cellcolor{gray!20}\textbf{2.68}
& \cellcolor{gray!20}\textbf{54.61} & \cellcolor{gray!20}\textbf{45.57} 
& \cellcolor{gray!20}\textbf{63.30} & \cellcolor{gray!20}\textbf{52.06} 
& \cellcolor{gray!20}\textbf{87.74} & \cellcolor{gray!20}\textbf{80.85} 
& \cellcolor{gray!20}\textbf{86.67} & \cellcolor{gray!20}\textbf{76.84} \\
\midrule

\multirow{5}{*}{ViT-L}
& Base Model
& 1493.8 & 5.97
& 62.27 & 53.94 
& 75.50 & 64.71 
& 92.65 & 87.40 
& 89.67 & 82.65 \\

& TomeSD
& 1188.2 $_{\textcolor{gray}{\downarrow 20.4\%}}$ & \textbf{4.84}
& 60.32 & 50.81 
& 73.68 & 61.98 
& 90.51 & \textbf{84.67} 
& \textbf{88.58} & \textbf{80.99} \\

& ViDTome
& 1274.8 $_{\textcolor{gray}{\downarrow 14.7\%}}$ & 5.83
& 39.44 & 30.36 
& 47.51 & 31.05 
& 61.73 & 46.16 
& 60.79 & 46.56 \\

& ALGM
& 1249.6 $_{\textcolor{gray}{\downarrow 16.3\%}}$ & 5.17
& 56.93 & 44.44 
& 54.42 & 40.88 
& 52.82 & 38.17 
& 49.93 & 33.64 \\

& \cellcolor{gray!20}StructSAM
& \cellcolor{gray!20}\textbf{1167.1}$_{\textcolor{blue}{\downarrow 21.9\%}}$ & \cellcolor{gray!20}\textbf{4.84}
& \cellcolor{gray!20}\textbf{61.01} & \cellcolor{gray!20}\textbf{51.36} 
& \cellcolor{gray!20}\textbf{75.80} & \cellcolor{gray!20}\textbf{63.81} 
& \cellcolor{gray!20}\textbf{90.73} & \cellcolor{gray!20}84.26 
& \cellcolor{gray!20}88.39 & \cellcolor{gray!20}80.46 \\
\bottomrule
\end{tabular}
}
\end{table}


\begin{figure}[htbp]
\centering
\vspace{-0.22in}
\begin{subfigure}{0.57\linewidth}
    \centering
    \vspace{-0.1in}
    \includegraphics[width=0.97\linewidth]{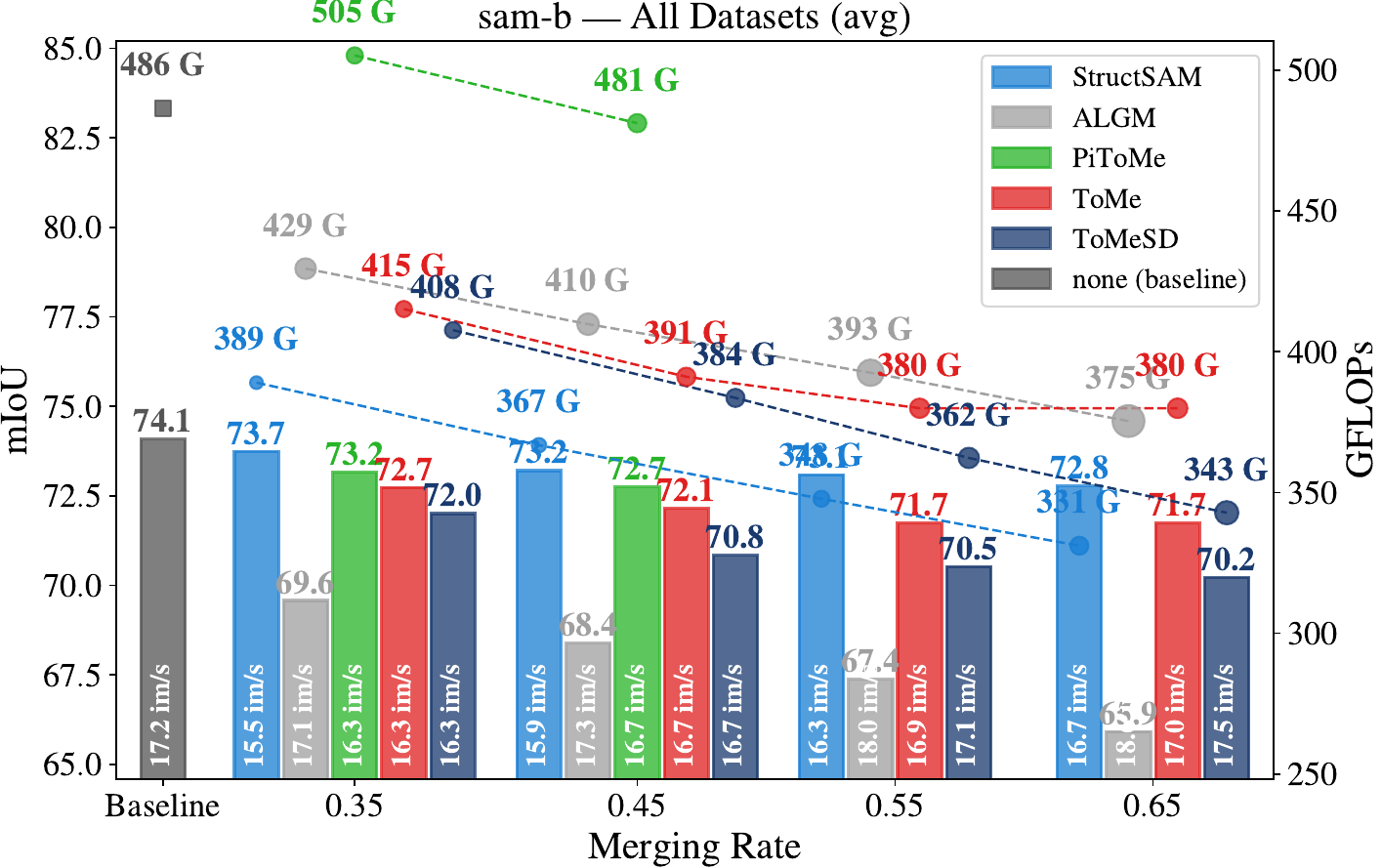}
    \vspace{-0.05in}
    \caption{}
    \label{fig:sam-all_models_miou_flops_avg_combined-left}
\end{subfigure}
\hfill
\begin{subfigure}{0.40\linewidth}
    \centering
    \includegraphics[width=\linewidth]{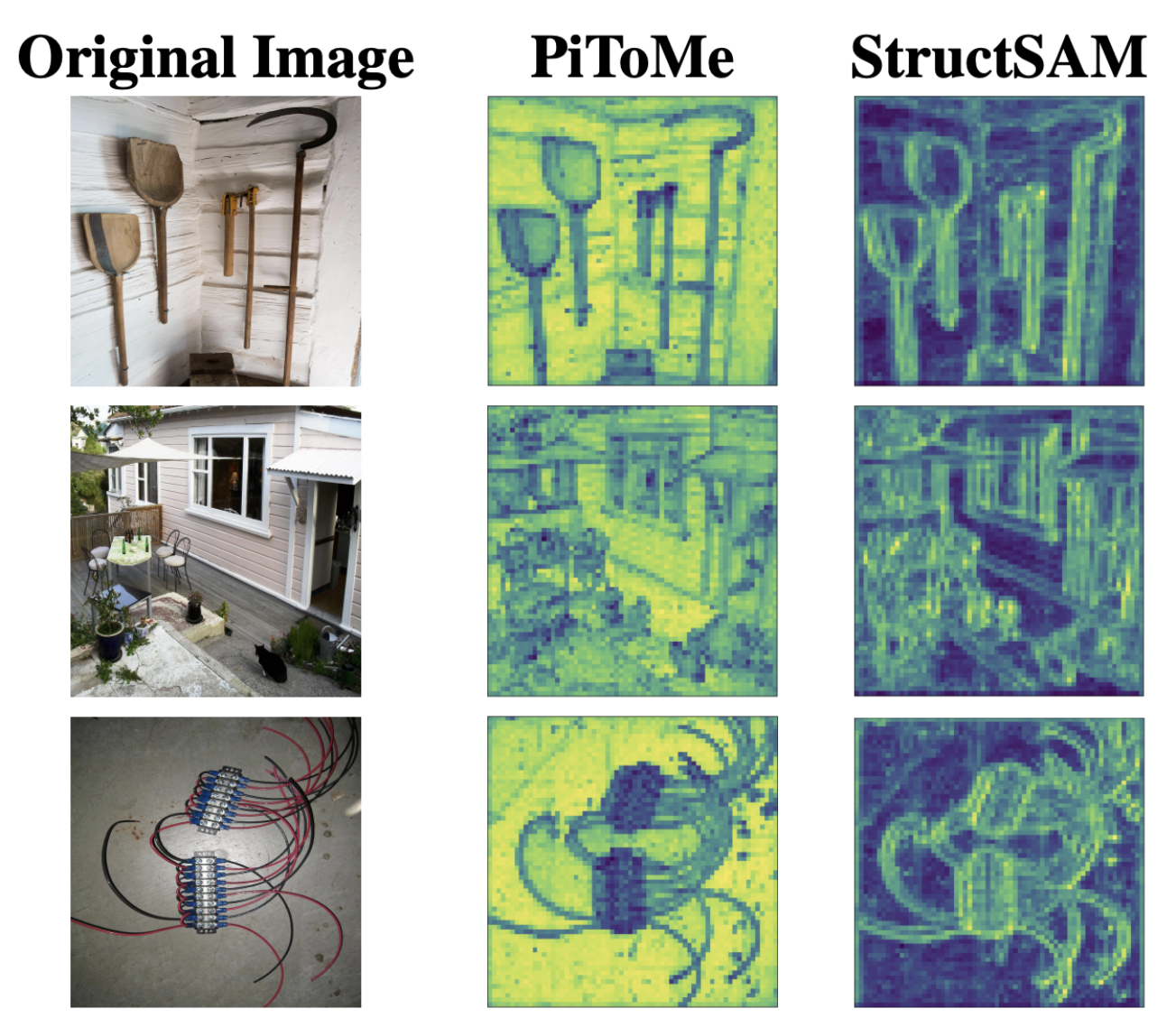}
    \vspace{-0.5cm}
    \caption{}
    \label{fig:sam-all_models_miou_flops_avg_combined-right}
\end{subfigure}
\vspace{-0.12in}
\caption{(a): Comparison across merging methods, showing mIoU, GFLOPs, and throughput (img/s) at \textbf{different merge rates} on SAM-B. Results for other architectures are in Appendix; (b): Qualitative comparisons of original images, energy scores computed as PiToMe~\cite{tran2024accelerating} and our.}
\vspace{-0.1in}
\label{fig:sam-all_models_miou_flops_avg_combined}
\end{figure}
\textbf{Q2. Generalization to SAM Variants.}
We further evaluate the generalization of StructSAM across different SAM-family models, including (i) \textsc{Medical SAM}~\cite{ma2024segment} and (ii) \textsc{Efficient-SAM}~\cite{xiong2024efficientsam}. For \textbf{Medical SAM}, we assess performance on the \textsc{INbreast}~\cite{moreira2012inbreast}, a full-field digital mammography repository containing 410 high-resolution DICOM mammograms with polygonal segmentation annotations under both \textit{prompt-based and prompt-free settings} to examine robustness in medical imaging scenarios requiring high precision. 

For \textbf{Efficient-SAM}, we integrate StructSAM into its lightweight architecture and extend it from static segmentation to \textit{video tracking} (Figure~\ref{fig:tracking}), enabling consistent object segmentation across frames. This setting is crucial for robotic manipulation, where both efficiency and temporal stability are required. We then incorporate tracking outputs into the vision-language-action model (VLA) Open-VLA~\cite{kim2024openvla} and evaluate it on real-robot object arrangement tasks. Results are reported as task success rates. Our goal is to demonstrate that, in some applications, such as relatively structured or less cluttered scenes, a StructSAM-enhanced Efficient-SAM tracker can serve as a more efficient alternative to heavier models like SAM-2~\cite{ravi2024sam}, while maintaining comparable performance. Details about the extension and experimental setups are presented in the Appendix.

\textbf{Observations.} On MedSAM, Fig.~\ref{fig:side_by_side-left} shows that the StructSAM lowers GFLOPs by up to 28.5\% (and 41.8\% with prompt-aware merging) with small marginal changes in Dice score, and maintains robust performance as the merge rate increases, consistently outperforming other token merging methods. When integrated into Efficient-SAM for tracking within VLA pipelines, it achieves comparable task success rates to SAM-2 while providing a ~45\% speedup, demonstrating effective trade-offs between efficiency and performance in both medical imaging and robotic applications.

\begin{figure}[!hbt]
    \centering
    \begin{subfigure}{0.48\linewidth}
        \centering
        \begin{minipage}{\linewidth}
            \centering
            \footnotesize
            \label{tab:medsam}
            \resizebox{1.0\textwidth}{!}{%
            \begin{tabular}{lcc}
            \toprule
            \textbf{Method} & \textbf{GFLOPs} & \textbf{Dice Score} \\
            \midrule
            \textbf{Base Model} & 486.4 & 75.43 \\
            \hline
            {TomeSD}   & 362.3$_{\textcolor{gray}{\downarrow 25.5\%}}$ & 73.33 \\
            {ViDTome}  & 399.8$_{\textcolor{gray}{\downarrow 17.8\%}}$ & 73.32 \\
            {ALGM}     & 381.1$_{\textcolor{gray}{\downarrow 21.6\%}}$ & 69.83 \\
            \cellcolor{gray!20}StructSAM & \cellcolor{gray!20}347.8$_{\textcolor{blue}{\downarrow 28.5\%}}$ & \cellcolor{gray!20}\textbf{74.81}\\
            \cellcolor{gray!20}StructSAM + \textbf{prompt-aware} & \cellcolor{gray!20}\textbf{283.0}$_{\textcolor{blue}{\downarrow 41.8\%}}$ & \cellcolor{gray!20}74.72\\
            \bottomrule
            \end{tabular}
            }
        \end{minipage}
        \vspace{0.5em} 

        \begin{minipage}{\linewidth}
            \centering
            \includegraphics[width=\linewidth]{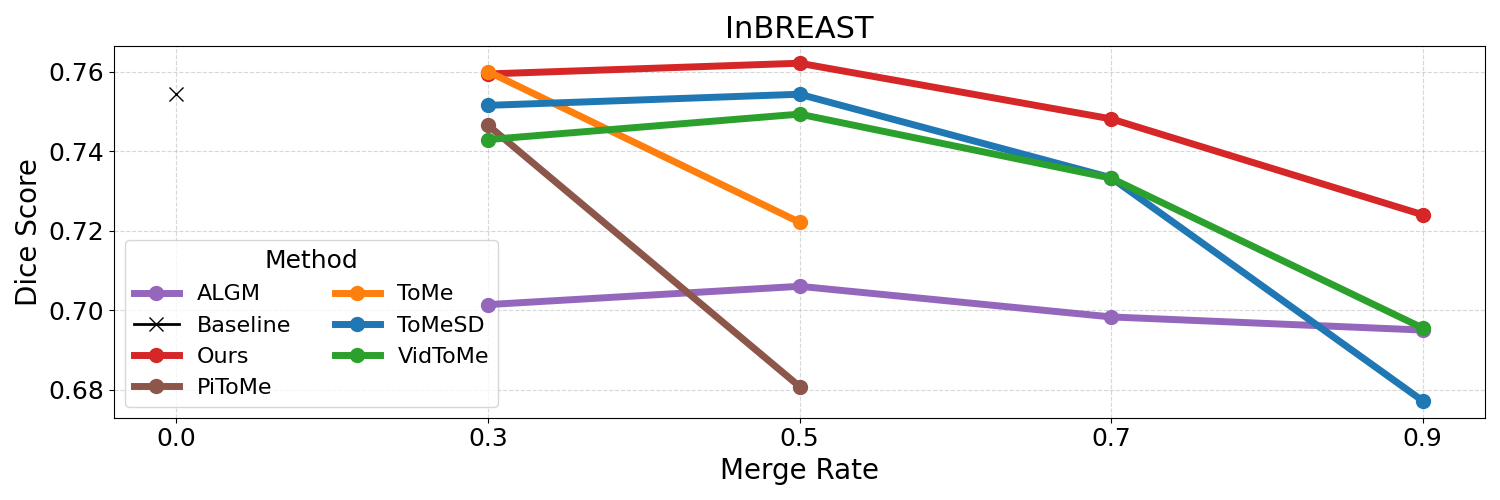}
        \end{minipage}
        \vspace{-0.05in}
        \caption{}
        \label{fig:side_by_side-left}
    \end{subfigure}
    \hfill
    \begin{subfigure}{0.48\linewidth}
        \centering
        \includegraphics[width=\linewidth]{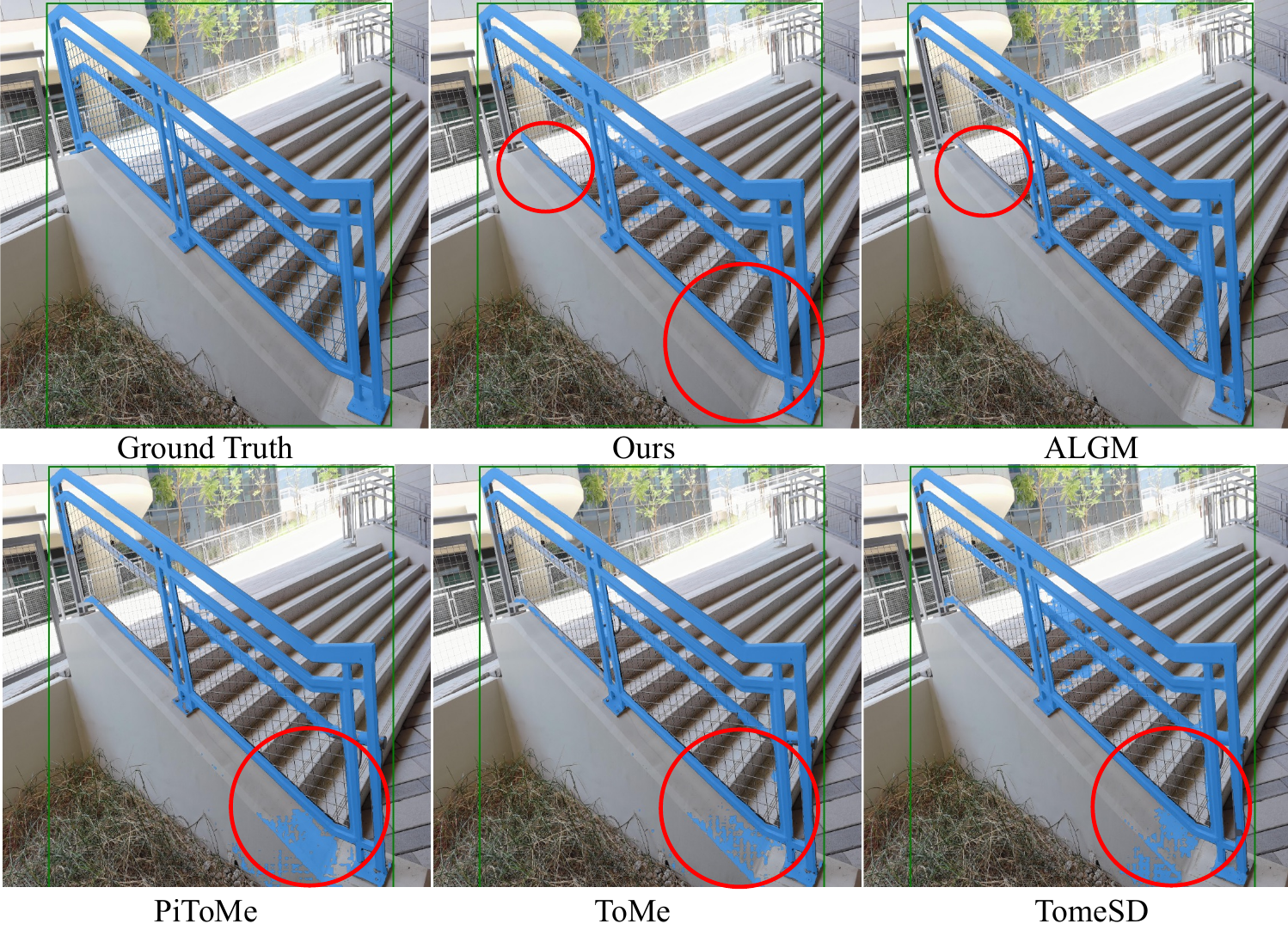}
        \vspace{-0.05in}
        \caption{}
        \label{fig:side_by_side-right}
    \end{subfigure}
    \vspace{-0.1in}
    \caption{(a) \textit{Top}: results on INbreast with \textbf{MedSAM}, including a prompt-aware StructSAM variant that restricts token processing for targeted efficiency. \textit{Bottom}: performance across varying merge rates.
(b) Qualitative comparison showing that StructSAM better preserves fine structures and detailed regions, while other methods often miss boundaries or over-merge objects into the background.}
    \label{fig:side_by_side}
\end{figure}

\begin{figure}[!htb]
    \begin{minipage}{0.58\textwidth}
    \centering
    \vspace{-0.25in}
    \includegraphics[width=1.0\linewidth]{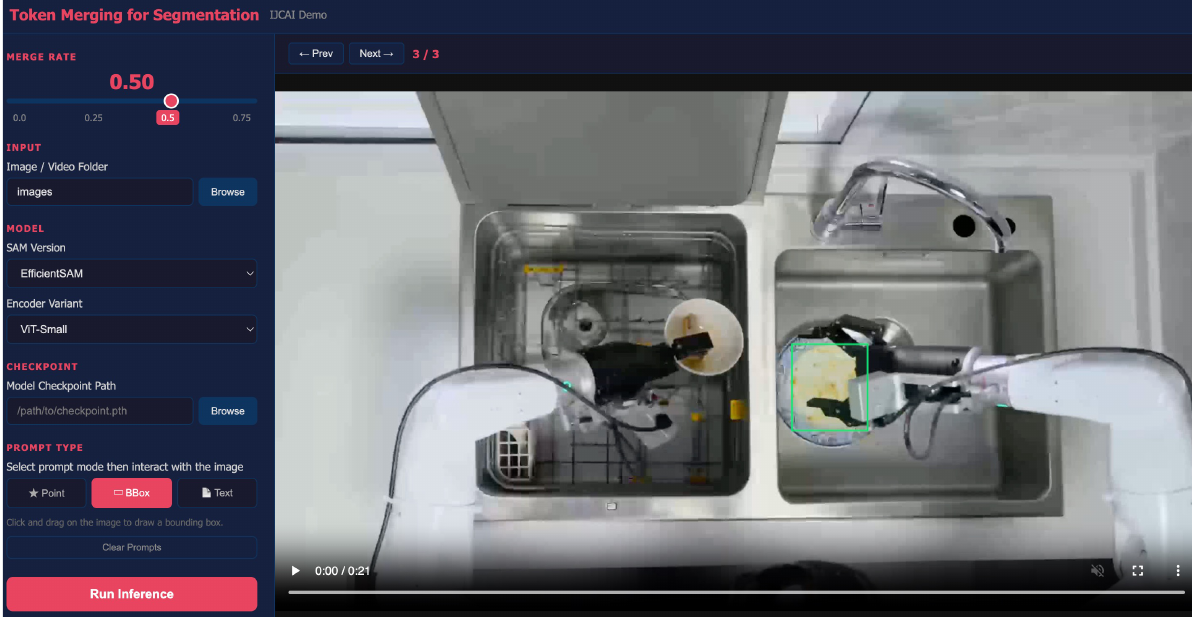}
    \captionof{figure}{Interface of \textbf{Efficient-SAM} integrated with StructSAM for tracking tasks. The left panel allows selection of SAM variants and prompt types, while the right panel displays predicted masks and bounding boxes.}
    \label{fig:tracking}
\end{minipage}
\hfill
\begin{minipage}{0.4\textwidth}
\centering
\vspace{-0.1in}
\includegraphics[width=1.0\linewidth]{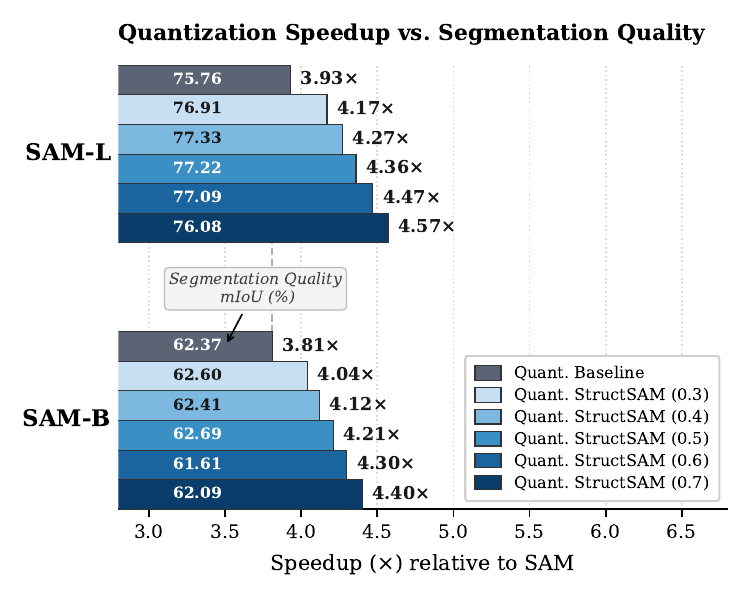}
\captionof{figure}{\small{Speedup comparison of quantized baseline and StructSAM (relative to the unquantized baseline) on ThinObject5K across merge rates (30–70\%). Numbers indicate mIoU.}}
\label{fig:quantization}

\end{minipage}
\end{figure}
\begin{figure}[!htb]
    \centering
    \vspace{-0.3in}
    \begin{subfigure}{0.58\linewidth}
        \centering
        \includegraphics[width=1.0\linewidth]{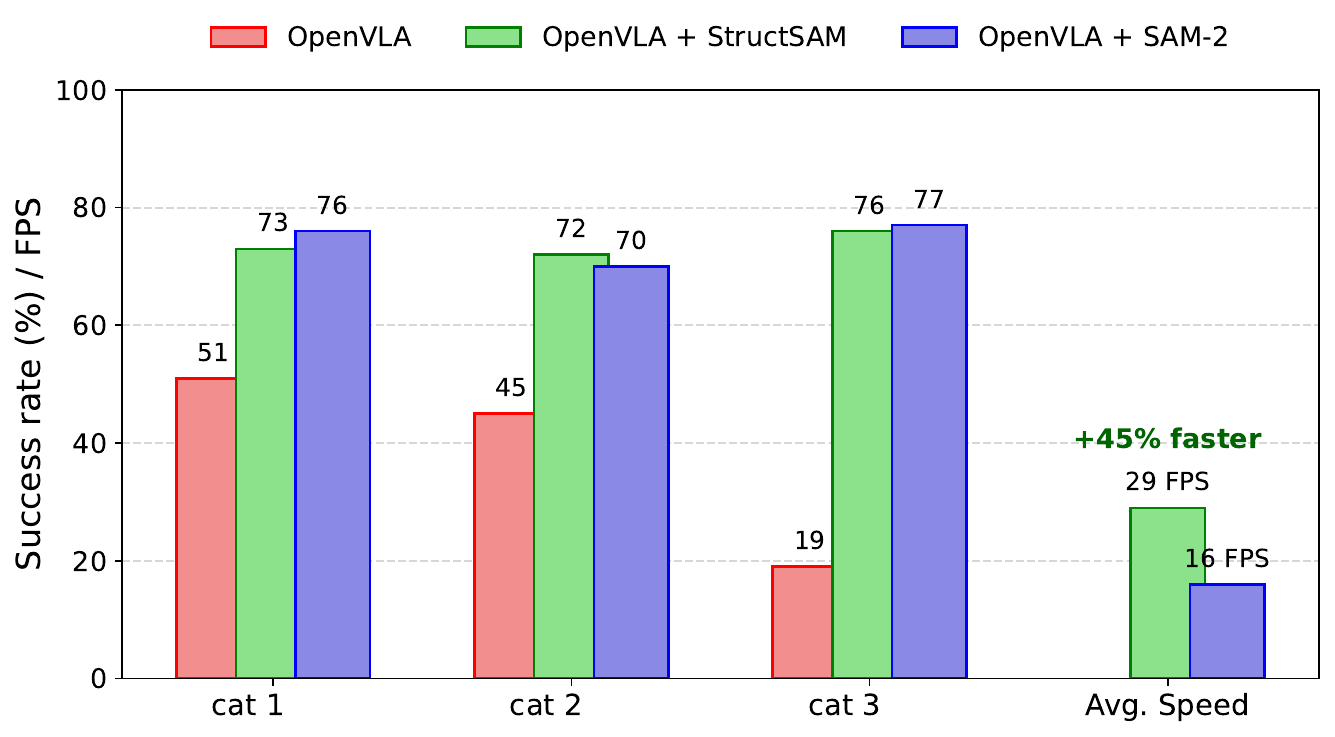}
        \caption{}
    \end{subfigure}
    \hfill
    \begin{subfigure}{0.38\linewidth}
        \centering
        \includegraphics[width=0.8\linewidth]{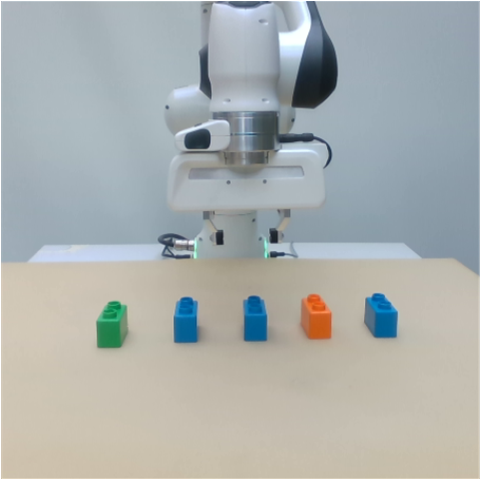}
        \caption{}
    \end{subfigure}
    \vspace{-0.08in}
    \caption{(a): \small{Comparison of success rates across objects and average tracking speed. StructSAM outperforms OpenVLA and remains competitive with OpenVLA + SAM-2, achieving a 45\% speedup. Success rates are reported over 30 trials. (b): Real-robot pick-and-place setup evaluated under 3 categories of difficulty, where the robot lifts the correct object based on a human prompt, and places it in the designated location.}}
    \vspace{-0.2in}
    \label{fig:two_figures}
\end{figure}

\vspace{-0.1in}
\textbf{Q3. Compatibility with Quantization. }
We investigate whether StructSAM is complementary to model quantization and can be applied on top of quantized SAM variants. Among existing quantization approaches for SAM~\cite{liu2024pq,lv2024ptq4sam,xiao2023smoothquant}, we adopt \emph{SmoothQuant}~\cite{xiao2023smoothquant} due to its strong compatibility with GPU kernels and efficient post-training deployment. 
Our results in Figure \ref{fig:quantization} show that StructSAM can be seamlessly integrated with a quantized model, providing additional gains in inference speed and memory efficiency while maintaining accuracy, demonstrating its complementary and orthogonal nature to quantization methods.

\textbf{Q4. Analysis Other Choices.}
Table~\ref{tab:ablation_results_sam_b} reports an ablation study on SAM-B evaluated on \textsc{DIS5K}, using both \textit{mIoU} and \textit{boundary IoU (B-IoU)} under two mask ratios. 
Across all settings, our \textbf{full method} achieves the best or tied-best performance, demonstrating the effectiveness of the proposed design. The ablation for \textbf{COIFT} is presented in the Appendix.


\textbf{- Effect of gradient estimation.}
Replacing the Sobel operator with a simple central difference (\textit{Central-Diff}) in the token energy score degrades performance, especially in B-IoU (Table~\ref{tab:ablation_results_sam_b}), highlighting the importance of accurate gradient estimation for preserving boundaries during merging. As shown in Figure~\ref{fig:sam-all_models_miou_flops_avg_combined-right}, our energy score effectively captures fine structures while remaining computationally efficient (Table~\ref{tab:cost_energy}). Additional discussion of failure cases is included in the Appendix.

\textbf{- Other results}. We provide in the Appendix evaluations with (i) point-based prompts instead of box prompts. We also (ii) visualize the heatmap difference between token compression across layers, comparing merged and original tokens to better understand the model’s behavior.

\begin{figure}[!htb]
\centering

\begin{minipage}{0.48\textwidth}
\centering
\vspace{-0.1in}
\captionof{table}{\small{Ablation study on SAM-B.
StructSAM (Full) denotes the complete model.}}
\vspace{-0.1in}
\label{tab:ablation_results_sam_b}

\setlength{\tabcolsep}{4pt}
\resizebox{\textwidth}{!}{%
\begin{tabular}{c|l|cc|cc}
\toprule
\multirow{2}{*}{Dataset} & \multirow{2}{*}{Method}
& \multicolumn{2}{c|}{$r=0.35$} & \multicolumn{2}{c}{$r=0.55$} \\
& & mIoU & B-IoU & mIoU & B-IoU \\
\midrule
\multirow{8}{*}{DIS5K} & StructSAM (Full) & \textbf{54.7} & \textbf{46.0 }& \textbf{54.6} & \textbf{45.6} \\
 & Central-Diff & 53.8 & 45.6 & 53.3 & 44.6 \\
 & Mean-Flatness & 54.5 & 45.8 & 54.4 & 45.4 \\
 & No-Cell & 53.9 & 44.5 & 54.2 & 43.9 \\
 & Rand-Cell & 53.8 & 43.9 & 53.5 & 43.2 \\
 & Max-Dst & 54.5 & 45.9 & 53.9 & 45.2 \\
 & Rand-Dst & \textbf{54.7} & \textbf{46.0} & 54.3 & 45.4 \\
\bottomrule
\end{tabular}}
\vspace{-0.1in}
\end{minipage}
\hfill
\begin{minipage}{0.48\textwidth}
\centering
\vspace{-0.05in}
\captionof{table}{\small{FLOP counts analysis between StructSAM energy scoring and graph-based methods such as PiToMe~\cite{tran2024accelerating}.}}
\label{tab:cost_energy}

\resizebox{\textwidth}{!}{%
\begin{tabular}{lll}
\hline
\textbf{Attention}   & \textbf{Method} &  \makecell{\textbf{($10^9$)} \\ \textbf{FLOPs/im}↓} \\
\hline
\multirow{3}{*}{Global} 
  & PiToMe & 1.0737 \\
  & \textbf{StructSAM}\,(Central Diff) & \makecell{0.2684 \decp{75.00}}\\
  & \textbf{StructSAM}\,(Sobel) & \makecell{0.2732 \decp{74.56}}\\
\hline

\multirow{3}{*}{Window} 
  & PiToMe & 0.0615 \\
  & \textbf{StructSAM}\,(Central Diff) & \makecell{0.0154 \decp{74.96}}\\
  & \textbf{StructSAM}\,(Sobel) & \makecell{0.0210 \decp{65.85}}\\
\hline
\end{tabular}}
\vspace{-0.1in}
\end{minipage}

\end{figure}
\vspace{-0.1in}
\section{Discussion and Limitations}
\vspace{-0.1in}
We show that token merging methods designed for classification ViTs often struggle to preserve structure when applied to SAM in dense, prompt-conditioned settings. To address this, we introduce a lightweight gradient-based energy that identifies boundary-critical tokens and enables efficient, structure-aware merging. This design generalizes well across applications, from natural and medical image segmentation to tracking in robotic manipulation tasks.
Our method relies on gradient-based cues derived from model features, making its effectiveness dependent on representation quality. When features are weak or noisy, e.g., in low-texture or highly cluttered scenes, the resulting gradients may be less informative, reducing the gains from structure-aware merging. In such cases, jointly retraining the backbone with token merging could improve robustness and better align features with the merging strategy. Future work can include extending StructSAM to 3D models (e.g., SAM-3~\cite{chen2025sam}) and optimizing ranking/sorting to improve GPU parallelism~\cite{liu2026gpu}.

\clearpage

\bibliographystyle{plain}
\bibliography{example_paper}



\newpage
\appendix
\begin{center}
\textsc{\Large Supplementary Material for \\ \vspace{.2em} ``StructSAM: Structure- and Spectrum-Preserving Token Merging for Segment Anything Models''}
\end{center}
\addcontentsline{toc}{section}{Supplementary Material}
{
  \hypersetup{linkcolor=black}
  \tableofcontents
  \addtocontents{toc}{\protect\setcounter{tocdepth}{2}}
}
\vspace{-0.1in}
\captionsetup{font=normalsize}
\newpage

\section{Additional Discussion on StructSAM Limitations}
Despite these advantages, StructSAM has several limitations.
First, our current formulation relies on fixed, hand-crafted gradient operators (e.g., Sobel or central differences), which may be suboptimal for highly textured regions or domain-specific imagery.
Second, the use of predefined cell partitions introduces an additional design choice that may require tuning for different input resolutions or architectures.
Finally, while our method is evaluated primarily in the context of SAM-based segmentation, its effectiveness for other vision tasks or transformer architectures remains to be fully explored.

These limitations suggest several promising directions for future work.
An interesting extension would be to learn adaptive or task-specific gradient operators that retain the efficiency of first-order information while improving robustness.
In addition, dynamically adjusting cell structures or sampling strategies based on content or model depth could further enhance flexibility.
More broadly, we believe that StructSAM opens up new opportunities for rethinking token merging as a \emph{local, structure-driven} problem rather than a global graph optimization task, and we hope this perspective will inspire more efficient designs for scalable vision transformers.

\section{Implementation Details}

\subsection{Pseudo Code}

We provide pseudocode for StructSAM's token partitioning procedure in Algorithm~\ref{alg:algorithm_StructSAM}.

\begin{algorithm}[H]
\caption{StructSAM token partitioning and merge map (cell-wise)}
\label{alg:algorithm_StructSAM}
\begin{algorithmic}[1]
\Require Token tensor $T \in \mathbb{R}^{H \times W \times C}$; cell size $s_x, s_y$; merge rate $r\in[0,1)$
\Ensure Source tokens $\mathbf{A}$ (to be merged); kept tokens $\mathbf{B}$; assignment map $\pi$ for unmerging

\State Initialize empty sets $\mathbf{A}$, $\mathbf{B}$ and empty map $\pi$
\State Partition $T$ into non-overlapping cells $\{ C_i \in \mathbb{R}^{s_x \times s_y \times C} \}$

\For{each cell $C_i$}
    \State Compute gradient magnitudes $G(t)$ for tokens $t \in C_i$
    \State Compute cell flatness $\phi_i \gets - \max_{t\in C_i} G(t)$
\EndFor
\State Sort cells by $\phi_i$ in decreasing order \Comment{Higher $\phi_i$ = flatter (more mergeable)}

\State $M_{\mathrm{merge}} \gets \left\lceil \dfrac{r\cdot H\cdot W}{s_x s_y - 1} \right\rceil$ \Comment{\#cells needed to remove $rHW$ tokens}
\State $\mathcal{M} \gets$ first $M_{\mathrm{merge}}$ cells in the sorted list \Comment{mergeable cells}
\State $\mathcal{P} \gets$ remaining cells \Comment{protected cells}

\For{each protected cell $C_i \in \mathcal{P}$}
    \State $\mathbf{B} \gets \mathbf{B} \cup C_i$ \Comment{keep all tokens}
\EndFor

\For{each mergeable cell $C_i \in \mathcal{M}$}
    \State Compute $G(t)$ for tokens $t \in C_i$
    \State $t_{\mathrm{dst}} \gets \arg\min_{t\in C_i} G(t)$ \Comment{stable destination token}
    \State $\mathbf{B} \gets \mathbf{B} \cup \{t_{\mathrm{dst}}\}$
    \For{each token $t \in C_i \setminus \{t_{\mathrm{dst}}\}$}
        \State $\mathbf{A} \gets \mathbf{A} \cup \{t\}$
        \State $\pi(t) \gets t_{\mathrm{dst}}$ \Comment{unmerge target}
    \EndFor
\EndFor

\State \Return $\mathbf{A}, \mathbf{B}, \pi$
\end{algorithmic}
\end{algorithm}

\paragraph{Bounding Box Generation}
Following the standard SAM evaluation protocol, we use bounding box prompts derived from ground truth segmentation masks. For each ground truth mask, we compute the tight axis-aligned bounding box by extracting the minimum and maximum coordinates of foreground pixels (threshold > 128). The resulting boxes are provided to SAM in the format of top-left and bottom-right corners as spatial prompts. This deterministic box generation ensures reproducible evaluation while simulating realistic user-provided region annotations. For our prompt-aware token merging strategy, pixel-space boxes are converted to token-space coordinates by dividing by the patch size (16 pixels), enabling differentiated merging policies for tokens inside versus outside the prompted region.

\section{PCA Visualization}

We visualize the feature space using PCA projections across different layers of SAM-B in Fig.~\ref{fig:pca_layers} and Fig.~\ref{fig:pca_layers_parrot}. Even with 65\% of the tokens merged, the features remain faithful and preserve fine-grained forground object details.

\begin{figure}[htbp]
    \centering
    \setlength{\tabcolsep}{2pt}
    \begin{tabular}{c ccc}
        & \textbf{Feat. PCA (0\% merge)} & \textbf{Feat. PCA (65\%)} & \textbf{Difference} \\[0.5em]
        \rotatebox{90}{\textbf{Layer 0}} &
        \includegraphics[width=0.3\textwidth]{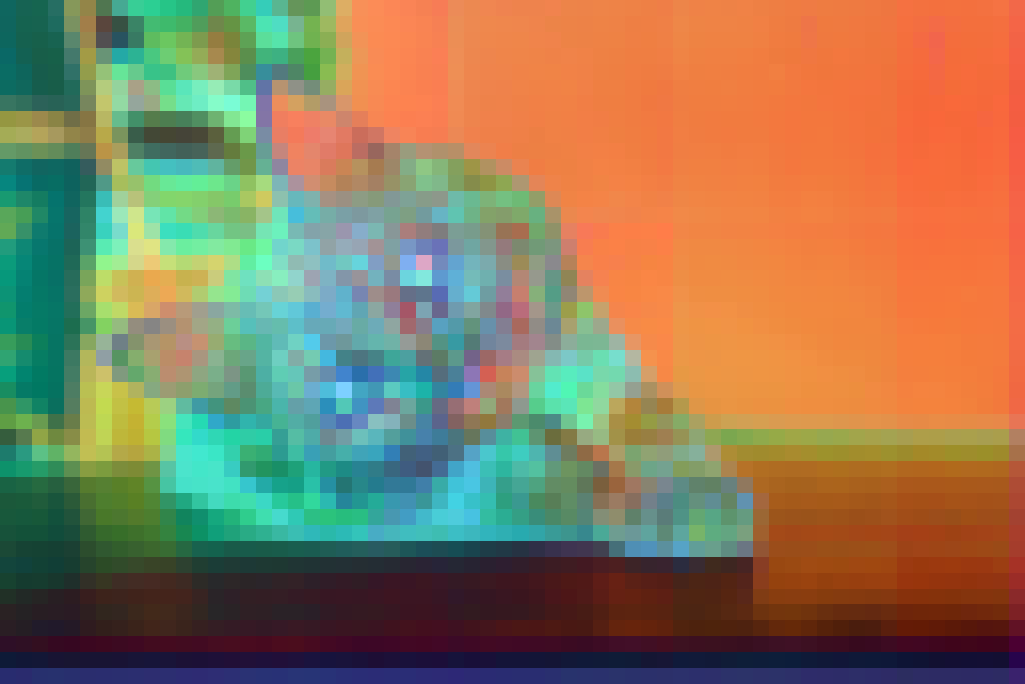} &
        \includegraphics[width=0.3\textwidth]{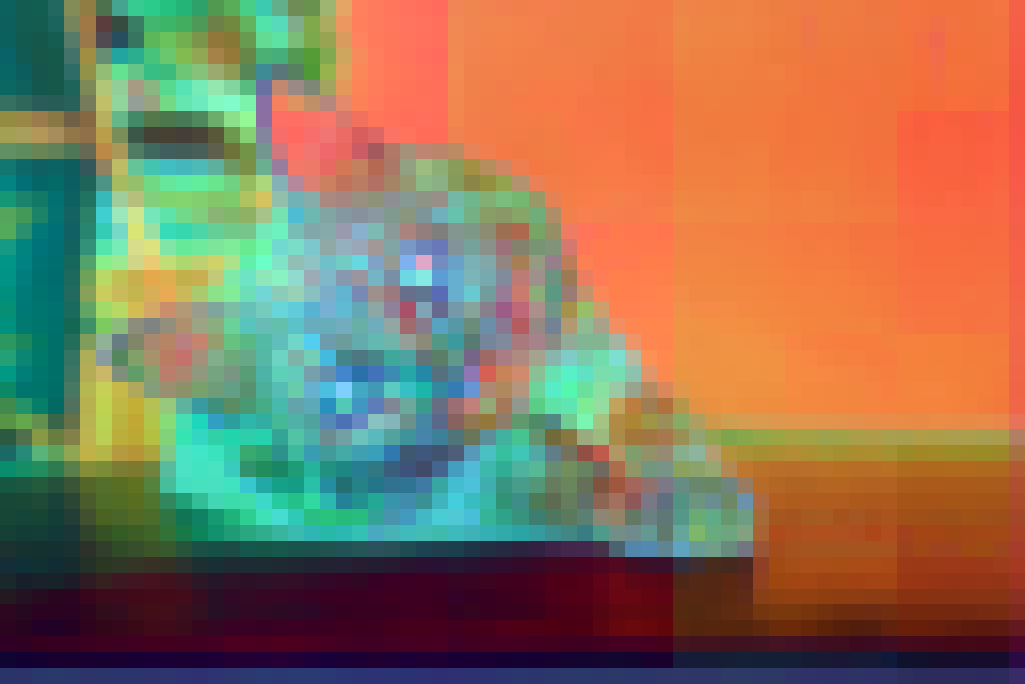} &
        \includegraphics[width=0.3\textwidth]{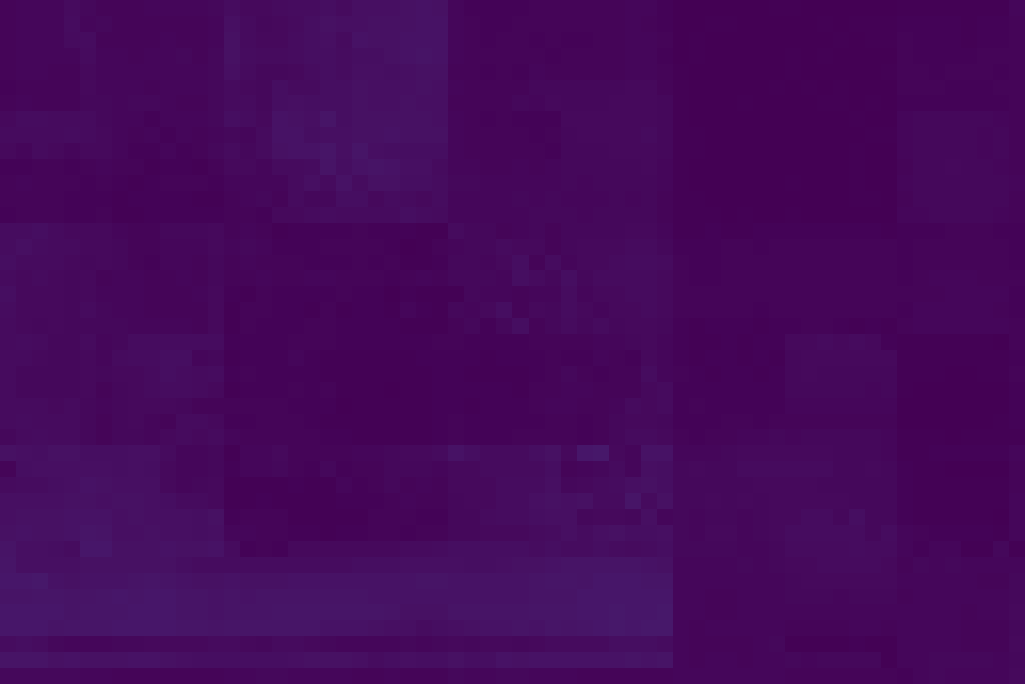} \\[0.5em]
        \rotatebox{90}{\textbf{Layer 5}} &
        \includegraphics[width=0.3\textwidth]{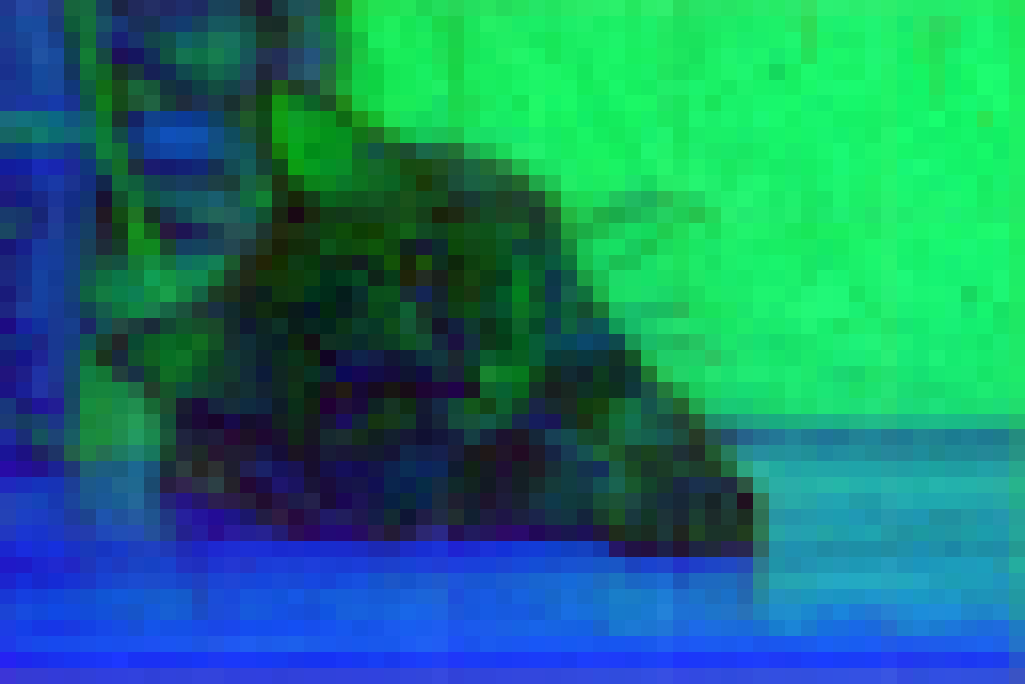} &
        \includegraphics[width=0.3\textwidth]{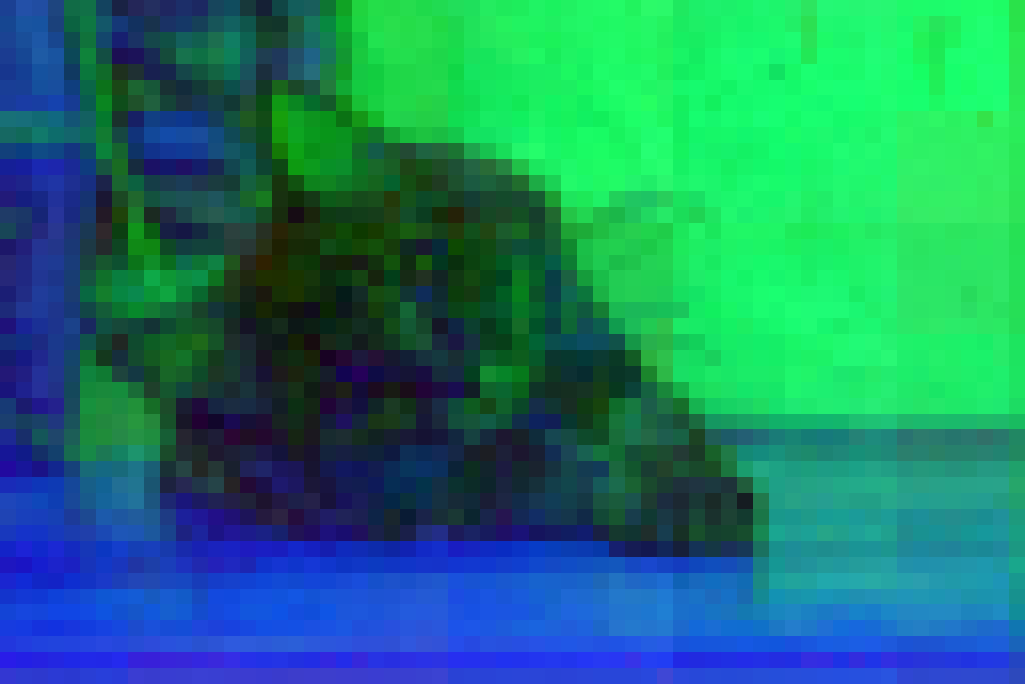} &
        \includegraphics[width=0.3\textwidth]{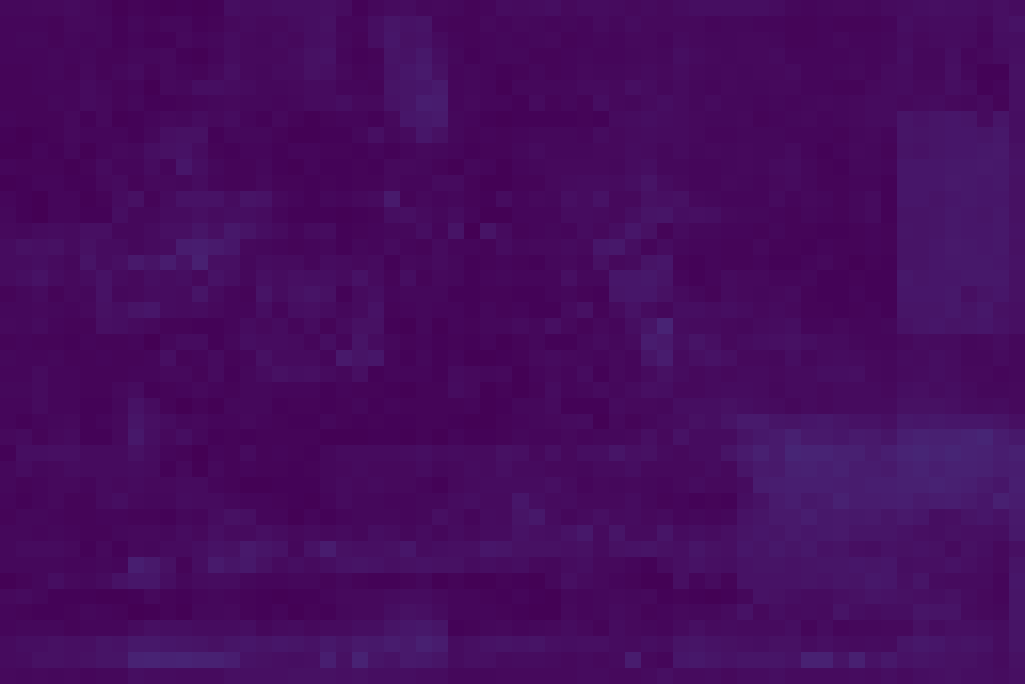} \\[0.5em]
        \rotatebox{90}{\textbf{Layer 11}} &
        \includegraphics[width=0.3\textwidth]{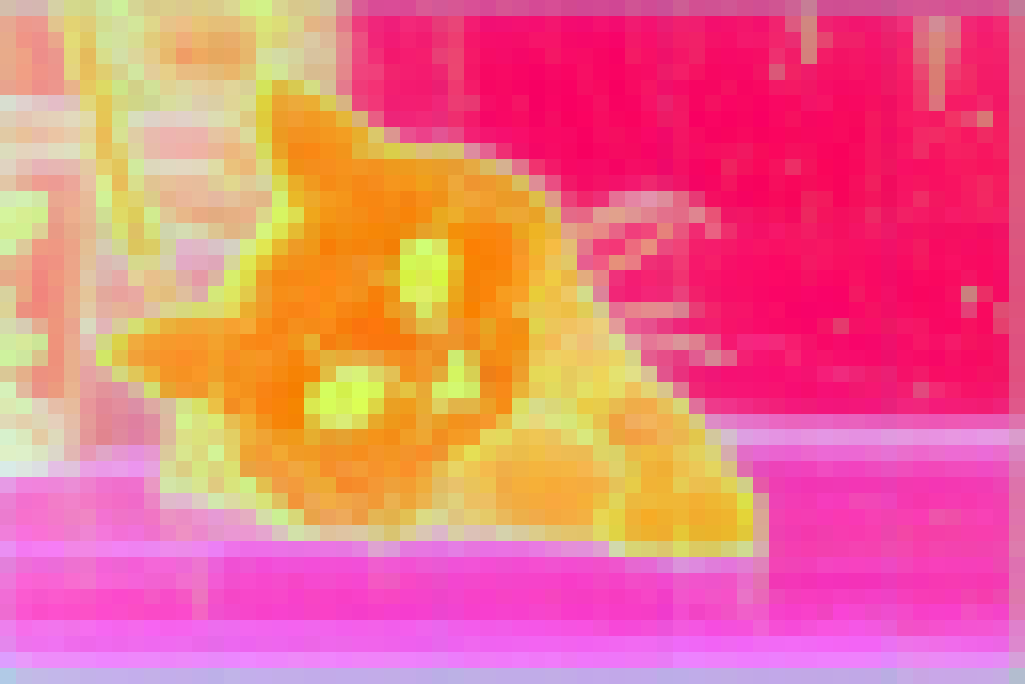} &
        \includegraphics[width=0.3\textwidth]{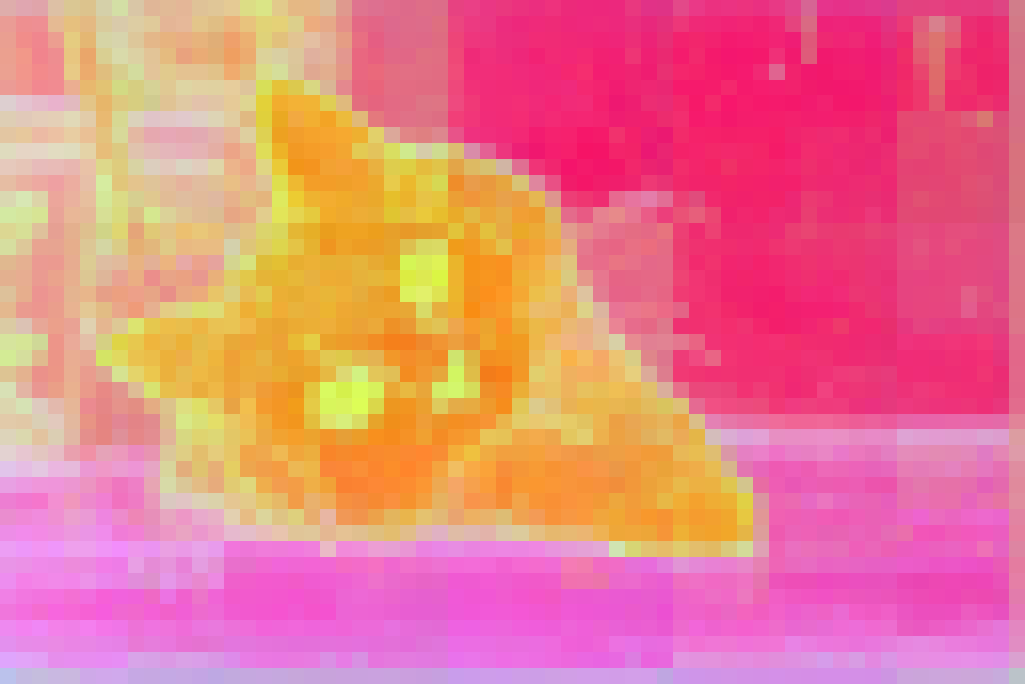} &
        \includegraphics[width=0.3\textwidth]{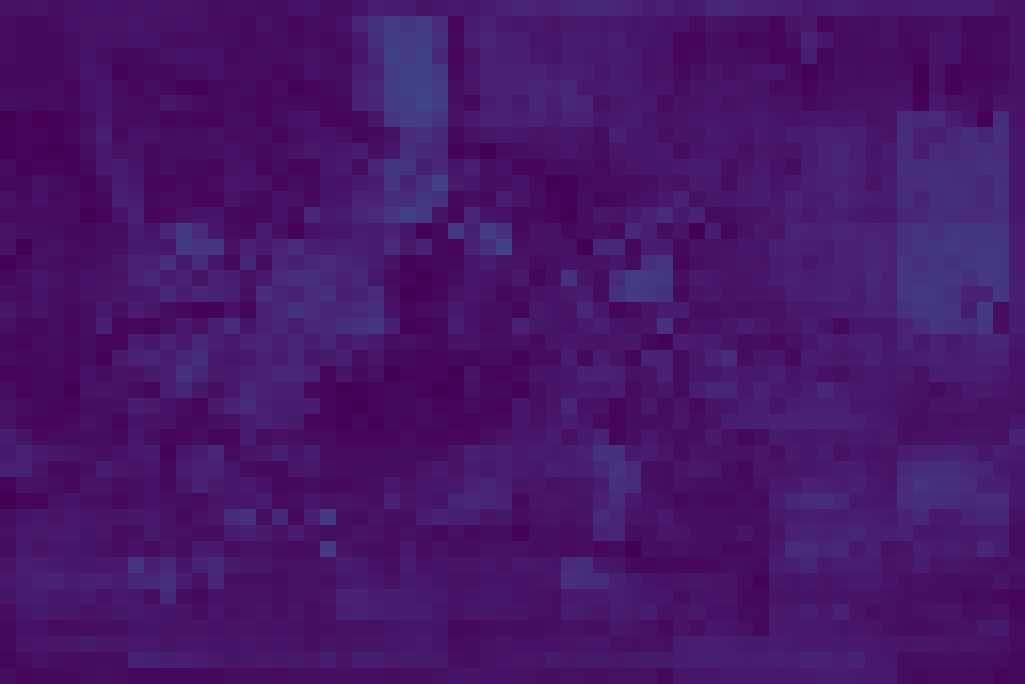} \\
    \end{tabular}
    \caption{PCA visualizations across image encoder layers.}
    \label{fig:pca_layers}
\end{figure}

\section{When the StructSAM energy will fail}

\begin{figure}[htbp]
    \centering

    \begin{subfigure}{1.0\textwidth}
        \centering
        \includegraphics[width=\linewidth]{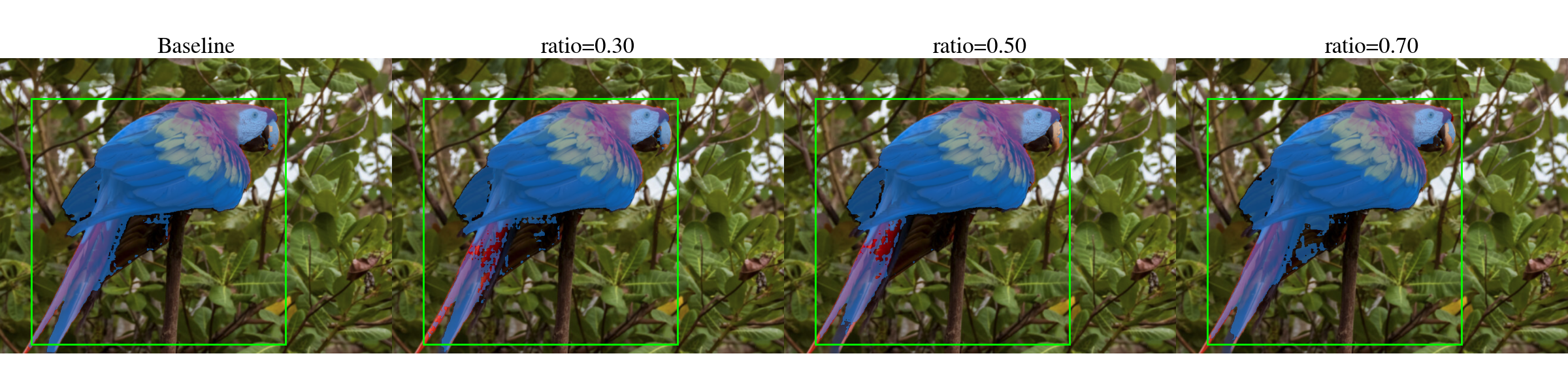}
        \caption{The baseline models exhibit low confidence around the parrot’s tail region; consequently, token merging becomes inconsistent across different merging rates.}
    \end{subfigure}
    

    \begin{subfigure}{1.0\textwidth}
        \centering
        \includegraphics[width=\linewidth]{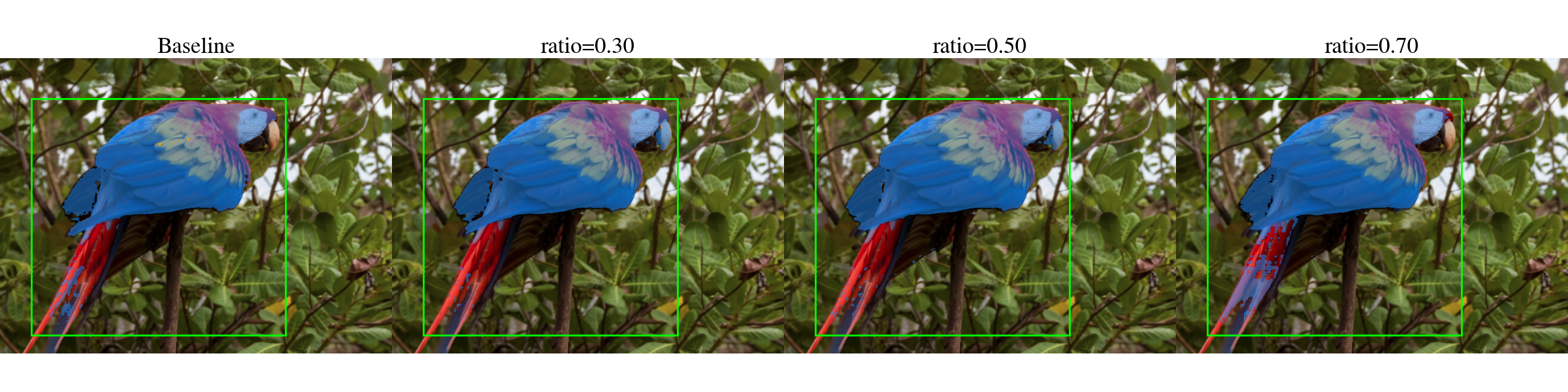}
        \caption{The baseline models exhibit low confidence around the parrot’s tail region; even a slight adjustment of the bounding box leads to significant changes in the segmentation output..}
    \end{subfigure}
    

    \begin{subfigure}{1.0\textwidth}
        \centering
        \includegraphics[width=\linewidth]{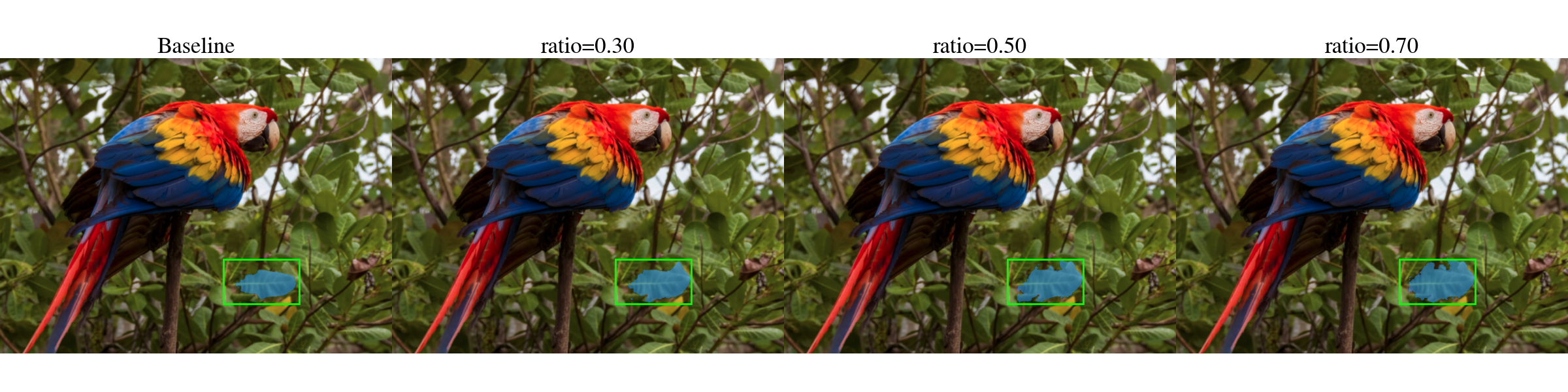}
        \caption{Leaf segmentation with token merging. Background regions are more easily merged due to weaker feature importance.}
    \end{subfigure}

    \caption{Segmentation results of SAM under different token merging rates. The parrot examples show that token merging becomes unstable in regions where the model has low confidence, particularly around challenging areas such as the tail. In contrast, the leaf example illustrates that non-foreground regions are more aggressively merged due to their lower feature saliency, which can lead to segmentation failures.}
    
    \label{fig:failure_cases}
\end{figure}

\begin{figure}[htbp]
    \centering
    \setlength{\tabcolsep}{2pt}
    \begin{tabular}{c ccc}
        & \textbf{Feat. PCA (0\% merge)} & \textbf{Feat. PCA (65\%)} & \textbf{Difference} \\[0.5em]
        \rotatebox{90}{\textbf{Layer 0}} &
        \includegraphics[width=0.3\textwidth]{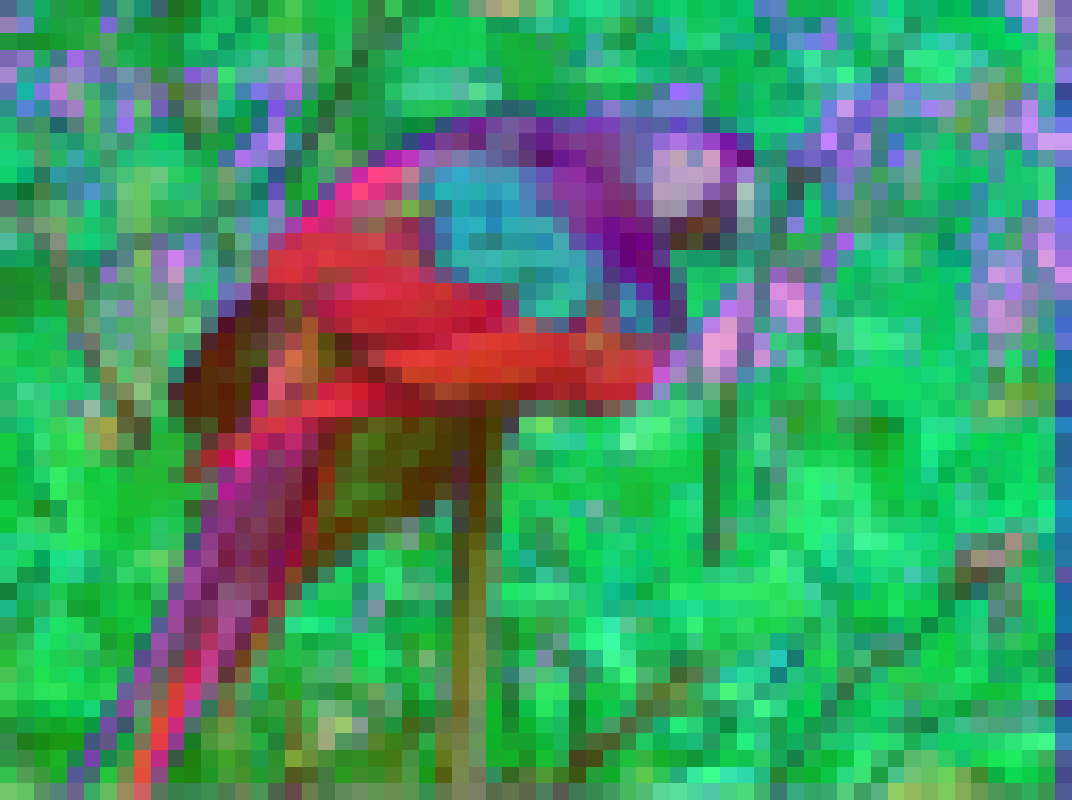} &
        \includegraphics[width=0.3\textwidth]{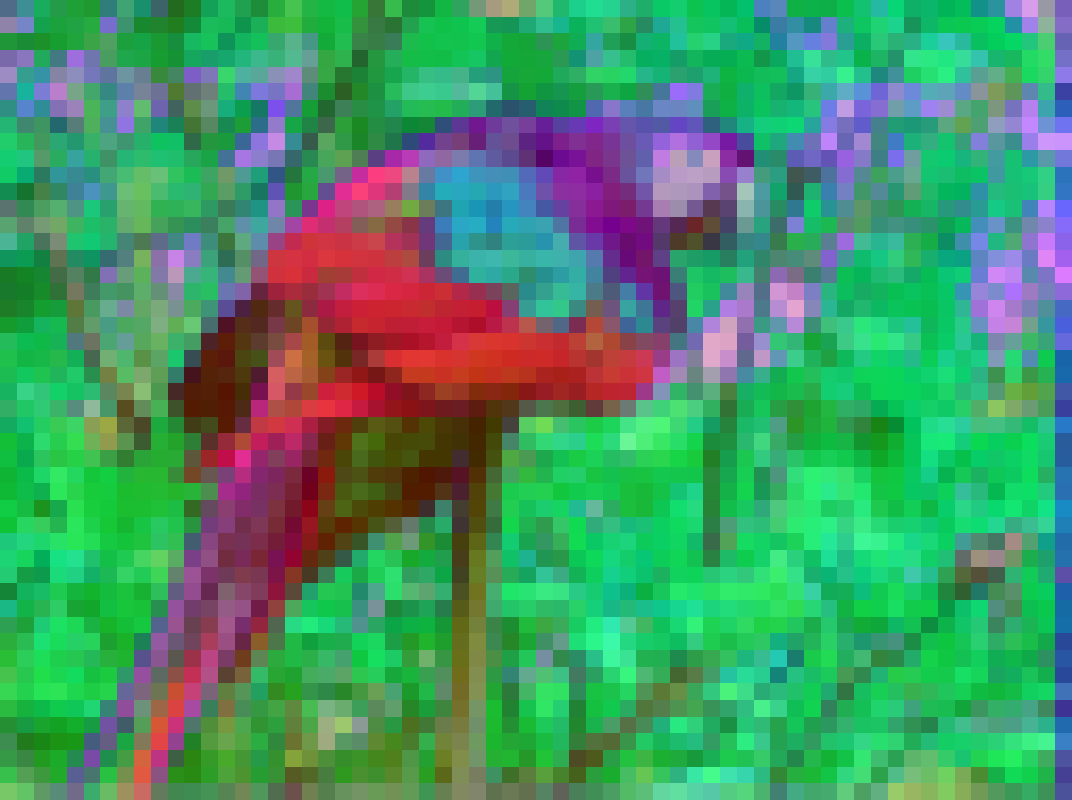} &
        \includegraphics[width=0.3\textwidth]{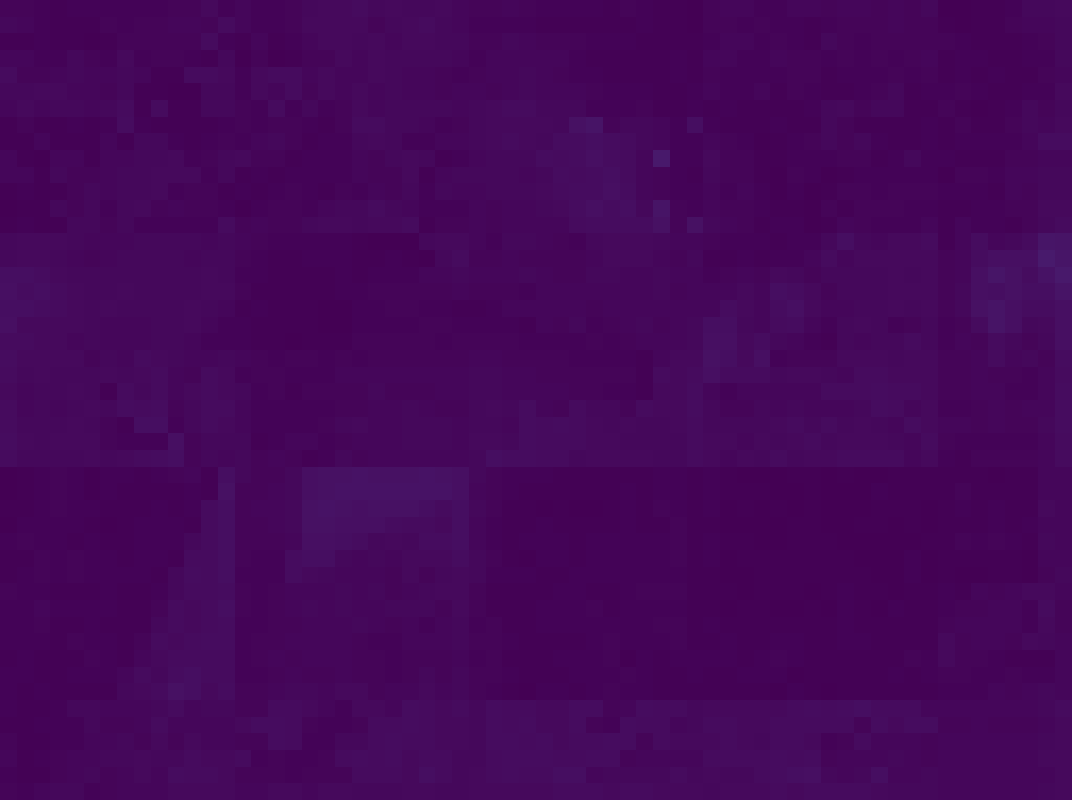} \\[0.5em]
        \rotatebox{90}{\textbf{Layer 5}} &
        \includegraphics[width=0.3\textwidth]{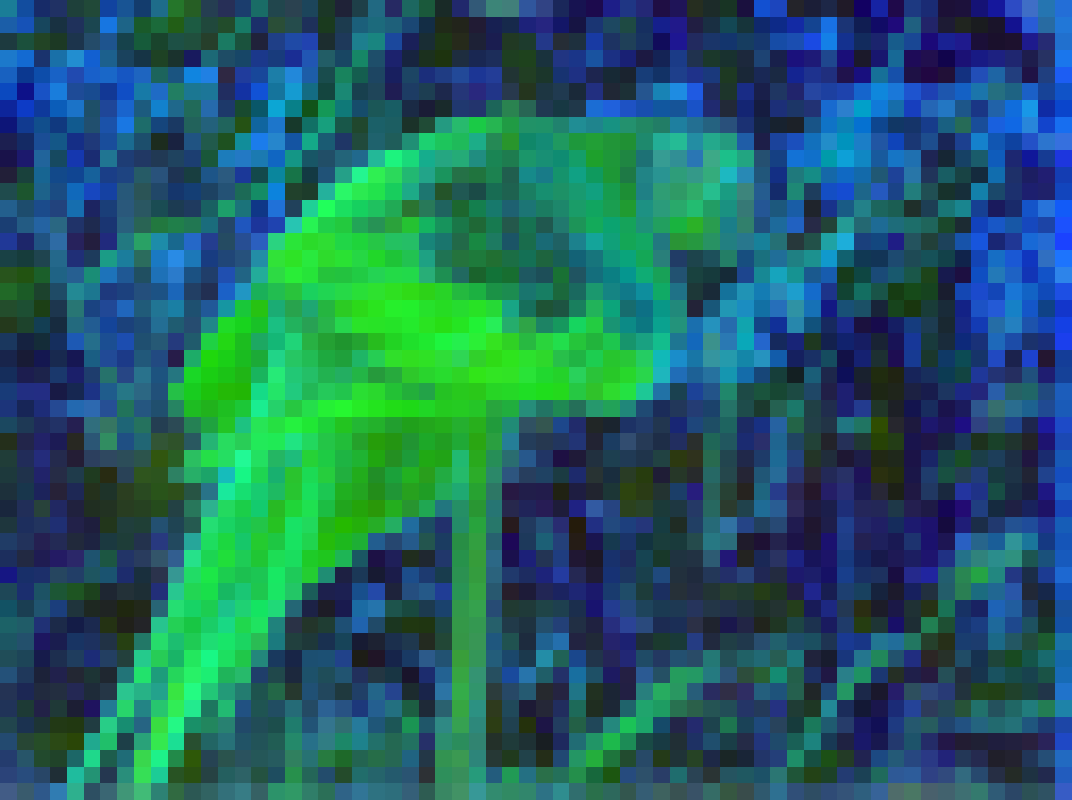} &
        \includegraphics[width=0.3\textwidth]{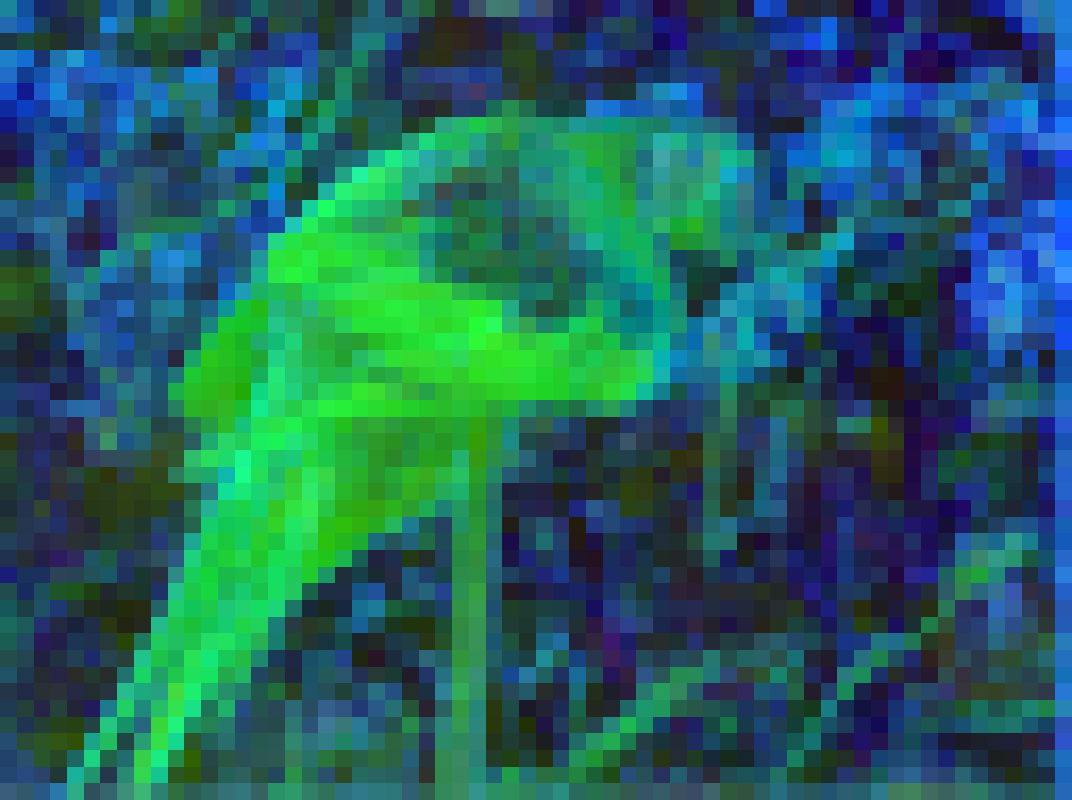} &
        \includegraphics[width=0.3\textwidth]{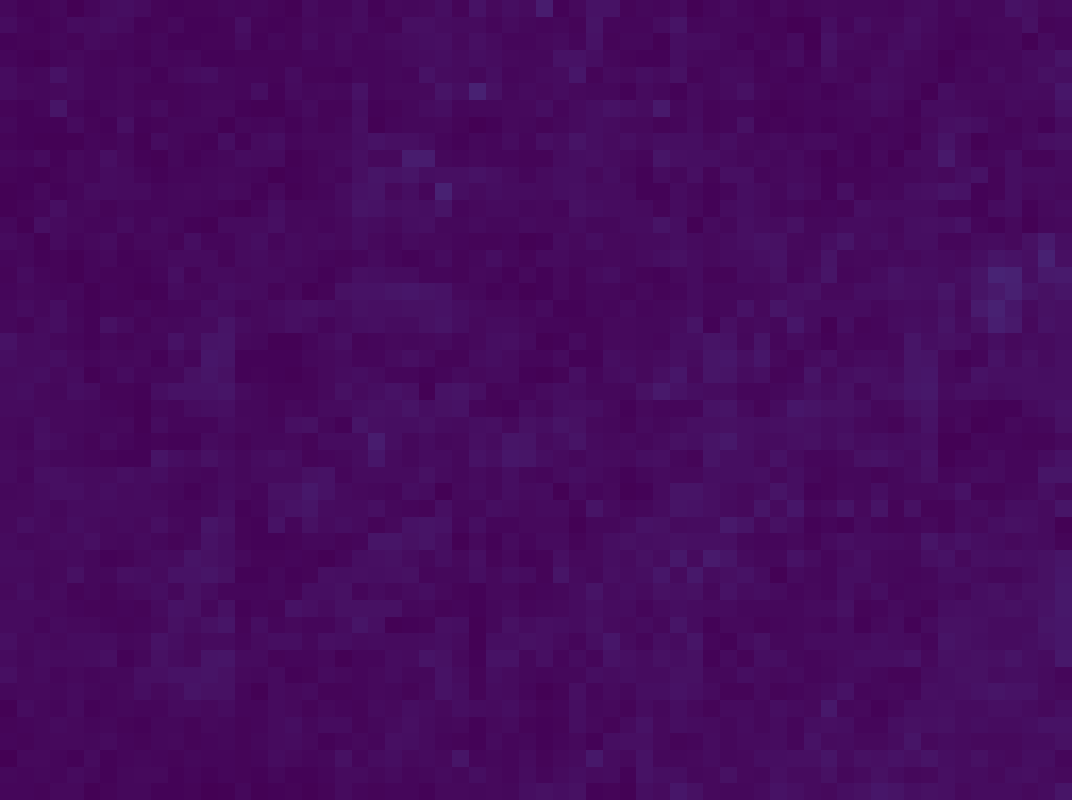} \\[0.5em]
        \rotatebox{90}{\textbf{Layer 11}} &
        \includegraphics[width=0.3\textwidth]{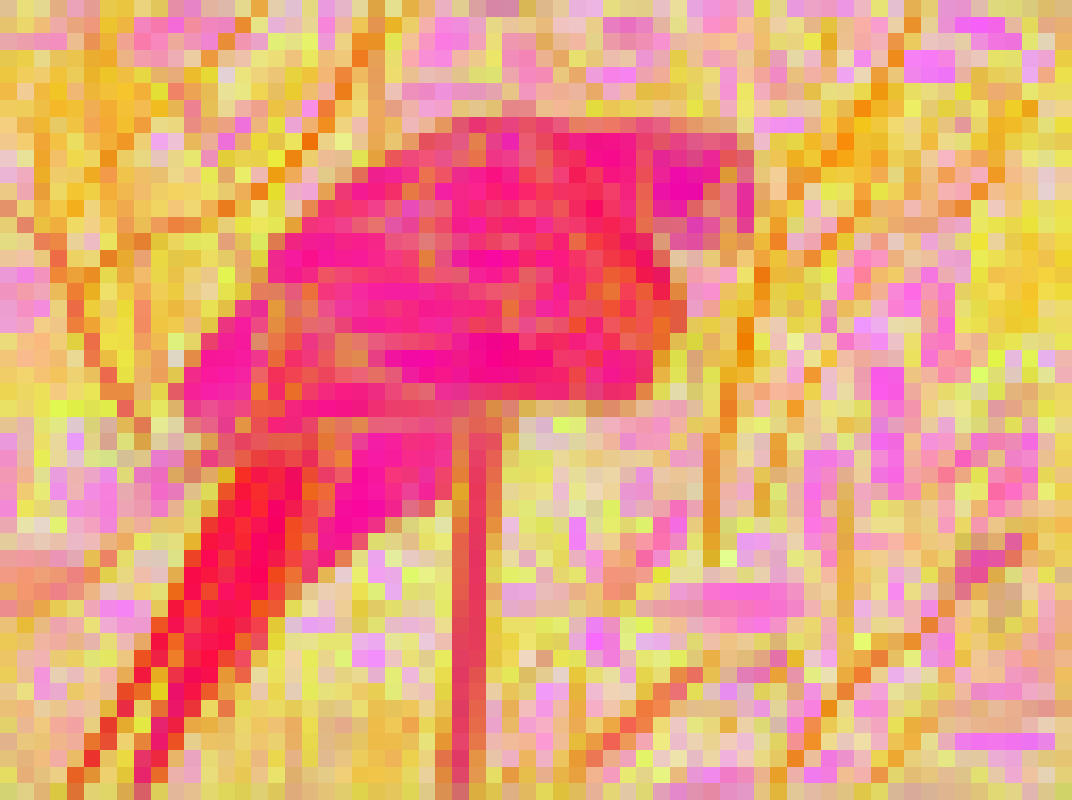} &
        \includegraphics[width=0.3\textwidth]{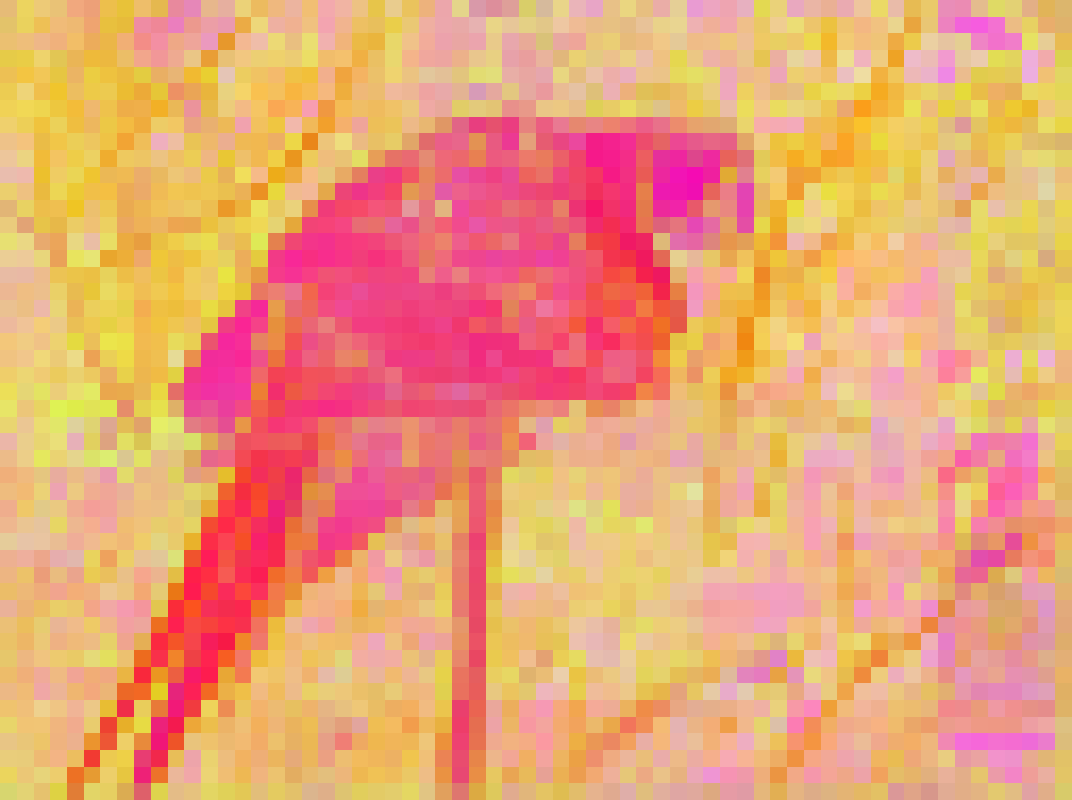} &
        \includegraphics[width=0.3\textwidth]{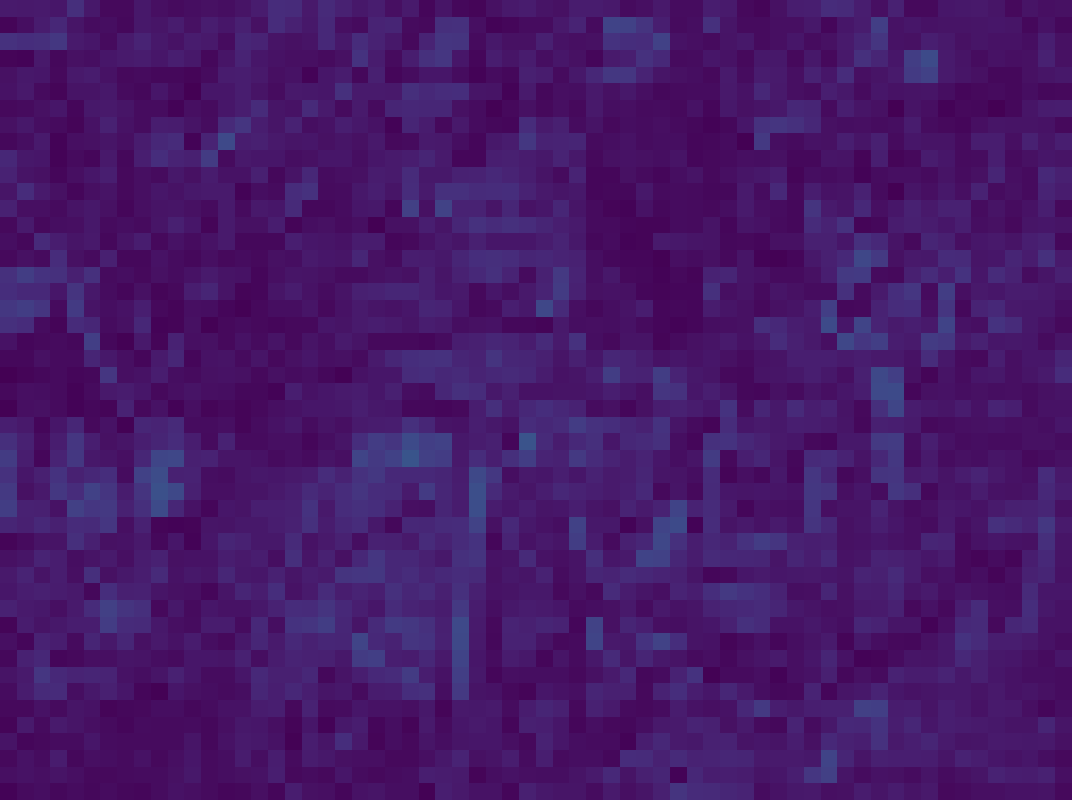} \\
    \end{tabular}
    \caption{PCA visualizations across image encoder layers.}
    \label{fig:pca_layers_parrot}
\end{figure}

StructSAM becomes more unstable when processing regions where the original model exhibits low confidence or high ambiguity induced by the prompt. The top two rows of Fig.~\ref{fig:failure_cases} illustrate this failure mode: the tail of the parrot is an ambiguous region, where even slight changes in the bounding box can lead to large variations in the original SAM output. In such cases, StructSAM shows increased instability across different merging rates.

Another failure case arises from StructSAM’s tendency to prioritize merging background features while preserving highly distinctive and salient foreground objects. Although this behavior can improve segmentation quality for foreground regions, it makes background segmentation more challenging, as shown in the last row of Fig.~\ref{fig:failure_cases} and the last row of Fig.~\ref{fig:pca_layers_parrot}.

\section{Additional Ablation Studies}
\paragraph{Cell Size}: Table~\ref{tab:cell_size_effect} presents an ablation study on cell size. To ensure compatibility, we select cell sizes that are divisible by the SAM model's window size. Our results indicate that $2 \times 2$ cells yield the highest quality, as they offer superior spatial preservation compared to larger configurations.

\paragraph{Additional ablation on the COIFT dataset:}
We further extend the ablation study presented in the main paper to the COIFT dataset in Table~\ref{tab:ablation_sam_b_coift}, providing additional evidence to support the design choices adopted in the final model.

\begin{table}[!h]
\scriptsize
\centering
\caption{\label{tab:ablation_sam_b_coift}Ablation study on SAM-B. mIoU and boundary IoU (B-IoU) are reported in \%. GradCell (Full) denotes the complete model using Sobel-based flatness and structured cell sampling.}

\setlength{\tabcolsep}{5pt}
\renewcommand{\arraystretch}{1.1}
\begin{tabular}{l l cc cc}
\toprule
& & \multicolumn{2}{c}{$r=0.35$} & \multicolumn{2}{c}{$r=0.55$} \\
\cmidrule(lr){3-4} \cmidrule(lr){5-6}
Dataset & Method & mIoU & B-IoU & mIoU & B-IoU \\
\midrule
\multirow{7}{*}{COIFT}
& GradCell (Full) & \textbf{64.0} & \textbf{53.5} & \textbf{63.3} & \textbf{52.1} \\
& Central-Diff     & 60.5 & 50.4 & 59.5 & 49.1 \\
& Mean-Flatness    & 63.8 & 53.3 & 62.9 & 52.1 \\
& No-Cell          & 62.7 & 50.6 & 59.2 & 47.7 \\
& Rand-Cell        & 63.9 & 52.5 & 63.0 & 51.1 \\
& Max-Dst          & 63.3 & 52.7 & 62.0 & 51.4 \\
& Rand-Dst         & 63.7 & 53.2 & 62.8 & 51.9 \\
\bottomrule
\end{tabular}
\end{table}

\begin{table}[!h]
  \centering
  \caption{Effect of cell size on INbreast dataset segmentation performance. The image encoder processes tokens on a $64 \times 64$ grid for global attention layers and a $14 \times 14$ grid for window attention layers, so the cell sizes must be divisors of their respective grid dimensions.}
  \label{tab:cell_size_effect}
  \begin{tabular}{cccc}
    \toprule
    \textbf{Window Cell Size} & \textbf{Global Cell Size} & \textbf{Dice Score} & \textbf{GFLOPs} \\
    \midrule
    $2 \times 2$   & $2 \times 2$   & 0.7551 & 348.3 \\
    $7 \times 7$   & $8 \times 8$   & 0.7481 & 347.8 \\
    $14 \times 14$ & $16 \times 16$ & 0.7364 & 347.8 \\
    \bottomrule
  \end{tabular}
\end{table}

\section{\textcolor{black}{Extension StructSAM to Efficient-SAM for Video Tracking}}

We extend StructSAM to Efficient-SAM to enable lightweight video tracking by leveraging token merging for improved efficiency. While merging may introduce minor accuracy degradation, the resulting segmentation quality remains sufficient for tracking scenarios that rely on coarse prompts such as bounding boxes. This design prioritizes speed and scalability, making it suitable for real-time applications. Our goal is to evaluate whether a StructSAM-enhanced Efficient-SAM can serve as a practical alternative to more powerful but computationally intensive models such as SAM-2 for video tracking.


We illustrate the tracking quality of EfficientSAM in Fig.~\ref{fig:tracksam_quantitative}. Despite its simplicity, the method with a 70\% merging rate performs on par with SAM2 while requiring significantly less memory and computation. Fig.~\ref{fig:mergingrates} shows the gains in throughput and memory consumption when applying StructSAM to EfficientSAM; at a 70\% merging rate, the system achieves real-time performance at 30 frames per second. Fig.~\ref{fig:IAVLA} demonstrates that replacing SAM2 with EfficientSAM (70\% merging rate) in our tracking pipeline maintains comparable performance on the robot stacking task with IA-VLA, while substantially improving efficiency and speed.
\begin{figure*}[!hbt]
\centering
\includegraphics[width=1.0\textwidth,height=\textheight,keepaspectratio]{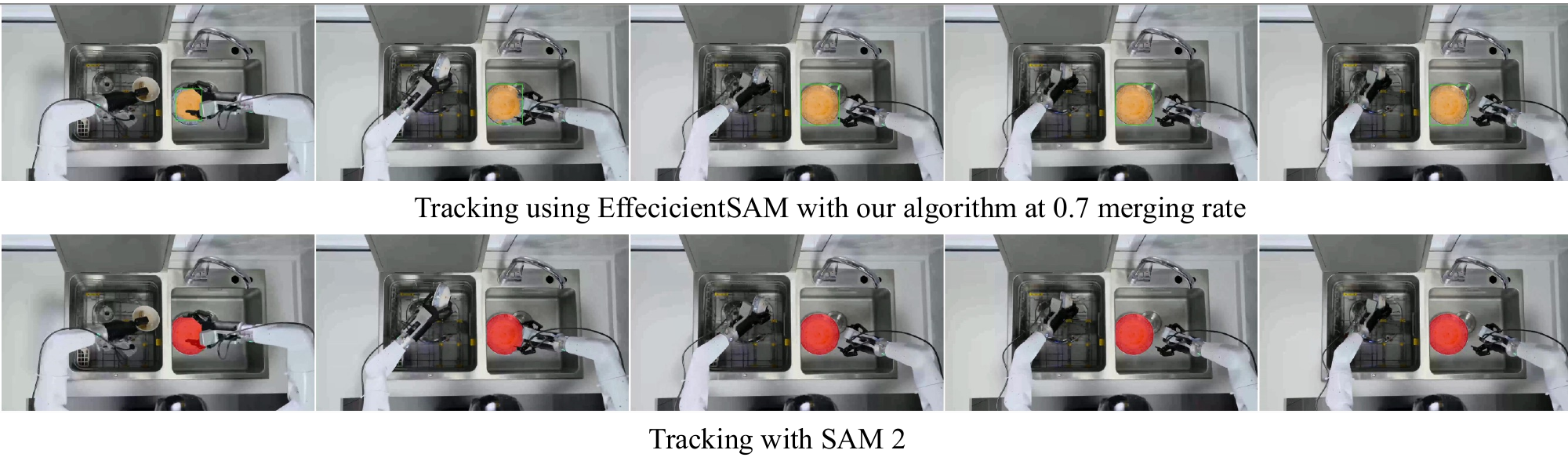}

\caption{\label{fig:tracksam_quantitative}\textbf{Results.} We compare TrackSAM (with a merge rate of 0.70) against the base model EfficientSAM and a SAM2. While the overall performance is similar, our method enables real-time processing with lower memory
consumption. }
\end{figure*}
\begin{figure}[!hbt] 
  \centering
  \includegraphics[width=0.8\columnwidth]{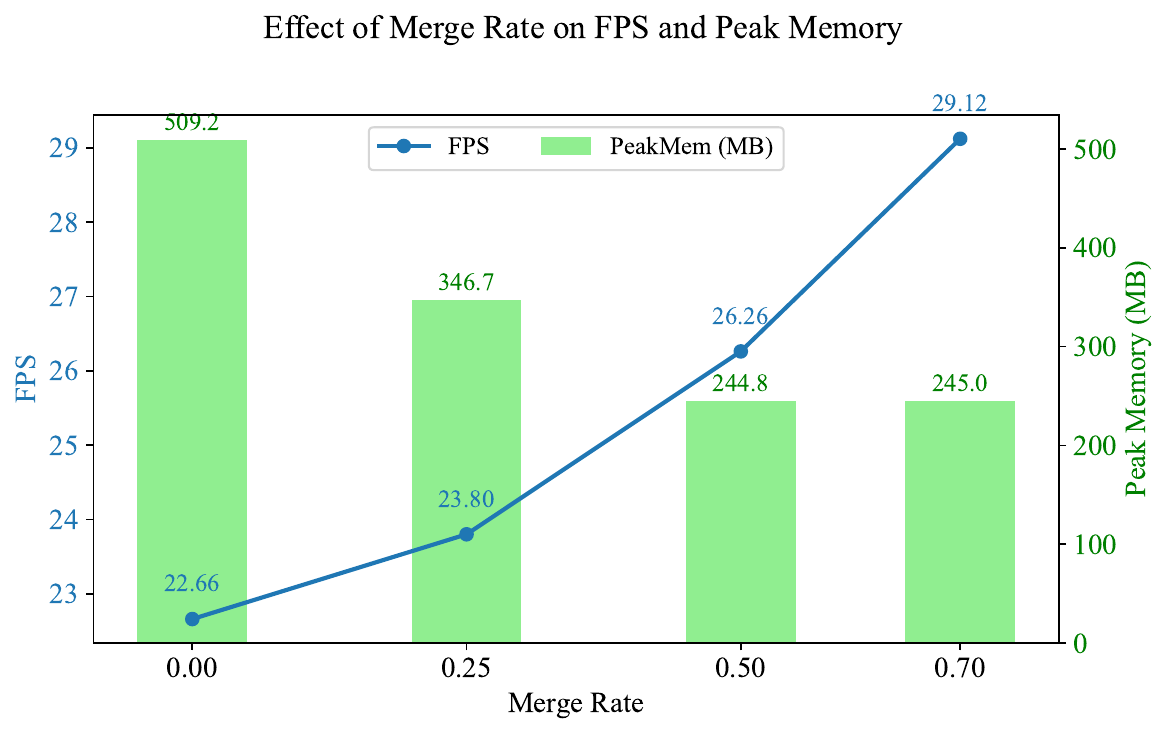}
  \caption{At 70\% merging rate, we boost up robot execution speed to 3x while still maintaining success rate}
  \label{fig:mergingrates}
\end{figure}
\begin{figure}[!hbt] 
  \centering
  \includegraphics[width=0.8\columnwidth]{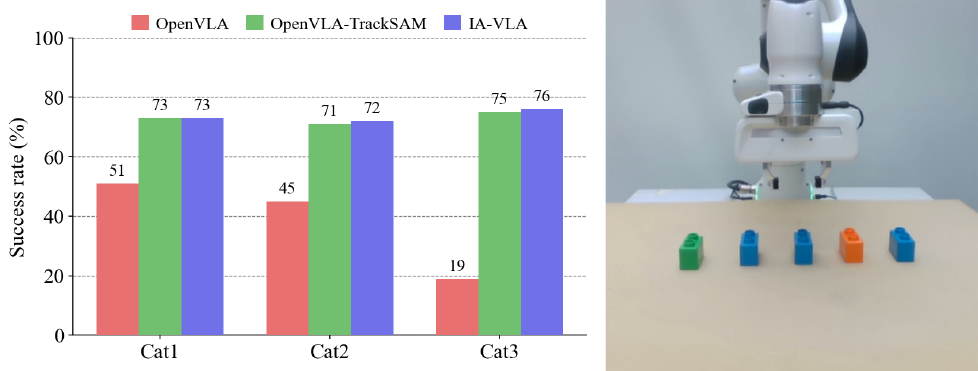}
  \caption{At 70\% merging rate, our method (Green) boost up robot execution speed to 1.4x while keeping the same performance as IA-VLA (Blue).}
  \label{fig:IAVLA}
\end{figure}

\paragraph{\textcolor{black}{Extension for efficient video object tracking}}

StructSAM extends to video co-tracking and segmentation by propagating masks across frames while restricting computation to relevant regions. For a video sequence ${I_i}$, the bounding box $\partial_i$ derived from the previous mask $M_{i-1}$ is used as a prompt to segment the current frame, yielding $M_i = \text{SAM}(I_i, \partial_i)$. Computation is focused within $\partial_i$, while tokens outside are merged or skipped for efficiency. The next bounding box is updated via $\partial_{i+1} = \mathcal{B}(M_i)$, enabling iterative mask propagation. This region-focused strategy allows efficient and accurate segmentation over time by leveraging temporal consistency between consecutive frames.


\begin{figure}[!h]
    \centering

    \begin{minipage}{0.19\textwidth}
        \centering
        \includegraphics[width=\linewidth]{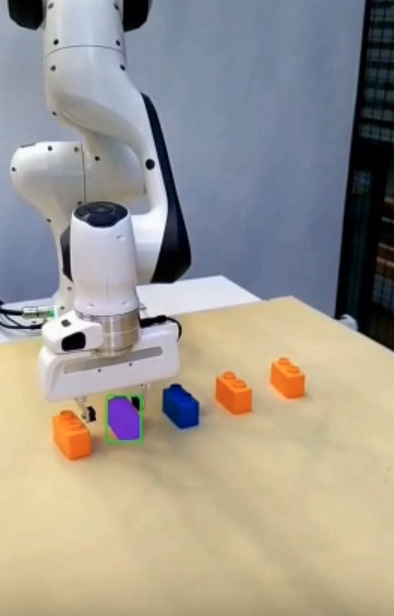}
        \caption*{Frame 1}
    \end{minipage}
    \hfill
    \begin{minipage}{0.19\textwidth}
        \centering
        \includegraphics[width=\linewidth]{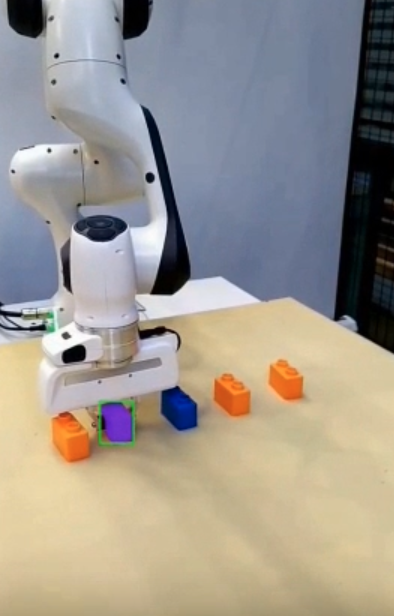}
        \caption*{Frame 30}
    \end{minipage}
    \hfill
    \begin{minipage}{0.19\textwidth}
        \centering
        \includegraphics[width=\linewidth]{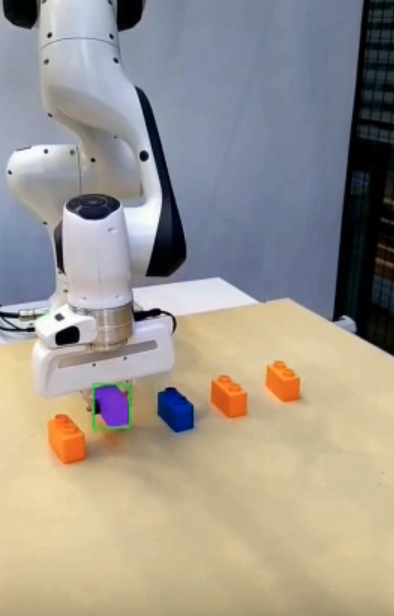}
        \caption*{Frame 60}
    \end{minipage}
    \hfill
    \begin{minipage}{0.19\textwidth}
        \centering
        \includegraphics[width=\linewidth]{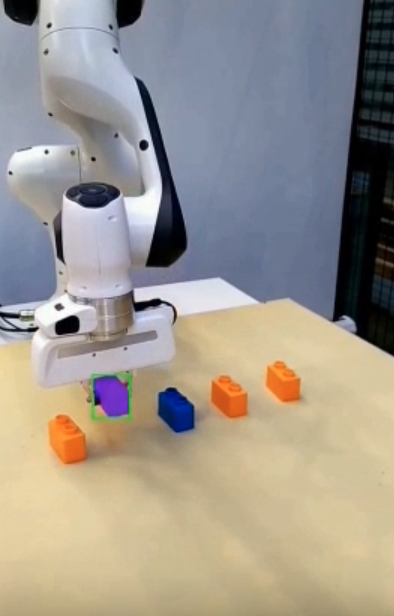}
        \caption*{Frame 90}
    \end{minipage}
    \hfill
    \begin{minipage}{0.19\textwidth}
        \centering
        \includegraphics[width=\linewidth]{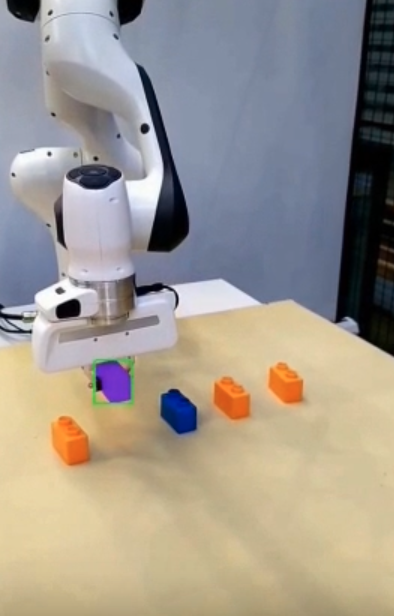}
        \caption*{Frame 120}
    \end{minipage}

    \caption{\label{fig:effsam_segment_quant} Segmentation and tracking across video frames, merging rate at 0.8}
\end{figure}

\paragraph{\textcolor{black}{Quantitative analysis of the effect of StructSAM on EfficientSAM}}
In this part, we assess the change in performance of EfficientSAM when it is extended by StructSAM. We illustrate the tracking performance in a robotic task in Figure~\ref{fig:effsam_segment_quant}. 
For evaluation, we compare the bounding box and segmentation results of EfficientSAM tracking with those of EfficientSAM + StructSAM tracking (using the algorithm described above), in terms of mIoU over a 15s video, as shown in Fig.~\ref{fig:effsam_segment_quant}. Table ~\ref{tab:structsam_efficientsam} demonstrate the quantitative results. Even with the high merging ratio, StructSAM still shows comparative results to original EfficientSAM, with Fig.~\ref{fig:effsam_segment_quant} showing the success of StructSAM at merging ratio of 0.8. Even though the task is simple, which explains the mIoU and bbox IoU even at high merging ratio, it shows that in many cases StructSAM can be used to speed up an already efficient algorithm significantly while still maintaining decent performance. 

\begin{table}[ht]                                            
      \centering      
      \caption{Segmentation and bounding box quality of applying StructSAM on EfficientSAM}
      \label{tab:structsam_efficientsam}                 \begin{tabular}{lcc}                               \toprule                                           Merging rate & mIoU & Bbox IoU \\                  \midrule                                           $R = 0.0$ (baseline) & 0.920 & 0.916 \\            $R = 0.6$            & 0.929 & 0.942 \\            $R = 0.7$            & 0.928 & 0.942 \\            $R = 0.8$            & 0.931 & 0.947 \\            $R = 0.9$            & 0.922 & 0.932 \\
       \bottomrule                                       \end{tabular}
  \end{table}

\paragraph{\textcolor{black}{Robot Setup for VLA experiments}}
To assess the benefits of StructSAM in downstream robotic applications, we integrate it into Efficient-SAM for object tracking within vision-language-action (VLA) pipelines. We then evaluate its performance through real-world experiments designed to test robustness against distractor objects. In these experiments, a Franka Research 3 robot is instructed to pick up the correct Lego block based on a human prompt (see Figure~\ref{fig:blocks}), requiring both semantic and visual reasoning to accurately identify the target.

The language instructions follow structured patterns such as \textit{“lift the {leftmost / rightmost} {orange / green / blue} block”} and \textit{“lift the {second / third / fourth / fifth} {orange / green / blue} block from the {left / right}.”}

We collect 120 demonstrations across 12 distinct language instructions covering a subset of the instructions above. Each scene is defined by the number, color, and spatial arrangement of the blocks, with semantic concepts grounded in color and positional references.

To evaluate generalization of the VLA models, we divide tasks into three difficulty categories. Category 1 includes instructions seen during training. Category 2 combines familiar positional references with novel color assignments. Category 3 introduces previously unseen ways of referring to object positions.

We use OpenVLA with raw robot observations as a baseline. To mitigate the impact of distractors, inspired by \cite{hannus2025iavlainputaugmentationvisionlanguageaction}, we augment the input with a highlighted mask of the target object, generated using StructSAM-enhanced Efficient-SAM and SAM2. We then compare baseline performance against OpenVLA augmented with SAM2 and with StructSAM-enhanced Efficient-SAM. The results show that input augmentation consistently improves performance across all task categories. In addition, StructSAM integrated into Efficient-SAM achieves task success rates comparable to SAM2 while delivering a 45\% speedup, highlighting an effective balance between efficiency and performance in both robotic and medical imaging contexts.
\begin{figure}[t]
\centering
    \includegraphics[width=.6\textwidth]{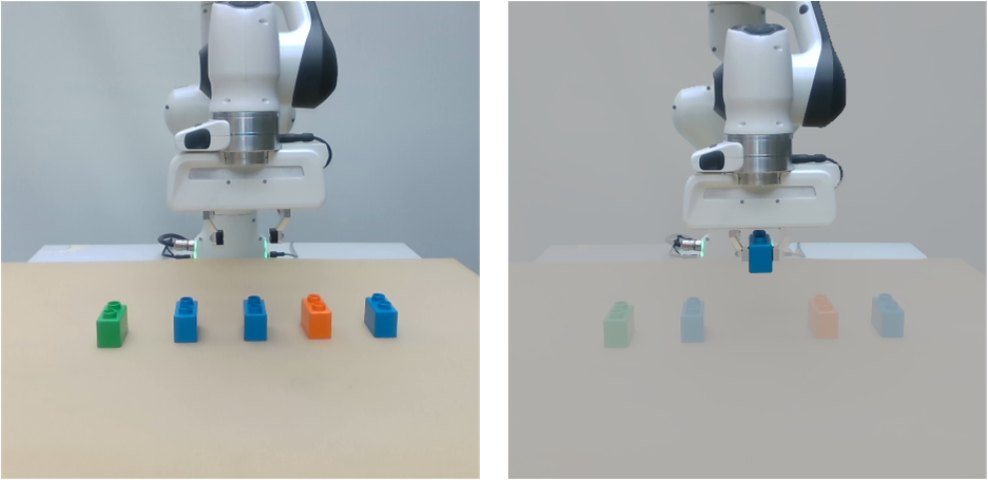}
    \setlength{\belowcaptionskip}{-6pt}
    \caption{Task: \textit{"lift the second blue block from the right"}. The raw robot observation is on the left, and the augmented observation with propagated masks at the end of the task is at the right.} 
    \label{fig:blocks}
\end{figure}

\section{Additional Results}
We present in Figure~\ref{fig:inbreast_vis} a visualization of token-merging outputs from different algorithms on a representative image from the INbreast dataset. Additionally, we provide a detailed comparison between StructSAM and baseline methods across different merging ratios on four high-quality datasets and two SAM variants (ViT-B and ViT-L backbones), as shown in Figs.~\ref{fig:samb_flops_ds0},~\ref{fig:samb_flops_ds1}, and~\ref{fig:saml_flops_ds0}. We show additional analysis for peak GPU memory consumptions from Figs.~\ref{fig:samb_peakmem_ds0}~and~\ref{fig:samb_peakmem_ds1}.

In Table~\ref{tab:point_prompt_results}, we additionally report segmentation performance under point-prompt evaluation. We follow the original SAM evaluation protocol, where points are sampled from the ground-truth mask and used sequentially to progressively refine the predicted segmentation.

\begin{figure*}[!hbt]
\centering
\includegraphics[width=\textwidth,height=\textheight,keepaspectratio]{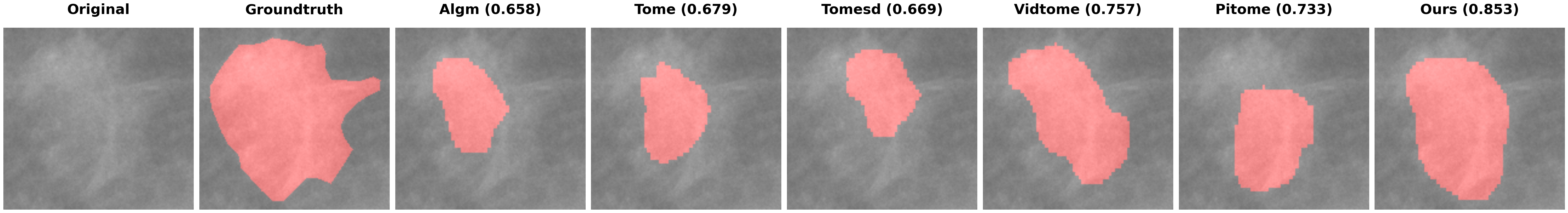}
\caption{Segmentation results on INbreast dataset using MedSAM}
\label{fig:inbreast_vis}
\end{figure*}

\begin{table}[ht]
  \centering
  \caption{Point-prompt segmentation results across all datasets (model: sam-b, merge ratio: 0.55). \textcolor[HTML]{006400}{\textbf{Green}} = best, \textcolor{blue}{blue} = second-best per column.}
  \label{tab:point_prompt_results}
  \begin{tabular}{ll cc cc cc}
    \toprule
    Dataset & Method & \multicolumn{2}{c|}{1 Point} & \multicolumn{2}{c|}{2 Points} & \multicolumn{2}{c}{3 Points} \\
    & & mIoU & b-mIoU & mIoU & b-mIoU & mIoU & b-mIoU \\
    \midrule
    \multirow{5}{*}{\rotatebox{90}{DIS5K}} & Baseline (No Merging) & 15.98 & 13.58 & \textcolor{blue}{34.54} & \textcolor[HTML]{006400}{\textbf{29.10}} & \textcolor[HTML]{006400}{\textbf{47.86}} & \textcolor[HTML]{006400}{\textbf{39.94}} \\
     & ToMeSD & \textcolor{blue}{16.72} & \textcolor{blue}{13.81} & 34.21 & 28.08 & 46.43 & 37.99 \\
     & ALGM & 13.09 & 10.11 & 30.76 & 23.94 & 43.14 & 33.68 \\
     & vidtome & 11.52 & 9.59 & 26.61 & 21.74 & 39.75 & 31.90 \\
     & \textbf{StructSAM (Ours)} & \textcolor[HTML]{006400}{\textbf{17.03}} & \textcolor[HTML]{006400}{\textbf{13.96}} & \textcolor[HTML]{006400}{\textbf{34.65}} & \textcolor{blue}{28.35} & \textcolor{blue}{47.57} & \textcolor{blue}{38.89} \\
    \midrule
    \multirow{5}{*}{\rotatebox{90}{ThinObject5K}} & Baseline (No Merging) & 46.70 & 39.02 & 69.04 & 58.54 & \textcolor[HTML]{006400}{\textbf{78.43}} & \textcolor{blue}{67.88} \\
     & ToMeSD & \textcolor{blue}{50.04} & \textcolor{blue}{42.05} & \textcolor{blue}{70.84} & \textcolor{blue}{59.89} & 77.69 & 67.50 \\
     & ALGM & 48.60 & 40.32 & 69.97 & 59.87 & 77.54 & 66.83 \\
     & vidtome & 40.92 & 31.80 & 60.63 & 47.10 & 70.75 & 56.45 \\
     & \textbf{StructSAM (Ours)} & \textcolor[HTML]{006400}{\textbf{52.54}} & \textcolor[HTML]{006400}{\textbf{44.40}} & \textcolor[HTML]{006400}{\textbf{71.54}} & \textcolor[HTML]{006400}{\textbf{60.79}} & \textcolor{blue}{78.43} & \textcolor[HTML]{006400}{\textbf{67.95}} \\
    \midrule
    \multirow{5}{*}{\rotatebox{90}{HRSOD}} & Baseline (No Merging) & 47.33 & 39.67 & 66.18 & 56.37 & 75.68 & 64.88 \\
     & ToMeSD & \textcolor{blue}{49.92} & \textcolor{blue}{42.08} & \textcolor{blue}{66.60} & \textcolor{blue}{56.99} & \textcolor{blue}{75.77} & \textcolor{blue}{65.13} \\
     & ALGM & 47.46 & 38.79 & 64.82 & 54.13 & 73.96 & 62.51 \\
     & vidtome & 36.96 & 29.02 & 54.68 & 43.30 & 68.00 & 53.84 \\
     & \textbf{StructSAM (Ours)} & \textcolor[HTML]{006400}{\textbf{52.07}} & \textcolor[HTML]{006400}{\textbf{44.10}} & \textcolor[HTML]{006400}{\textbf{68.62}} & \textcolor[HTML]{006400}{\textbf{58.74}} & \textcolor[HTML]{006400}{\textbf{76.34}} & \textcolor[HTML]{006400}{\textbf{65.90}} \\
    \midrule
    \multirow{5}{*}{\rotatebox{90}{COIFT}} & Baseline (No Merging) & 37.51 & 29.52 & 58.08 & 45.57 & \textcolor[HTML]{006400}{\textbf{69.73}} & \textcolor[HTML]{006400}{\textbf{55.15}} \\
     & ToMeSD & \textcolor{blue}{40.29} & \textcolor{blue}{31.14} & \textcolor{blue}{58.67} & \textcolor{blue}{45.62} & 69.05 & 54.31 \\
     & ALGM & 38.81 & 28.56 & 56.23 & 41.54 & 66.16 & 49.18 \\
     & vidtome & 27.73 & 19.29 & 46.02 & 31.97 & 57.80 & 40.46 \\
     & \textbf{StructSAM (Ours)} & \textcolor[HTML]{006400}{\textbf{41.42}} & \textcolor[HTML]{006400}{\textbf{31.97}} & \textcolor[HTML]{006400}{\textbf{59.32}} & \textcolor[HTML]{006400}{\textbf{46.26}} & \textcolor{blue}{69.64} & \textcolor{blue}{54.92} \\
    \bottomrule
  \end{tabular}
\end{table}

\begin{figure}[!hbt]
\centering
\includegraphics[width=0.9\linewidth]{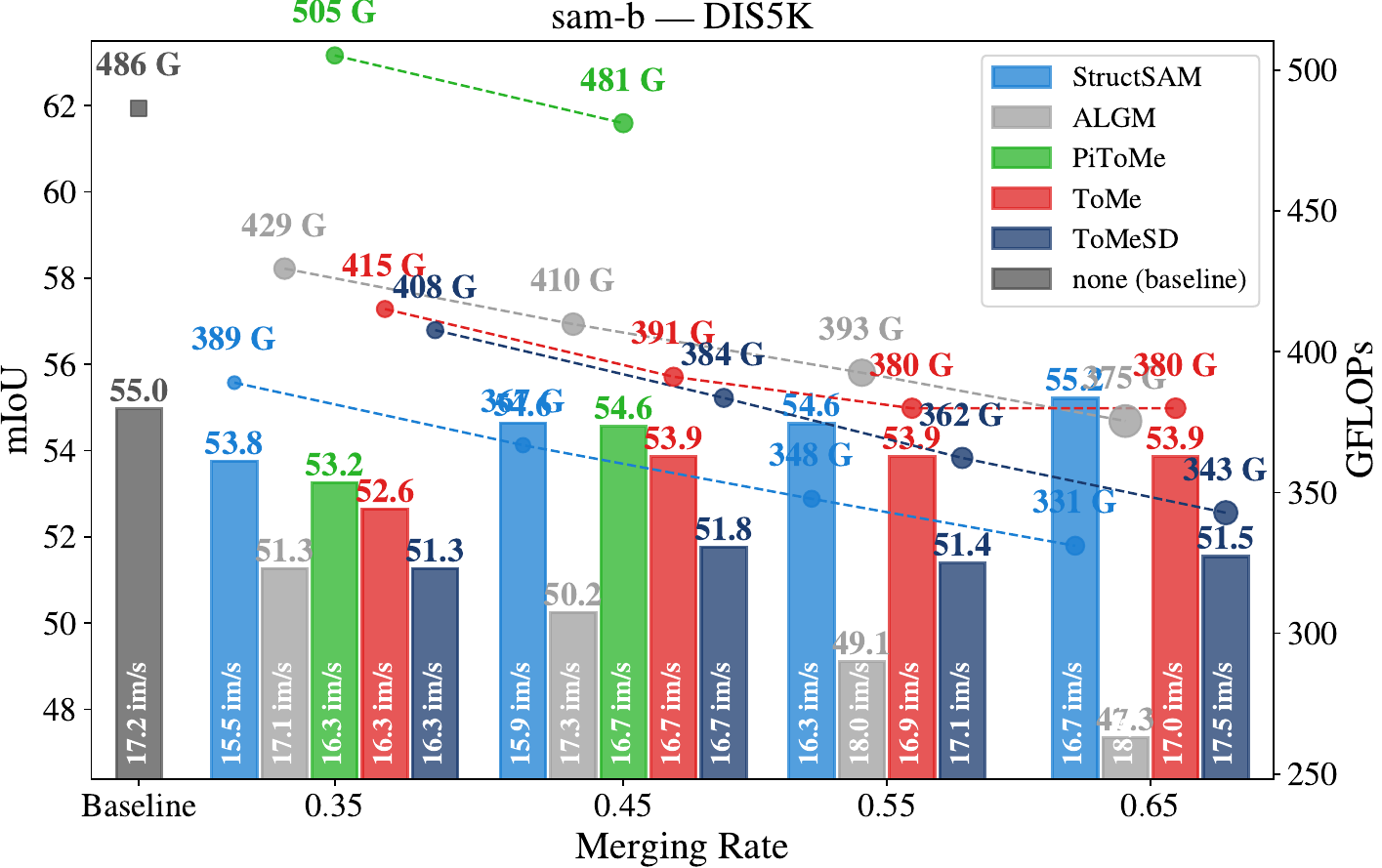}
\caption{Results on DIS5K dataset, SAM with ViT-B backbone.}
\label{fig:samb_flops_ds0}
\end{figure}

\begin{figure}[!hbt]
\centering
\includegraphics[width=0.9\linewidth]{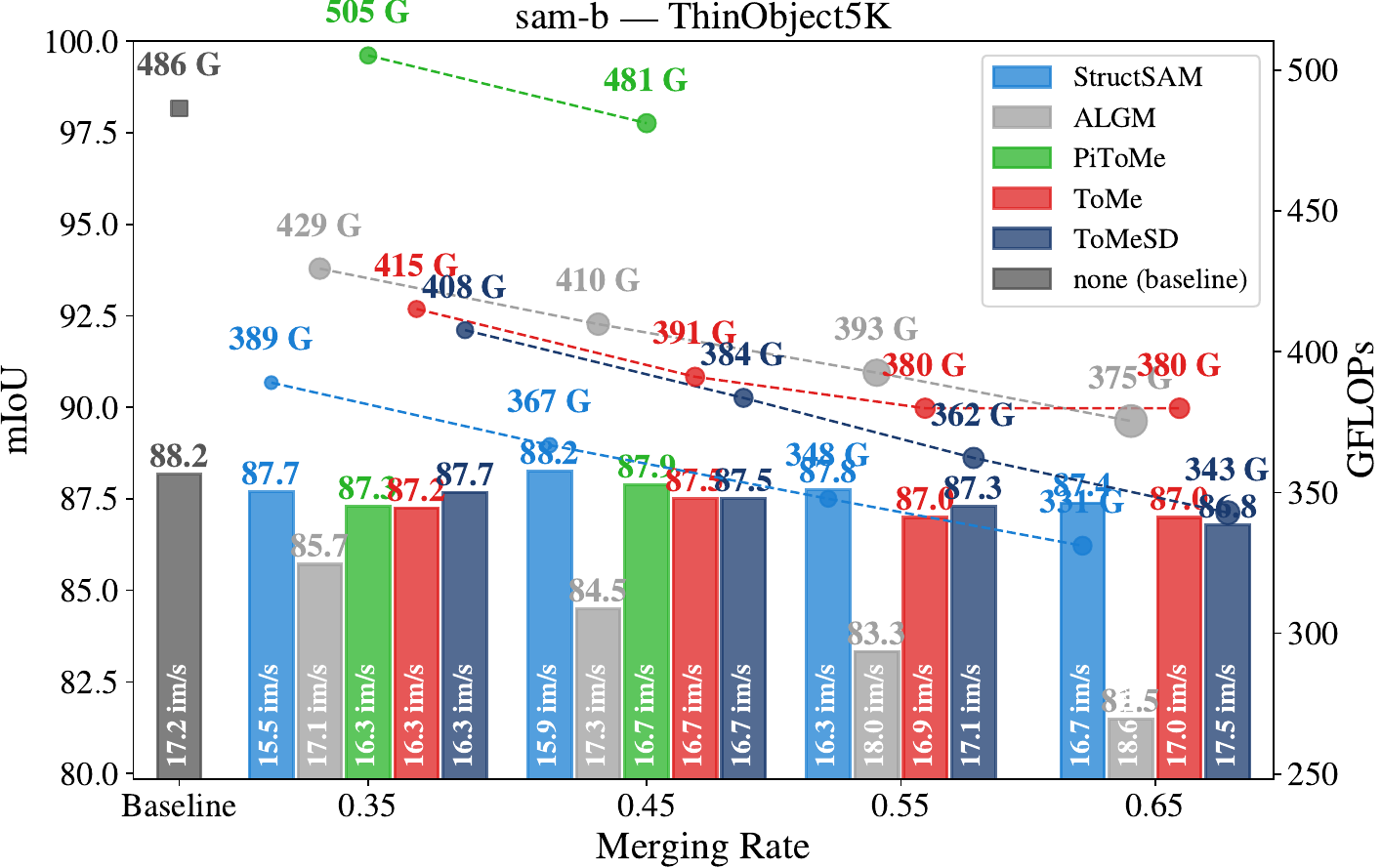}
\caption{Results on ThinObject5K dataset, SAM with ViT-B backbone.}
\label{fig:samb_flops_ds1}
\end{figure}



\begin{figure}[!hbt]
\centering
\includegraphics[width=0.9\linewidth]{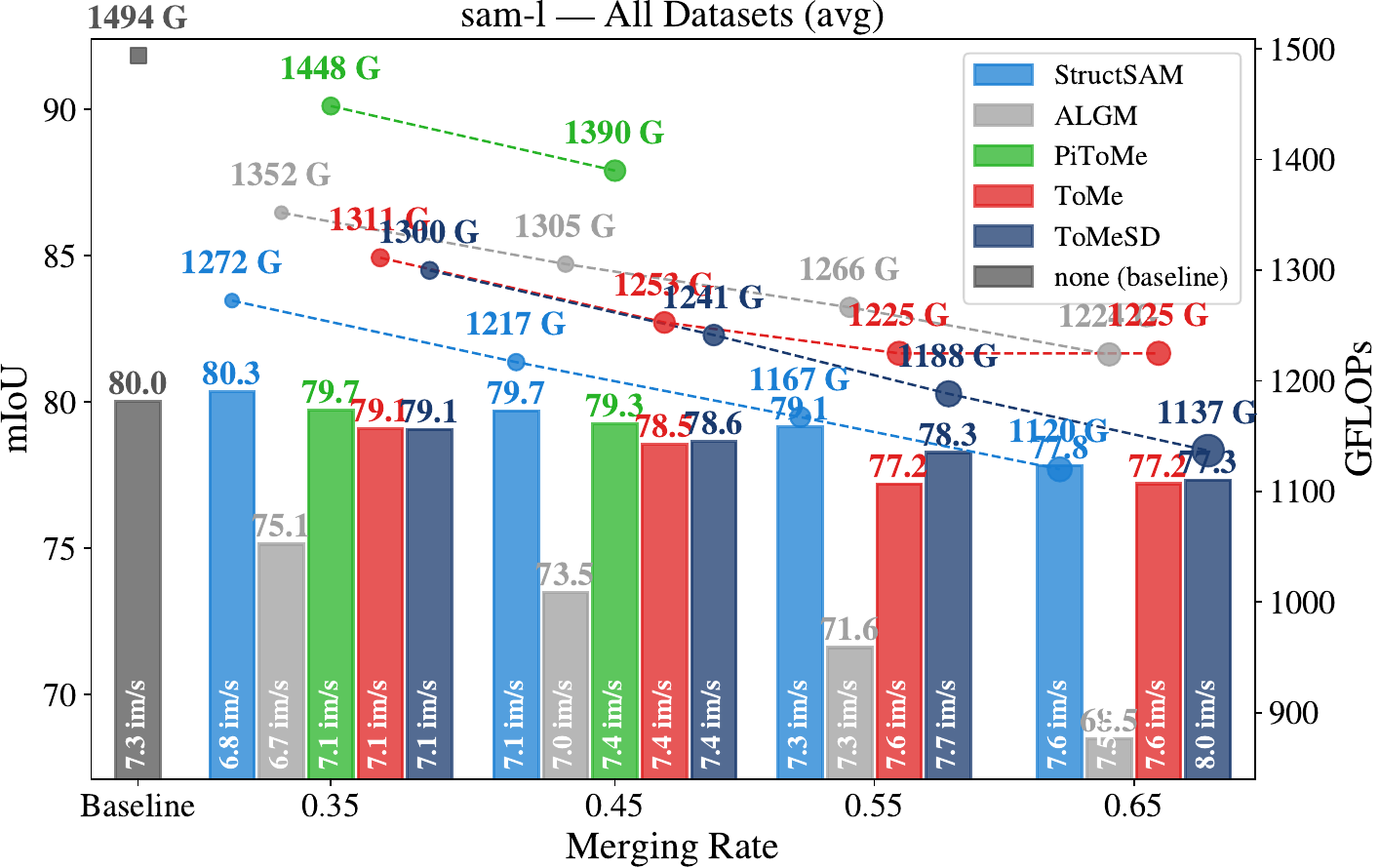}
\caption{Average results on all 4 datasets, SAM with ViT-L backbone.}
\label{fig:saml_flops_ds0}
\end{figure}




\begin{figure}[!hbt]
\centering
\includegraphics[width=0.9\linewidth]{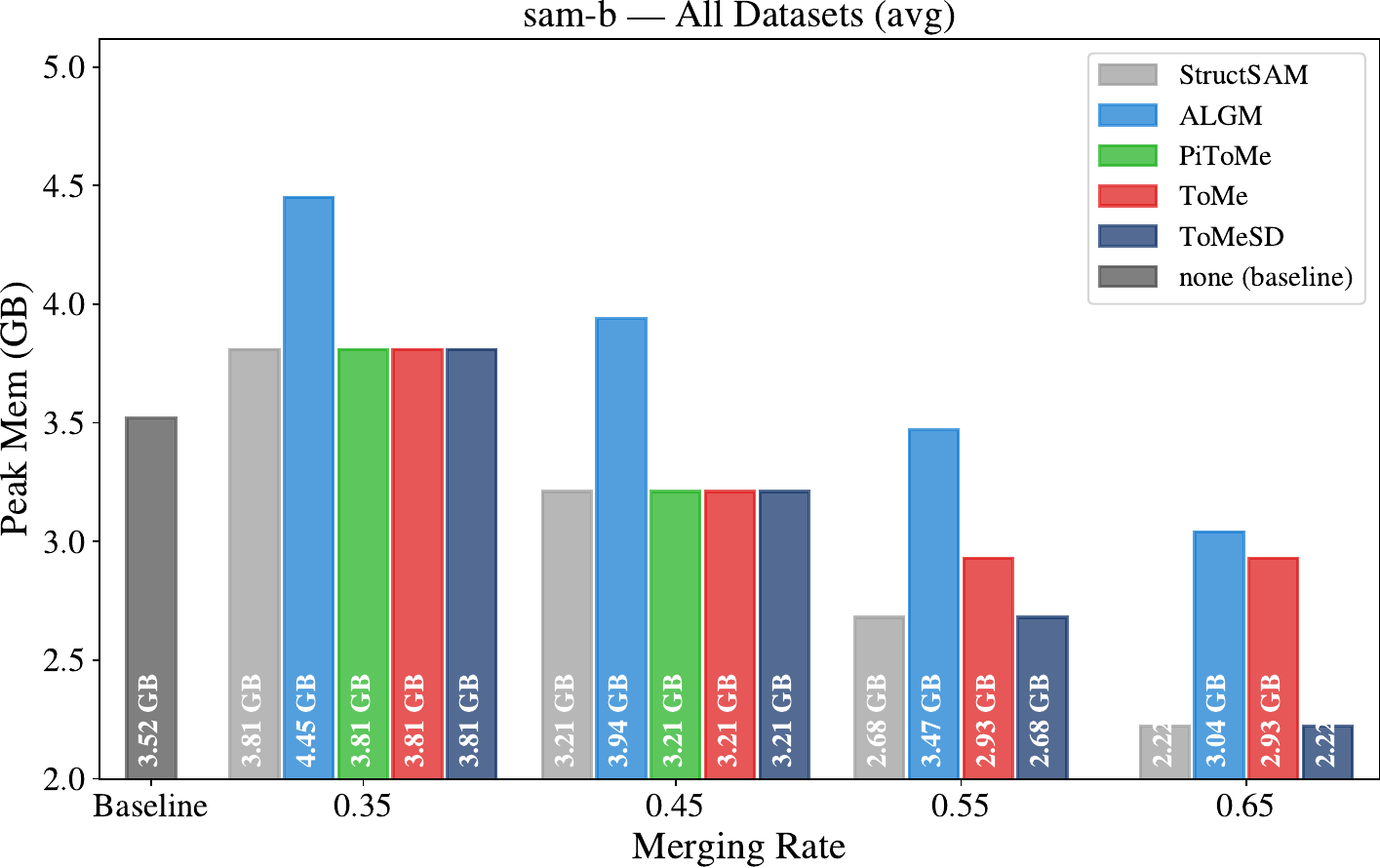}
\caption{Average peak memory results on all 4 datasets, SAM with ViT-b backbone. Note that ToMe does not applicable at above 50\% merging rate, we note the memory consumption of ToMe at 50\% for 0.55 and 0.65}
\label{fig:samb_peakmem_ds0}
\end{figure}

\clearpage

\begin{figure}[!hbt]
\centering
\includegraphics[width=0.9\linewidth]{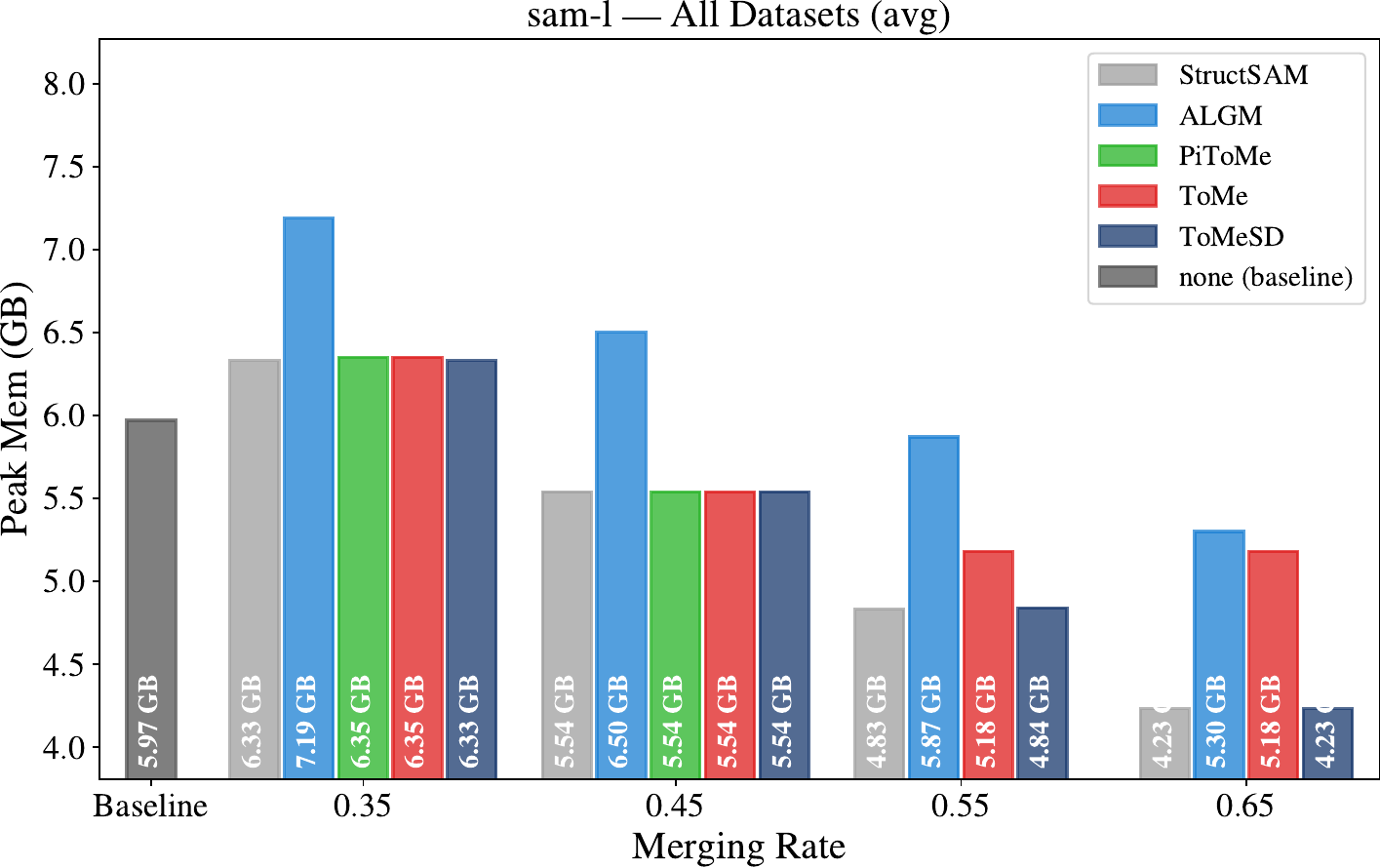}
\caption{Average peak memory results on all 4 datasets, SAM with ViT-l backbone. Note that ToMe does not applicable at above 50\% merging rate, we note the memory consumption of ToMe at 50\% for 0.55 and 0.65}
\label{fig:samb_peakmem_ds1}
\end{figure}

\section{Formal Proofs and Statements of Theoretical Guarantee for \cref{sec_theory_structsam}}
\label{appendix_proofs}

This appendix provides formal statements and proofs supporting the informal
\cref{thm_structsam_spectrum_stability} in the main paper.
We analyse StructSAM’s token merging inside an attention block through spectral graph theory:
merging induces a graph coarsening map on a token graph, while the unmerging step corresponds to
a canonical lifting that restores the original token resolution required by dense mask prediction.

\paragraph{Layerwise spectral discrepancy (main paper).}
At encoder layer \(\ell\), tokens in each attention window \(\mathcal{P}_{\ell,k}\) define a weighted graph
\(\mathcal{G}_{\ell,k}\) with normalized Laplacian \(\mathcal{L}_{\ell,k}\).
After merging and lifting, we obtain a lifted graph \(\mathcal{G}_{\ell,k,l}\) and its Laplacian
\(\mathcal{L}_{\ell,k,l}\).
We measure structural distortion at layer \(\ell\) via
\[
\sd_{\ell}
\;\triangleq\;
\sum_{k=1}^{K_\ell}
\bigl\|
\bslambda_{\ell,k}
-
\bslambda_{\ell,k,l}
\bigr\|_1,
\]
where \(\bslambda_{\ell,k}\) and \(\bslambda_{\ell,k,l}\) are eigenvalues of the original and lifted
normalized Laplacians, respectively.

\subsection{Preliminaries: coarsening, lifting, and eigenvalue inclusion}

We reuse \emph{Graph Coarsening} and \emph{Graph Lifting} (\cref{def_Gc,def_Gl})
window-wise to interpret token merging as a graph coarsening operation and to define the lifted proxy
used in our spectral analysis.

\begin{definition}[Graph Coarsening]\label{def_Gc}
Let \(\cG(\cV,\cE,\bfW)\) be a weighted graph with \(|\cV|=N\) and adjacency matrix
\(\bfW \in \mathbb{R}^{N\times N}\).
Let \(\cP=\{\cV_i\}_{i\in[n]}\) be a partition of \(\cV\) into \(n\) disjoint subsets.
The \emph{coarsened graph} of \(\cG\) with respect to \(\cP\) is the weighted graph
\(\cG_c(\cV_c,\cE_c,\bfW_c)\), where each subset \(\cV_i\) is collapsed into a single node \(\nu_i\in\cV_c\).
Its adjacency entries are defined by block-averaging:
\[
\bfW_c[i,j]
\;=\;
\frac{1}{|\cV_i|\,|\cV_j|}
\sum_{u \in \cV_i}\sum_{v \in \cV_j} \bfW[u,v],
\qquad i,j \in [n].
\]
Let \(\bfD\) be the degree matrix of \(\cG\) with \(\bfD[p,p]=d_p:=\sum_{q=1}^{N}\bfW[p,q]\),
and define the combinatorial and normalized Laplacians:
\[
\bfL = \bfD - \bfW,
\qquad
\cL = \bfI_N - \bfD^{-1/2}\bfW\bfD^{-1/2}.
\]
Similarly define \(\bfD_c\), \(\bfL_c=\bfD_c-\bfW_c\), and
\(\cL_c=\bfI_n-\bfD_c^{-1/2}\bfW_c\bfD_c^{-1/2}\) for \(\cG_c\).
We denote eigenvalues and eigenvectors of \(\cL\) by \((\bslambda,\bfu)\) and those of \(\cL_c\) by \((\bslambda_c,\bfu_c)\).
\end{definition}

\begin{definition}[Graph Lifting]\label{def_Gl}
Given a coarsened graph \(\cG_c(\cV_c,\cE_c,\bfW_c)\) induced by partition \(\cP=\{\cV_i\}_{i\in[n]}\),
the \emph{lifted graph} \(\cG_l(\cV,\cE_l,\bfW_l)\) is defined on the original node set \(\cV\) by
\[
\bfW_l[u,v] \;=\; \bfW_c[i,j],
\qquad \forall\, u\in \cV_i,\ v\in \cV_j,\ i,j\in[n].
\]
Let \(\bfD_l\) be the degree matrix of \(\cG_l\) with \(\bfD_l[p,p]=d_{l p}:=\sum_{q=1}^{N}\bfW_l[p,q]\),
and define
\[
\bfL_l = \bfD_l - \bfW_l,
\qquad
\cL_l = \bfI_N - \bfD_l^{-1/2}\bfW_l\bfD_l^{-1/2}.
\]
We denote eigenvalues and eigenvectors of \(\cL_l\) by \((\bslambda_l,\bfu_l)\).
\end{definition}

\begin{lemma}[Eigenvalue inclusion under lifting]\label{lem_structsam_eig_inclusion}
Let \(\cG_c\) be coarsened from \(\cG\) by a partition \(\cP\), and let \(\cG_l\) be the lifted graph.
Then the eigenvalues of \(\cL_l\) contain all eigenvalues of \(\cL_c\), and the remaining \((N-n)\)
eigenvalues equal \(1\).
\end{lemma}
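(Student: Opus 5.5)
The plan is to write the lifted graph in matrix form and reduce everything to a similarity argument. Let \(\bfP\in\{0,1\}^{N\times n}\) be the partition indicator matrix, \(\bfP[u,i]=1\) iff \(u\in\cV_i\). By \cref{def_Gl}, \(\bfW_l=\bfP\bfW_c\bfP^\top\). For \(u\in\cV_i\) the lifted degree is \(d_{l u}=\sum_j |\cV_j|\,\bfW_c[i,j]\), which is constant on each block. Hence \(\bfD_l=\bfP\,\bfD_c'\,(\bfP^\top\bfP)^{-1}\bfP^\top\) restricted to the diagonal, with \(\bfD_c':=\mathrm{diag}(\bfW_c\mathbf{s})\) and \(\mathbf{s}=(|\cV_1|,\dots,|\cV_n|)^\top\). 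Concretely, \(\bfD_l^{-1/2}\bfP=\bfP\,\bfD_c'^{-1/2}\). Therefore
\[
\bfI_N-\cL_l=\bfD_l^{-1/2}\bfP\bfW_c\bfP^\top\bfD_l^{-1/2}=\bfP\,\bfD_c'^{-1/2}\bfW_c\bfD_c'^{-1/2}\bfP^\top .
\]

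\textbf{Step 1 (the \(N-n\) unit eigenvalues).} The matrix above has rank at most \(n\). Every \(x\) in the \((N-n)\)-dimensional space \(\ker\bfP^\top\) (vectors summing to zero on each block) is annihilated by it, so \(\cL_l x=x\). This gives \(N-n\) orthogonal eigenvectors of \(\cL_l\) with eigenvalue \(1\).

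\textbf{Step 2 (the remaining \(n\) eigenvalues).} On the complement \(\mathrm{range}(\bfP)\), write \(x=\bfP\bfS^{-1/2}y\) with \(\bfS=\bfP^\top\bfP=\mathrm{diag}(\mathbf{s})\), which is an isometry from \(\mathbb{R}^n\) onto \(\mathrm{range}(\bfP)\). Then \(\bfI_N-\cL_l\) acts on \(y\) as the symmetric \(n\times n\) matrix
\[
\bfS^{1/2}\bfD_c'^{-1/2}\bfW_c\bfD_c'^{-1/2}\bfS^{1/2}.
\]
So the remaining eigenvalues of \(\cL_l\) are exactly those of \(\bfI_n-\bfS^{1/2}\bfD_c'^{-1/2}\bfW_c\bfD_c'^{-1/2}\bfS^{1/2}\), the normalized Laplacian of the coarse graph whose nodes carry masses \(|\cV_i|\). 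Combining Steps 1 and 2 and counting dimensions (\(n+(N-n)=N\)) finishes the proof, once this matrix is identified with \(\cL_c\).

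\textbf{Main obstacle.} The obstacle is precisely this identification with \(\cL_c\). With \(\bfD_c\) taken as the plain row sums of \(\bfW_c\), the Step 2 matrix equals \(\bfI_n-\bfD_c^{-1/2}\bfW_c\bfD_c^{-1/2}\) only when all blocks have equal size \(m\). In that case \(\bfS=m\bfI\), \(\bfD_c'=m\bfD_c\), and the factors of \(m\) cancel exactly. StructSAM's partition mixes \(s\times s\) mergeable cells with singleton protected tokens, so block sizes are generally unequal. I would therefore first check whether the intended convention is the mass-weighted one standard in spectral coarsening (Loukas; Jin et al.), namely \(\bfD_c:=\mathrm{diag}(\bfW_c\mathbf{s})\) together with the symmetric normalization by \(\bfS^{1/2}\). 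Under that convention the identity is exact and the lemma follows verbatim. Otherwise, I would state the lemma for equal-size partitions (e.g.\ window-wise, where all cells have size \(s^2\)) and note the mass-weighted generalization.
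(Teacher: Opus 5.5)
Your argument is the paper's argument made explicit. The paper likewise splits off the block-constant subspace \(\mathrm{range}(\mathbf{P})\), asserts without computation that \(\cL_l\) restricted to it is similar to \(\cL_c\), and takes eigenvalue \(1\) with multiplicity \(N-n\) from \(\ker\mathbf{P}^\top\). One small slip: \(\bfD_l=\mathrm{diag}(\mathbf{P}\bfW_c\mathbf{s})\), with no factor \((\mathbf{P}^\top\mathbf{P})^{-1}\); the identity you actually use, \(\bfD_l^{-1/2}\mathbf{P}=\mathbf{P}\,\mathrm{diag}(\bfW_c\mathbf{s})^{-1/2}\), is correct.

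Your obstacle is real, and it is a defect of the statement, not of your proof. The restricted matrix is the normalized Laplacian of the summed-weight coarse graph \(\mathbf{P}^\top\bfW\mathbf{P}=\mathbf{S}\bfW_c\mathbf{S}\). That is exactly your mass-weighted convention, and it lifts to the same \(\bfW_l\). With Definition~\ref{def_Gc} as written, the inclusion genuinely fails for unequal blocks. For example, take \(N=3\), \(\cV_1=\{1,2\}\), \(\cV_2=\{3\}\), \(\bfW[1,2]=2\), \(\bfW[1,3]=\bfW[2,3]=1\). This gives \(\bfW_c=\bigl(\begin{smallmatrix}1&1\\1&0\end{smallmatrix}\bigr)\) and \(\cL_c\)-spectrum \(\{0,3/2\}\), while \(\cL_l\) has spectrum \(\{0,1,4/3\}\); the summed-weight coarse Laplacian has \(\{0,4/3\}\), as your Step 2 predicts.

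StructSAM's partitions mix singleton protected tokens with \(s^2\)-token merged cells, so your equal-size fallback does not cover the case the paper needs. The summed-weight (mass-weighted) reading of \(\cL_c\) is therefore the repair to adopt, and with it your Steps 1 and 2 prove the lemma as stated.
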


\begin{proof}
A standard lifting argument shows that \(\cL_l\) has an invariant subspace of vectors constant on each cluster,
on which \(\cL_l\) is similar to \(\cL_c\). The orthogonal complement contributes eigenvalue \(1\) with multiplicity \(N-n\).
\end{proof}

\subsection{Token graphs for windowed and global attention}

Fix an encoder layer \(\ell\) with window partition \(\{\mathcal{P}_{\ell,k}\}_{k=1}^{K_\ell}\), where \(K_\ell\ge 1\).
For local-attention layers, \(\mathcal{P}_{\ell,k}\) are spatial windows; for global attention layers,
\(K_\ell=1\) and \(\mathcal{P}_{\ell,1}=\mathcal{X}\).
For each window \((\ell,k)\), define a weighted token graph
\(\cG_{\ell,k}(\cV_{\ell,k},\cE_{\ell,k},\bfW_{\ell,k})\) on the tokens in \(\mathcal{P}_{\ell,k}\).
Let its normalized Laplacian be
\[
\cL_{\ell,k}
\;=\;
\bfI_{N_{\ell,k}}-\bfD_{\ell,k}^{-1/2}\bfW_{\ell,k}\bfD_{\ell,k}^{-1/2},
\qquad N_{\ell,k}=|\cV_{\ell,k}|.
\]

StructSAM performs merging within each window (equivalently, within grid cells aligned to the window),
producing a coarsened graph \(\cG_{\ell,k,c}\); lifting yields \(\cG_{\ell,k,l}\) on \(N_{\ell,k}\) nodes.
Let \(\bslambda_{\ell,k}\) and \(\bslambda_{\ell,k,l}\) be eigenvalues of \(\cL_{\ell,k}\) and \(\cL_{\ell,k,l}\), respectively.

\subsection{Merge correctness event}

Fix \((\ell,k)\). Let \(\cP_{0,\ell,k}^{(s)}=\{\cV_{0,\ell,k,1}^{(s)},\dots,\cV_{0,\ell,k,s_{\ell,k}}^{(s)}\}\)
be an (unknown) semantic partition of the window at merge step \(s\) (e.g., local regions separated by edges).
At step \(s\), the algorithm merges a pair \((v_{a_{\ell,k,s}},v_{b_{\ell,k,s}})\) inside that window.
Define the within-region merge event
\begin{equation}\label{eq_event_E_lks}
\mathcal{E}_{\ell,k,s}
\;\triangleq\;
\left\{\exists i\in[s_{\ell,k}]\ \text{s.t.}\ v_{a_{\ell,k,s}},v_{b_{\ell,k,s}}\in \cV_{0,\ell,k,i}^{(s)}\right\}.
\end{equation}

\subsection{Assumptions}\label{sec_ass}

We keep \cref{ass_A1,ass_A2} identical in spirit to PiToME’s Theorem~1 and establish a
gradient-separation \cref{ass_A3_gradsep} that matches StructSAM’s flatness screening rule and implies
\(\delta_{\ell,k,s}\to 0\).

\begin{assumption}[Within-region concentration]\label{ass_A1}
For each merge step \(s\) and each true part \(\cV_{0,\ell,k,i}^{(s)}\),
\[
\mathbb{E}\big[\cos(v_u,v_v)\big]\to 1,
\qquad \forall\, v_u,v_v\in \cV_{0,\ell,k,i}^{(s)}.
\]
\end{assumption}

\begin{assumption}[Margin across regions]\label{ass_A2}
There exists a margin \(m\in(0,1)\) such that for all \(i\neq j\),
\[
\cos(v_u,v_v)\ \ge\ m\ >\ \cos(v_u,v_w),
\qquad
\forall\, v_u,v_v\in \cV_{0,\ell,k,i}^{(s)},\ \forall\, v_w\in \cV_{0,\ell,k,j}^{(s)}.
\]
\end{assumption}

\begin{assumption}[Gradient separation for flatness screening]\label{ass_A3_gradsep}
Fix a layer \(\ell\) and an attention window \(\mathcal{P}_{\ell,k}\).
Partition \(\mathcal{P}_{\ell,k}\) into disjoint grid cells
\(\{\mathcal{C}_{\ell,k,m}\}_{m=1}^{M_{\ell,k}}\) aligned to window geometry, each of size \(s\times s\).
Let \(S_\ell(x)=\mathbf{G}^{(\ell)}(x)\) be the feature gradient–based energy score.
Define the cell flatness score
\[
\phi_{\ell,k}(\mathcal{C}_{\ell,k,m}) \;:=\; -\max_{x\in \mathcal{C}_{\ell,k,m}} S_\ell(x).
\]
StructSAM selects the mergeable-cell set \(\mathcal{M}^{\mathrm{mer}}_{\ell,k}\) as the \(\rho M_{\ell,k}\) cells
with largest \(\phi_{\ell,k}\), and defines the protected set
\(\mathcal{M}^{\mathrm{pro}}_{\ell,k}=[M_{\ell,k}]\setminus \mathcal{M}^{\mathrm{mer}}_{\ell,k}\).

Assume there exist constants \(0<\tau_{\mathrm{in}}<\tau_{\mathrm{bd}}\) and a function
\(\delta_{\ell,k}(s)\in(0,1)\) such that for each merge step \(s\):
\begin{enumerate}[label=(A3\alph*)]
\item \textbf{Boundary-gradient lower bound.}
If a cell \(\mathcal{C}_{\ell,k,m}\) intersects two or more true parts of \(\cP^{(s)}_{0,\ell,k}\), then
\[
\max_{x\in \mathcal{C}_{\ell,k,m}} S_\ell(x) \;\ge\; \tau_{\mathrm{bd}}
\quad\text{with probability at least } 1-\delta_{\ell,k}(s).
\]
\item \textbf{Interior-gradient upper bound.}
If a cell \(\mathcal{C}_{\ell,k,m}\) is fully contained in a single true part of \(\cP^{(s)}_{0,\ell,k}\), then
\[
\max_{x\in \mathcal{C}_{\ell,k,m}} S_\ell(x) \;\le\; \tau_{\mathrm{in}}
\quad\text{with probability at least } 1-\delta_{\ell,k}(s).
\]
\item \textbf{Mergeable-cell budget.}
The merge rate \(\rho\) satisfies
\[
\rho \;\le\; \frac{\#\{\text{interior cells in }(\ell,k)\}}{M_{\ell,k}},
\]
so that mergeable cells can be chosen exclusively among interior cells whenever (A3a)–(A3b) hold.
\end{enumerate}
Moreover, \(\delta_{\ell,k}(s)\to 0\) as token resolution increases (equivalently, as cell size decreases at fixed image resolution).
\end{assumption}

\begin{lemma}[Gradient separation implies boundary protection]\label{lem_gradsep_implies_protection}
Under \cref{ass_A3_gradsep}, with probability at least \(1-2\,\delta_{\ell,k}(s)\), every mergeable cell selected by StructSAM
is an interior cell (i.e., it does not intersect a boundary). In particular, defining
\[
\delta_{\ell,k,s} \;:=\; 2\,\delta_{\ell,k}(s),
\]
we have \(\delta_{\ell,k,s}\to 0\).
\end{lemma}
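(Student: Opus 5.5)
The approach is a union bound over two ``good'' events, followed by a deterministic argument about the sorting rule. Fix \((\ell,k)\) and the merge step \(s\). Define two events. The first, \(\mathcal{B}\), is that every boundary (mixed) cell satisfies \(\max_{x\in\mathcal{C}}S_\ell(x)\ge\tau_{\mathrm{bd}}\). The second, \(\mathcal{I}\), is that every interior cell satisfies \(\max_{x\in\mathcal{C}}S_\ell(x)\le\tau_{\mathrm{in}}\).

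I will read (A3a) and (A3b) as \emph{uniform} statements: each holds simultaneously over all cells of the relevant type with probability at least \(1-\delta_{\ell,k}(s)\). This is the reading under which the constant \(2\) in the claim is correct. If (A3a)–(A3b) are instead meant per cell, the same argument goes through with \(2\delta_{\ell,k}(s)\) replaced by \(M_{\ell,k}\delta_{\ell,k}(s)\). One would then need \(M_{\ell,k}\delta_{\ell,k}(s)\to 0\), and I would flag this as the main subtlety rather than the deterministic part. Under the uniform reading, the union bound gives \(\Pr(\mathcal{B}\cap\mathcal{I})\ge 1-2\delta_{\ell,k}(s)\).

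Next, on \(\mathcal{B}\cap\mathcal{I}\), I argue deterministically. Every interior cell has flatness \(\phi_{\ell,k}\ge-\tau_{\mathrm{in}}\). Every boundary cell has \(\phi_{\ell,k}\le-\tau_{\mathrm{bd}}<-\tau_{\mathrm{in}}\), since \(\tau_{\mathrm{in}}<\tau_{\mathrm{bd}}\). Hence all interior cells are strictly flatter than all boundary cells, and in the descending sort by \(\phi_{\ell,k}\) they occupy the top positions. StructSAM selects the \(\rho M_{\ell,k}\) cells of largest \(\phi_{\ell,k}\). By (A3c), \(\rho M_{\ell,k}\le\#\{\text{interior cells}\}\), so the selected prefix lies entirely within the interior block. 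Therefore every cell in \(\mathcal{M}^{\mathrm{mer}}_{\ell,k}\) is an interior cell. The strict gap between \(-\tau_{\mathrm{bd}}\) and \(-\tau_{\mathrm{in}}\) means tie-breaking in the sort cannot place a boundary cell ahead of an interior one, so no extra care is needed there.

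Finally, set \(\delta_{\ell,k,s}:=2\delta_{\ell,k}(s)\). Since \(\delta_{\ell,k}(s)\to 0\) by the last clause of \cref{ass_A3_gradsep}, we get \(\delta_{\ell,k,s}\to 0\) immediately. As a remark, this also links to the merge event \(\mathcal{E}_{\ell,k,s}\) of \eqref{eq_event_E_lks}. Any merge pair inside an interior cell lies in a single true part, so \(\Pr(\mathcal{E}_{\ell,k,s})\ge 1-\delta_{\ell,k,s}\), which is the form needed downstream for the formal version of \cref{thm_structsam_spectrum_stability}.
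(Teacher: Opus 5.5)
Your proof is correct and follows the paper's argument. On the joint event that (A3a) and (A3b) hold, the strict gap \(\tau_{\mathrm{in}}<\tau_{\mathrm{bd}}\) makes every interior cell flatter than every mixed cell, (A3c) keeps the selected top-\(\rho M_{\ell,k}\) block inside the interior cells, and a union bound over the two failure events gives \(1-2\delta_{\ell,k}(s)\). Your remark that the constant \(2\) requires reading (A3a)–(A3b) as holding simultaneously over all cells (otherwise the bound becomes \(M_{\ell,k}\delta_{\ell,k}(s)\)) makes explicit the interpretation the paper uses implicitly.
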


\begin{proof}
On the event that (A3a) and (A3b) hold, every boundary cell has \(\max S_\ell \ge \tau_{\mathrm{bd}}\) while every interior cell has
\(\max S_\ell \le \tau_{\mathrm{in}}\), with \(\tau_{\mathrm{in}}<\tau_{\mathrm{bd}}\).
Hence all cells with smallest values of \(\max S_\ell\) are interior. By (A3c), StructSAM can select its mergeable-cell budget from these interior cells.
A union bound over the two failure events yields probability at least \(1-2\,\delta_{\ell,k}(s)\).
\end{proof}

\begin{lemma}[From gradient screening to merge correctness]\label{lem_gradsep_implies_correct_merge}
Under \cref{ass_A2,ass_A3_gradsep}, for each merge step \(s\),
\[
\mathbb{P}\!\left(\mathcal{E}_{\ell,k,s}\right)\ \ge\ 1-\delta_{\ell,k,s},
\qquad \text{where } \delta_{\ell,k,s}=2\,\delta_{\ell,k}(s)\to 0.
\]
\end{lemma}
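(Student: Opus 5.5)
The plan is to reduce the statement to \cref{lem_gradsep_implies_protection} by showing that, on the protection event, every merge performed by StructSAM at step \(s\) pairs two tokens from the same true part. Let \(\Omega_{\ell,k,s}\) denote the event on which (A3a) and (A3b) hold simultaneously for all cells of window \(\mathcal{P}_{\ell,k}\). By \cref{lem_gradsep_implies_protection}, \(\mathbb{P}(\Omega_{\ell,k,s})\ge 1-2\delta_{\ell,k}(s)=1-\delta_{\ell,k,s}\), and on \(\Omega_{\ell,k,s}\) every cell in \(\mathcal{M}^{\mathrm{mer}}_{\ell,k}\) is an interior cell, i.e.\ it is contained in a single part \(\cV^{(s)}_{0,\ell,k,i}\). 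So it suffices to prove the inclusion \(\Omega_{\ell,k,s}\subseteq\mathcal{E}_{\ell,k,s}\) from \eqref{eq_event_E_lks}.

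For this inclusion, I would use the structure of the merge step. StructSAM only merges a source token \(v_{a_{\ell,k,s}}\in\mathcal{S}_m\) into the destination \(v_{b_{\ell,k,s}}=x_{m,\mathrm{dst}}\) of the same mergeable cell \(\mathcal{C}_{\ell,k,m}\). Tokens of protected cells are never merged. The bipartite soft matching is restricted to the cell, and the destination is the \(\arg\min\) of \(\mathbf{G}^{(\ell)}\) inside that cell, so both endpoints of any merged pair lie in \(\mathcal{C}_{\ell,k,m}\). On \(\Omega_{\ell,k,s}\) this cell lies in one true part, so both endpoints lie in the same \(\cV^{(s)}_{0,\ell,k,i}\), which is exactly \(\mathcal{E}_{\ell,k,s}\). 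Taking probabilities gives \(\mathbb{P}(\mathcal{E}_{\ell,k,s})\ge\mathbb{P}(\Omega_{\ell,k,s})\ge 1-\delta_{\ell,k,s}\). The limit \(\delta_{\ell,k,s}\to0\) is inherited from the last clause of \cref{ass_A3_gradsep}.

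\cref{ass_A2} enters to make the statement robust if the soft-matching step is allowed to match across cells, as in the ``No-Cell'' ablation. Even then, a destination in an interior cell of part \(i\) has cosine similarity at least \(m\) with same-part tokens and strictly less than \(m\) with other parts. A similarity-maximizing matching would therefore still select a same-part partner whenever one is available, which again preserves \(\mathcal{E}_{\ell,k,s}\).

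I expect the main obstacle to be bookkeeping rather than analysis. The ``merge step \(s\)'' indexing of pairs must be reconciled with StructSAM's one-shot cell-wise merge, and I would do this by treating each source-to-destination assignment \(\pi(t)=t_{\mathrm{dst}}\) as one step. I would also check that conditioning on \(\Omega_{\ell,k,s}\) does not interact with the randomness of the matching. This holds because the matching is deterministic given the features and the cell selection.
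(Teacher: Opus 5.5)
Your proposal is correct and follows essentially the same route as the paper. Both proofs invoke \cref{lem_gradsep_implies_protection} to get that, with probability at least $1-\delta_{\ell,k,s}$, every mergeable cell lies inside a single true part, and then conclude that every merged pair stays within that part. The one difference is where the within-part conclusion comes from: the paper cites \cref{ass_A2} to argue that bipartite soft matching sends each source to a same-part destination, whereas your main line uses the cell-local merge map $\pi(t)=t_{\mathrm{dst}}$ of Algorithm~\ref{alg:algorithm_StructSAM}, which makes \cref{ass_A2} unnecessary, and you keep the \cref{ass_A2} argument only as a fallback in case the matching can cross cell boundaries.
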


\begin{proof}
By \cref{lem_gradsep_implies_protection}, with probability at least \(1-\delta_{\ell,k,s}\),
all mergeable cells are interior (contained in a single true part).
Within an interior cell, StructSAM chooses a destination token inside that same true part.
By the margin assumption \cref{ass_A2}, BSM assigns each source token to a destination in the same true part,
hence no cross-part merges occur and \(\mathcal{E}_{\ell,k,s}\) holds.
\end{proof}

\begin{proposition}[Failure of \cref{ass_A3_gradsep} under coarse cell partitions]\label{prop_A3_failure_coarse_grid}
There exist token graphs satisfying \cref{ass_A1,ass_A2} for which \cref{ass_A3_gradsep} fails solely due to
the choice of cell size, even when the gradient score \(S_\ell=\mathbf{G}^{(\ell)}\) perfectly separates boundary
tokens from interior tokens.

In particular, fix a window \(\mathcal{P}_{\ell,k}\) whose tokens lie on a \(H\times W\) grid, and consider a latent
partition into two true parts separated by a boundary curve that intersects every grid cell of a given
cell partition \(\{\mathcal{C}_{\ell,k,m}\}_{m=1}^{M_{\ell,k}}\).
Then \(\#\{\text{interior cells}\}=0\), and thus the mergeable-cell budget condition (A3c) fails for any \(\rho>0\).
Consequently, StructSAM must select mergeable cells that are boundary/mixed cells, so the boundary-exclusion
conclusion of \cref{ass_A3_gradsep} cannot hold.
\end{proposition}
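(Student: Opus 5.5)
The proof is by explicit construction, followed by a counting argument. I will first build a concrete window and cell partition in which a single boundary curve meets every cell. Then I will check that \cref{ass_A1,ass_A2} hold and that $S_\ell=\mathbf{G}^{(\ell)}$ perfectly separates boundary tokens from interior tokens. Finally, I will read off that (A3c) forces $\rho\le 0$.

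\textbf{Construction.} Take an $H\times W$ window with cells of size $s\times s$, and let $H=W=s$ be a multiple of the cell side so that there is a single cell (or, more generally, a small number of cells). Choose a latent two-part partition $\cV_{0,1}\cup\cV_{0,2}$ whose separating curve, for example a staircase diagonal, passes through every cell. The curve contains at least one token of each part in every cell. Assign features $v_u=e_1$ for $u\in\cV_{0,1}$ and $v_u=e_2$ for $u\in\cV_{0,2}$, with orthonormal $e_1,e_2$.

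\textbf{Checking the assumptions.} Within each part, cosine similarity equals $1$, so \cref{ass_A1} holds trivially. Across parts, cosine similarity equals $0$, so \cref{ass_A2} holds with, say, $m=1/2$. The finite-difference energy $\mathbf{G}$ vanishes at tokens whose neighbours all lie in the same part, and equals at least $\|e_1-e_2\|=\sqrt2$ at tokens adjacent to the curve. Hence $S_\ell$ perfectly separates boundary tokens from interior tokens, with thresholds $\tau_{\mathrm{in}}=0<\tau_{\mathrm{bd}}=\sqrt2$. This shows that (A3a)–(A3b) are not the obstruction.

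\textbf{Conclusion.} Every cell intersects both parts, so $\#\{\text{interior cells}\}=0$, and (A3c) reads $\rho\le 0$, which fails for every $\rho>0$. Since StructSAM selects $\rho M_{\ell,k}\ge1$ cells, each of which is mixed, at least one selected cell contains tokens from both parts. Its destination lies in one part and some source lies in the other, so a cross-part merge occurs and $\mathcal{E}_{\ell,k,s}$ fails. Thus the boundary-exclusion conclusion of \cref{lem_gradsep_implies_protection} cannot hold, and the failure is due solely to the cell size: with $1\times1$ cells, or any refinement in which some cells avoid the curve, interior cells exist.

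\textbf{Main obstacle.} The hardest step is making "a boundary curve intersecting every cell" rigorous for general $M_{\ell,k}>1$ while keeping both parts nonempty in every cell. I would first try a serpentine curve that visits each $s\times s$ block in turn. The simplest fallback is the single-cell case $s=H=W$, which is enough for the existence claim.
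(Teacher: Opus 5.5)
Your proposal is correct and follows essentially the same approach as the paper's proof. The paper also assigns one unit vector to each true part (cross-part cosine $\gamma<m$; your orthonormal $e_1,e_2$ is the case $\gamma=0$), takes a single cell or cells large enough to straddle the boundary so that no interior cell exists, and reads off that (A3c) fails for every $\rho>0$. Your explicit check of gradient separation and your cross-part-merge consequence go beyond the paper, which only assumes separation hypothetically; if you keep them, take $\tau_{\mathrm{in}}\in(0,\sqrt2)$ rather than $0$, since the assumption requires $\tau_{\mathrm{in}}>0$, and fix a replicate-type padding at the window border.
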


\begin{proof}
Construct embeddings as follows. Let all tokens in true part 1 share a unit vector \(u\), and all tokens in
true part 2 share a unit vector \(v\) such that \(u^\top v=\gamma<m\). Then within-part cosine similarity equals \(1\),
so \cref{ass_A1} holds (trivially), and cross-part similarity equals \(\gamma<m\), so \cref{ass_A2} holds.

Now choose a cell partition with no interior cells, i.e.\ every cell intersects both true parts (for example,
take a single cell covering the entire window, or take cells so large that each cell straddles the boundary).
Then \(\#\{\text{interior cells}\}=0\), and the condition (A3c),
\(\rho \le \#\{\text{interior cells}\}/M_{\ell,k}\), fails for any \(\rho>0\).
Hence StructSAM must choose at least one mergeable cell that is mixed.
Even if the gradient score perfectly separates boundary tokens from interior tokens, there are no interior cells
to select, so the boundary-exclusion mechanism cannot be satisfied. 
\end{proof}

\subsection{Practicality of the assumptions in \cref{sec_ass}}\label{sec:a3_practicality_pca}

\paragraph{Why \cref{ass_A1,ass_A2} are plausible in foundation encoders.}
\cref{ass_A1,ass_A2} posit that token embeddings concentrate within coherent regions and admit a
margin across different regions. In SAM-style foundation encoders, this behaviour is empirically supported by
the strong spatial organisation visible in PCA projections of token features.
In \cref{fig:pca_layers}, the \emph{Feat. PCA (0\% merge)} panels exhibit piecewise-smooth colour structure that
aligns with salient objects and backgrounds across early (Layer~0) through deeper layers (Layer~11), suggesting
that tokens within a region occupy a compact neighbourhood in feature space, while tokens across different
regions remain separated. This is consistent with within-region similarity concentration and cross-region
margin, motivating \cref{ass_A1,ass_A2} in practice.

\paragraph{Decomposing \cref{ass_A3_gradsep}: score separation vs.\ geometric budget.}
Assumption~\ref{ass_A3_gradsep} decomposes into two ingredients.

\textbf{Score separation (A3a)–(A3b).}
These conditions require that the cell-wise statistic
\(\max_{x\in\mathcal{C}}\mathbf{G}^{(\ell)}(x)\) separates boundary/mixed cells from interior cells.
This is a standard edge-detection heuristic: a boundary-crossing cell typically contains at least one
high-gradient token, while interior cells remain low-gradient.
The PCA diagnostics in \cref{fig:pca_layers} support this separation across layers even under aggressive merging.
Specifically, comparing \emph{Feat. PCA (0\% merge)} and \emph{Feat. PCA (65\%)} shows that the dominant spatial
structure is largely preserved, while the \emph{Difference} maps remain sparse and localised rather than
diffuse. This indicates that the merge--unmerge perturbation concentrates on a small subset of locations
and does not globally scramble the feature geometry, making it plausible that gradient-energy remains a stable
signal for identifying boundary-sensitive cells across layers.

\textbf{Geometric/budget condition (A3c).}
Condition (A3c) is structural: the mergeable-cell ratio \(\rho\) (equivalently, the cell size \(s\))
must be chosen so that sufficiently many \emph{interior cells} exist to populate the mergeable set.
If \(s\) is too large (or \(\rho\) is too aggressive), interior cells may be absent or too few, forcing the
selection of boundary/mixed cells and invalidating (A3c), as formalised by
\cref{prop_A3_failure_coarse_grid}. In practice, StructSAM’s use of window-aligned cells and moderate cell sizes
typically yields many interior cells per window, so (A3c) is satisfied except in extreme high-merge regimes or
for very thin structures whose width is comparable to the chosen cell size.

\paragraph{Takeaway.}
Together, \cref{fig:pca_layers} and the sparse difference patterns across layers provide empirical support that
the representation geometry of SAM’s encoder makes \cref{ass_A1,ass_A2} and the score-separation component
(A3a)–(A3b) easy to satisfy, while (A3c) highlights the practical trade-off between merge rate and the
availability of interior cells under window-aligned partitioning.

\subsection{Formal theorem matching the main-paper informal statement in \cref{thm_structsam_spectrum_stability}}

\begin{theorem}[Formal: Layerwise spectrum stability of score-guided merging]\label{thm_structsam_spectrum_stability_formal}
Fix an encoder layer \(\ell\) with windows \(\{\mathcal{P}_{\ell,k}\}_{k=1}^{K_\ell}\).
Let \(\sd_{\ell}(\mathrm{SG})\) denote the spectral discrepancy induced by StructSAM’s score-guided merging,
and \(\sd_{\ell}(\mathrm{Base})\) that of a non–score-guided baseline (e.g., random or stride-based dst selection).
Assume bounded degrees and bounded weights in each window:
there exist constants \(0<d_{\min}\le d_{\max}<\infty\) and \(0<w_{\max}<\infty\) such that, for all merge steps,
\[
d_{\min}\ \le\ d_{\ell,k}^{(s)}(i)\ \le\ d_{\max},
\qquad
0\le \bfW_{\ell,k}^{(s)}[i,j]\le w_{\max}.
\]
Then:
\begin{enumerate}
\item Under \cref{ass_A1,ass_A2,ass_A3_gradsep}, we have \(\mathbb{E}[\sd_{\ell}(\mathrm{SG})]\to 0\).
\item If the baseline has a non-vanishing probability of cross-region merging, i.e., there exists \(\delta>0\) such that
\[
\inf_{k,s}\ \mathbb{P}\!\left(\mathcal{E}_{\ell,k,s}\right)\ \le\ 1-\delta,
\]
then \(\liminf\,\mathbb{E}[\sd_{\ell}(\mathrm{Base})]\ge c_0\,\delta>0\) for a constant \(c_0\) depending only on
\((m,d_{\min},d_{\max},w_{\max})\).
\end{enumerate}
\end{theorem}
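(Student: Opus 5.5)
The proof follows the PiToMe template. We decompose the spectral discrepancy into a contribution from correct (within-region) merges and a contribution from incorrect (cross-region) merges. We then bound the first by a vanishing quantity under \cref{ass_A1}, and bound the second from above (for StructSAM) or below (for the baseline) using the probability of $\mathcal{E}_{\ell,k,s}$.

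\textbf{Step 1: reduction to a single window.} Since $\sd_\ell=\sum_{k}\|\bslambda_{\ell,k}-\bslambda_{\ell,k,l}\|_1$ and $K_\ell$ is fixed, it suffices to treat one window $(\ell,k)$. By \cref{lem_structsam_eig_inclusion}, the lifted spectrum is the coarse spectrum $\bslambda_{\ell,k,c}$ padded with $N_{\ell,k}-n$ copies of $1$. So $\|\bslambda_{\ell,k}-\bslambda_{\ell,k,l}\|_1$ can be controlled through $\|\mathcal{L}_{\ell,k}-\mathcal{L}_{\ell,k,l}\|$: we use Weyl/Hoffman--Wielandt, giving $\|\bslambda-\bslambda_l\|_1\le\sqrt{N}\|\mathcal{L}-\mathcal{L}_l\|_F$. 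We also want a matching lower bound, either through a single eigenvalue shift or through a trace/Frobenius comparison.

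\textbf{Step 2: upper bound for StructSAM.} Write $\mathbf{W}-\mathbf{W}_l$ blockwise. On a merged block $\mathcal{V}_i$, $\mathbf{W}_l$ replaces each entry by the block average, so each entry of the difference is bounded by the within-block spread of weights. If $\mathcal{V}_i$ lies inside one true part, \cref{ass_A1} drives the cosine similarities to $1$ in expectation. With weights taken as cosine similarities, bounded by $w_{\max}$, the within-block and cross-block averages therefore converge to their constant limits, and the expected entrywise deviation tends to $0$. Using $d_{\min}\le d\le d_{\max}$, the map $\mathbf{W}\mapsto \mathbf{D}^{-1/2}\mathbf{W}\mathbf{D}^{-1/2}$ is Lipschitz with constant depending on $(d_{\min},d_{\max},w_{\max})$, so $\mathbb{E}\|\mathcal{L}-\mathcal{L}_l\|_F$ is bounded by (vanishing term) $+\,C\sum_s\mathbb{P}(\mathcal{E}^c_{\ell,k,s})$. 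The second term comes from splitting the expectation on $\mathcal{E}_{\ell,k,s}$ and its complement and bounding the deviation crudely by $w_{\max}$ on the complement. By \cref{lem_gradsep_implies_correct_merge}, $\mathbb{P}(\mathcal{E}^c_{\ell,k,s})\le 2\delta_{\ell,k}(s)\to0$. The number of merge steps is finite at fixed merge rate, so both terms vanish and $\mathbb{E}[\sd_\ell(\mathrm{SG})]\to0$.

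\textbf{Step 3: lower bound for the baseline.} Here we condition on $\mathcal{E}^c_{\ell,k,s}$, an event of probability at least $\delta$, on which some block contains $v_u\in\mathcal{V}_{0,i}$ and $v_w\in\mathcal{V}_{0,j}$ with $i\ne j$. By \cref{ass_A2}, the weights from $v_u$ to its own part are at least $m$, while those from $v_w$ to that part are below $m$. Averaging therefore changes the weight row of $v_u$ by a gap bounded below in terms of $m$, and after normalization by $d_{\max}$ the Frobenius difference satisfies $\|\mathcal{L}-\mathcal{L}_l\|_F\ge c_1(m,d_{\min},d_{\max},w_{\max})$. Converting this perturbation lower bound into a spectral $\ell_1$ lower bound is the main obstacle, because operator perturbation does not in general imply eigenvalue movement. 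My first attempt would use a trace identity. Since $\mathrm{tr}(\mathcal{L})=\mathrm{tr}(\mathcal{L}_l)=N$ trivially, I would instead compare $\mathrm{tr}(\mathcal{L}^2)$ with $\mathrm{tr}(\mathcal{L}_l^2)$. Averaging a block strictly decreases $\|\mathbf{D}^{-1/2}\mathbf{W}\mathbf{D}^{-1/2}\|_F^2$ by the variance it removes, which under the margin is at least of order $(m-\gamma)^2/d_{\max}^2$, provided the degree-normalization effects are controlled by the bounds on $d$ and $w$. Since $|\sum(\lambda^2-\lambda_l^2)|\le 2\|\bslambda-\bslambda_l\|_1$ (all eigenvalues lie in $[0,2]$), this yields $\|\bslambda-\bslambda_l\|_1\ge c_0$ on the bad event. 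Taking expectations gives $\mathbb{E}[\sd_\ell(\mathrm{Base})]\ge c_0\delta$, uniformly in resolution, hence the $\liminf$ bound. If the variance step fails because the degree normalization interferes, a fallback is to enlarge the assumptions so that weights are cosine similarities thresholded at $m$. The cross-region merge then visibly changes the Fiedler/cluster eigenvalue near $0$, giving a single-eigenvalue shift bounded below.
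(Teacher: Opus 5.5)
Your part 1 is essentially the paper's argument: you split on $\mathcal{E}_{\ell,k,s}$, use \cref{lem_gradsep_implies_correct_merge} for the bad event, use \cref{ass_A1} with Lipschitz affinities for the good event, and finish with Hoffman--Wielandt. You write it as a direct blockwise bound on $\mathbf{W}-\mathbf{W}_l$ rather than a telescope. Your crude $w_{\max}$ bound on the bad event is actually the correct version of the paper's step, which uses the lower-bound constant $c_0(1-m)$ there. Like the paper, though, you treat the number of merge steps and the $\sqrt{N}$ from Hoffman--Wielandt as fixed, while $\delta_{\ell,k}(s)\to0$ only as resolution grows. So the rate at which $\delta$ vanishes relative to $N$ has to be stated.

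The genuine gap is in part 2. You rightly note that a lower bound on the perturbation does not by itself move eigenvalues. The paper does not resolve this either: its Step 3 pushes $\mathbb{E}[\Delta_{\ell,k,s}]\ge c_0(1-m)\delta$ through $\sd_\ell\le c_{\mathrm{sp}}\sum_s\Delta_{\ell,k,s}$, an inequality pointing the wrong way.

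Your replacement claim is false: one cross-region merge need not lower $\|\mathbf{A}\|_F^2$, with $\mathbf{A}=\mathbf{D}^{-1/2}\mathbf{W}\mathbf{D}^{-1/2}$, by an amount of order $(m-\gamma)^2/d_{\max}^2$. The culprit is not degree normalization. The normalized Laplacian is invariant under $\mathbf{W}\mapsto c\mathbf{W}$, so the bounds on $d$ and $w$ say nothing about entry size. For a dense affinity graph such as your cosine weights, $d\le d_{\max}$ forces the entries, and hence the per-entry gaps coming from \cref{ass_A2}, to be $O(1/N)$. One bad merge then removes only $O(1/N)$ Frobenius mass.

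Concretely:
\begin{itemize}
\item Take $N/2$ tokens with unit feature $a$ and $N/2$ with unit feature $b$, where $a^\top b=\gamma\in[0,m)$. Set $\mathbf{W}=V^\top V/N$, with $V$ the matrix of token features and self-loops included.
\item All degrees equal $(1+\gamma)/2$, all weights are at most $1/N$, and \cref{ass_A1,ass_A2} hold exactly.
\item Merge one pair $(u,w)$ across the two parts, plus any within-part pairs.
\item Since $\mathbf{W}_l=\mathbf{P}\mathbf{W}\mathbf{P}$ (with $\mathbf{P}$ the block-averaging projector) is the Gram matrix of the block-averaged features, within-part merges change nothing and the merged pair carries $(a+b)/2$.
\item Degrees are unchanged, so your identity $\|\mathbf{A}\|_F^2-\|\mathbf{A}_l\|_F^2=\|\mathbf{A}-\mathbf{A}_l\|_F^2$ even holds exactly.
\item Both normalized adjacencies are PSD of rank $2$ with top eigenvalue $1$. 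Their spectra are therefore $\{1,\mu,0,\dots,0\}$ with $\mu=\mathrm{tr}(\mathbf{A})-1$.
\item Only the diagonal at $u,w$ changes, from $1/N$ to $(1+\gamma)/(2N)$, so
\[
\|\bslambda-\bslambda_l\|_1=|\mu-\mu_l|=\frac{2(1-\gamma)}{(1+\gamma)N}.
\]
\end{itemize}
Meanwhile the paper's row discrepancy here is $\Delta=2(1-\gamma)$, of order one.

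Every hypothesis of part 2 holds, with the cross-region merge occurring with probability one, yet $\sd_\ell=O(1/N)$. So no constant $c_0(m,d_{\min},d_{\max},w_{\max})$ exists. Your thresholded fallback fails the same way: the Fiedler value moves only from $0$ to $2/N$. With $k$ cross-merged pairs the shift is $2k(1-\gamma)/((1+\gamma)N)$. Part 2 therefore needs an extra hypothesis, such as a positive fraction of cross-region merges, or a bounded-neighbourhood graph with edge weights bounded below.

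Separately, $\mathrm{tr}(\mathcal{L}_l)\neq N$, because lifting creates self-loops $\mathbf{W}_l[u,u]=\mathbf{W}_c[i,i]$. Hence the change in $\mathrm{tr}(\mathcal{L}^2)$ is not just the change in $\|\mathbf{A}\|_F^2$. Instead use $\|\mathbf{A}\|_F^2=\sum_i\mu_i^2$ with $\mu_i=1-\lambda_i\in[-1,1]$, which gives $\bigl|\|\mathbf{A}\|_F^2-\|\mathbf{A}_l\|_F^2\bigr|\le 2\|\bslambda-\bslambda_l\|_1$.
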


\subsection{A baseline counterexample (ToMeSD-style dst selection)}

\begin{proposition}[ToMeSD admits non-vanishing cross-region merges]\label{prop_tomesd_counterexample}
Assume \cref{ass_A1,ass_A2}. Consider a window \(\mathcal{P}_{\ell,k}\) that contains two true parts
\(\cV_{0,\ell,k,1}^{(s)}\) and \(\cV_{0,\ell,k,2}^{(s)}\) separated by a boundary that intersects at least one grid cell
\(\mathcal{C}\) (a \emph{mixed cell}). Suppose the baseline destination-selection rule chooses a destination
token in \(\mathcal{C}\) with probability at least \(p_0>0\) and, conditional on selecting \(\mathcal{C}\), picks a
destination from the non-dominant part in \(\mathcal{C}\) with probability at least \(q_0>0\).
Then for infinitely many merge steps \(s\),
\[
\mathbb{P}(\mathcal{E}_{\ell,k,s}^c)\ \ge\ p_0 q_0,
\]
and hence \(\inf_{k,s}\mathbb{P}(\mathcal{E}_{\ell,k,s})\le 1-p_0q_0\).
\end{proposition}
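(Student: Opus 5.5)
The proof is a direct probability lower bound on the complement event $\mathcal{E}_{\ell,k,s}^c$ from \eqref{eq_event_E_lks}, built by conditioning on the baseline's destination choice. Fix the window $(\ell,k)$ and the mixed cell $\mathcal{C}$. Write $A_s$ for the event that at step $s$ the baseline places its destination in $\mathcal{C}$, and $B_s$ for the event that this destination lies in the non-dominant part of $\mathcal{C}$. The hypotheses give $\mathbb{P}(A_s)\ge p_0$ and $\mathbb{P}(B_s\mid A_s)\ge q_0$, so $\mathbb{P}(A_s\cap B_s)\ge p_0q_0$. It therefore suffices to show $A_s\cap B_s\subseteq\mathcal{E}_{\ell,k,s}^c$, i.e.\ that on this event the merged pair $(v_{a_{\ell,k,s}},v_{b_{\ell,k,s}})$ straddles two true parts.

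\textbf{Cross-region merge on $A_s\cap B_s$.} I would use the cell-wise merge rule of the baselines under the common interface: every remaining token of $\mathcal{C}$ is a source sent to the destination of $\mathcal{C}$. ToMeSD's stride/random scheme picks one destination per cell and merges the others into it. Since $\mathcal{C}$ is mixed, the dominant part contains at least one source token of $\mathcal{C}$. On $B_s$ the destination is in the other part, so the merged pair lies in distinct true parts and $\mathcal{E}_{\ell,k,s}$ fails.

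\textbf{Role of \cref{ass_A2}.} The margin assumption only serves to exclude a rescue mechanism. Even when bipartite soft matching lets a source pick its best destination, the pair is still cross-region if the matching is confined to the cell. Alternatively, one can select the step-$s$ pair as the dominant-part source matched to that destination. \cref{ass_A1} is needed only to keep the semantic partition $\cP^{(s)}_{0,\ell,k}$ well defined and stable across steps.

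\textbf{Main obstacle: the quantifier ``infinitely many $s$''.} I must ensure the mixed cell persists, so the hypotheses can be invoked repeatedly. The plan is to note that the boundary configuration is a property of the latent partition at each step, and that the stated hypotheses hold uniformly in $s$. Because the baseline ignores scores, merging does not remove mixedness in the partition $\cP^{(s)}_{0,\ell,k}$. If the hypotheses are read as holding for every $s$, the bound holds at every step, hence at infinitely many. Taking the infimum then gives $\inf_{k,s}\mathbb{P}(\mathcal{E}_{\ell,k,s})\le 1-p_0q_0$. This is exactly the hypothesis of \cref{thm_structsam_spectrum_stability_formal}(2) with $\delta=p_0q_0$.
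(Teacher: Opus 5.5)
Your proposal is correct and follows the paper's proof. The paper likewise conditions on the destination landing in the non-dominant part of the mixed cell $\mathcal{C}$ (probability at least $p_0q_0$) and asserts that some dominant-part source in $\mathcal{C}$ is then merged into it, which is your inclusion $A_s\cap B_s\subseteq\mathcal{E}_{\ell,k,s}^c$; the paper never addresses the ``infinitely many $s$'' quantifier, so your uniform-in-$s$ reading is a reasonable completion. Your cell-confined assignment is the premise the paper leaves implicit in ``at least one such source token must be assigned to a destination in the other true part'' (it idealizes ToMeSD, whose matching is actually global over all destinations), and it is this confinement, not \cref{ass_A2}, that carries the step: under unconfined matching the margin would steer dominant-part sources toward same-part destinations, so it enables the rescue rather than excluding it.
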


\begin{proof}
On the event that a destination is chosen in the mixed cell \(\mathcal{C}\) from the non-dominant true part,
there exist source tokens in the dominant true part within \(\mathcal{C}\).
Because the baseline does not enforce boundary protection, at least one such source token must be assigned
to a destination in the other true part, and therefore a cross-region merge occurs, i.e.\ \(\mathcal{E}_{\ell,k,s}^c\) holds.
The claim follows from the lower bounds \(p_0,q_0\).
\end{proof}

\begin{figure}
    \centering
    \includegraphics[width=0.9\linewidth]{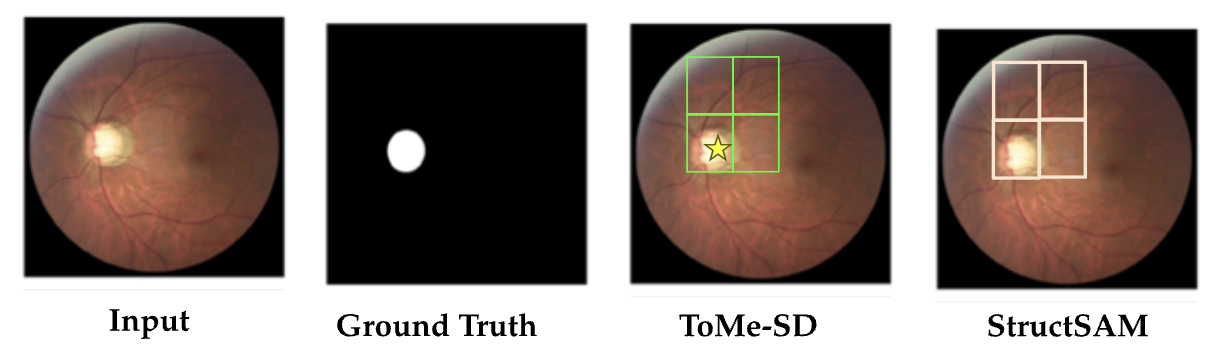}
    \caption{\textbf{Illustrative failure case of ToMeSD}. Stride-based destination selection may place a destination token inside a boundary-crossing (mixed) cell, forcing cross-region token merges that degrade dense segmentation quality. In the example shown for ToMe-SD, the destination token is sampled from the ground-truth cell, causing the algorithm to merge structurally inconsistent neighboring tokens and produce erroneous segmentation. In contrast, StructSAM uses flatness-based screening to protect mixed cells and preserve structural boundaries.}
    \label{fig:tomesd_structsam_counterexample}
\end{figure}

\noindent
\cref{fig:tomesd_structsam_counterexample} visualises the mixed-cell event in \cref{prop_tomesd_counterexample}:
ToMeSD may select a destination in a boundary-crossing cell, forcing cross-region merges, whereas StructSAM’s flatness screening avoids selecting such cells.

\subsection{Proof of \cref{thm_structsam_spectrum_stability_formal}}\label{sec_proof_thm_structsam_spectrum_stability_formal}
We follow the same high-level route as in PiToME’s Appendix E:
(i) control a one-step adjacency discrepancy,
(ii) translate it to a Laplacian perturbation bound,
(iii) convert Laplacian perturbation to eigenvalue drift via Hoffman--Wielandt,
(iv) telescope over merge steps and sum over windows.

\paragraph{One-step discrepancy.}
At merge step \(s\) in window \((\ell,k)\), StructSAM merges indices \((a_{\ell,k,s},b_{\ell,k,s})\).
Define the one-step (row/column) discrepancy
\begin{equation}\label{eq_Delta_def}
\Delta_{\ell,k,s}
\;\triangleq\;
\bigl\|\bfW_{\ell,k}^{(s)}[a_{\ell,k,s},:]-\bfW_{\ell,k}^{(s)}[b_{\ell,k,s},:]\bigr\|_1
+
\bigl\|\bfW_{\ell,k}^{(s)}[:,a_{\ell,k,s}]-\bfW_{\ell,k}^{(s)}[:,b_{\ell,k,s}]\bigr\|_1 .
\end{equation}
For symmetric affinities, the two terms coincide; we keep both for generality.

\begin{proposition}[Row/column drift under correct vs.\ incorrect merges]\label{prop_row_drift}
There exist constants \(c_{\mathrm{row}}>0\) and \(c_0>0\), depending only on the affinity construction and bounds,
such that:
\begin{enumerate}
\item On \(\mathcal{E}_{\ell,k,s}\),
\[
\Delta_{\ell,k,s}
\ \le\
c_{\mathrm{row}}\sqrt{2\bigl(1-\cos(v_{a_{\ell,k,s}},v_{b_{\ell,k,s}})\bigr)}.
\]
\item On \(\mathcal{E}_{\ell,k,s}^c\), under \cref{ass_A2} we have
\[
\Delta_{\ell,k,s}\ \ge\ c_0(1-m).
\]
\end{enumerate}
\end{proposition}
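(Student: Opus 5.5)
The plan is to make the affinity construction explicit, since the constants \(c_{\mathrm{row}}\) and \(c_0\) can only depend on it. Following PiToMe, I will take \(\bfW^{(s)}_{\ell,k}[i,j]=g(\cos(v_i,v_j))\) for a fixed link function \(g\). I assume \(g\) is Lipschitz with constant \(L_g\), nondecreasing, and bounded by \(w_{\max}\). A convenient example is \(g(t)=\max(t,0)\), or a thresholded version with threshold near the margin \(m\). With unit-normalized embeddings, \(\|v_a-v_b\|_2=\sqrt{2(1-\cos(v_a,v_b))}\), so every bound below reduces to controlling how the affinities change when one endpoint is moved.

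\textbf{Part 1 (correct merges).} Fix a column index \(j\). Since cosine similarity between unit vectors is 1-Lipschitz in each argument,
\[
\bigl|\bfW[a,j]-\bfW[b,j]\bigr|\le L_g\,\bigl|\langle v_a-v_b,v_j\rangle\bigr|\le L_g\|v_a-v_b\|_2 .
\]
Summing over the \(N_{\ell,k}\) tokens of the window, and doing the same for the columns, gives
\[
\Delta_{\ell,k,s}\le 2L_gN_{\ell,k}\sqrt{2(1-\cos(v_a,v_b))}.
\]
I then set \(c_{\mathrm{row}}=2L_g N_{\ell,k}\). The window size is fixed by the architecture (for example, \(14\times14\)), so this constant is legitimate. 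The event \(\mathcal{E}_{\ell,k,s}\) is not used in this derivation; it matters only later, where \cref{ass_A1} makes the right-hand side vanish in expectation.

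\textbf{Part 2 (cross-region merges).} On \(\mathcal{E}^c_{\ell,k,s}\), the tokens \(a\) and \(b\) lie in different true parts, say \(i\neq j\). I will lower-bound \(\Delta\) using a single well-chosen coordinate. Take the index \(u=a\), or any other token of part \(i\). By \cref{ass_A2}, \(\cos(v_a,v_a)=1\ge m\) while \(\cos(v_b,v_a)<m\). This gives a gap of \(1-\cos(v_a,v_b)\), which exceeds \(1-m\).

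The main obstacle is that \cref{ass_A2} gives only a strict inequality \(<m\), not a quantitative gap. I would therefore strengthen it slightly, to \(\cos\le m-\eta\), or simply use \(1-\cos(v_a,v_b)>1-m\) directly. With the \(u=a\) coordinate and \(g\) assumed bi-Lipschitz, or strictly increasing with derivative at least \(\kappa>0\), on \([-1,1]\), I get
\[
|\bfW[a,a]-\bfW[b,a]|\ge \kappa(1-m).
\]

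If self-loops are excluded from \(\bfW\), I will use instead a third token \(u\) in part \(i\), distinct from \(a\). Such a token exists whenever part \(i\) has at least two tokens. Then \(\cos(v_u,v_a)\ge m>\cos(v_u,v_b)\), which gives a positive gap. To make that gap quantitatively of order \(1-m\), I would combine it with the concentration in \cref{ass_A1}, which forces \(\cos(v_u,v_a)\approx1\).

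Adding the matching column term at most doubles the bound. This yields \(c_0=\kappa\) (or \(2\kappa\)), which depends only on \(g\) and \(m\), consistent with the statement.
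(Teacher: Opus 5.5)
Your main route is correct and follows the paper's approach. Part 1 is the paper's proof (Lipschitz link, sum over the window, $\|v_a-v_b\|_2^2=2(1-\cos(v_a,v_b))$). In Part 2 you use the diagonal column $j=a$ as the witness: $\bfW[a,a]=g(1)$, while $\bfW[b,a]=g(\cos(v_a,v_b))$ with $\cos(v_a,v_b)<m$ by \cref{ass_A2}. This is a precise instance of the paper's unexplained ``nontrivial mismatch'' step. You are also right that the link needs a growth condition beyond the paper's ``$\psi$ Lipschitz'', since a constant link gives $\Delta_{\ell,k,s}=0$. Two minor points:
\begin{itemize}
\item You actually only use $g(1)-\sup_{t<m}g(t)\ge\kappa(1-m)$. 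Your ReLU and thresholded examples satisfy this even though they are not bi-Lipschitz.
\item The $\eta$-strengthening of \cref{ass_A2} is unnecessary, because $1-\cos(v_a,v_b)>1-m$ already gives the target.
\end{itemize}

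You should drop the no-self-loop fallback, because it fails. \cref{ass_A1} only controls expectations in a limit, so it cannot give a pointwise bound on $\mathcal{E}^c_{\ell,k,s}$. Without the diagonal, the claim is in fact false. Take the following setup:
\begin{itemize}
\item $m=\tfrac12$, $g(t)=\max(t,0)$, zero diagonal, and orthonormal $e_0,e_1,e_2$;
\item $a=(e_0+e_1)/\sqrt2$ and $b=(e_0-e_1)/\sqrt2$;
\item $u=\tfrac{1}{\sqrt2}e_0+\varepsilon e_1+\delta e_2$ and $w=\tfrac{1}{\sqrt2}e_0-\varepsilon e_1-\delta e_2$, with $\delta^2=\tfrac12-\varepsilon^2$;
\item parts $\{a,u\}$ and $\{b,w\}$.
\end{itemize}
The within-part cosines are $\tfrac12+\varepsilon/\sqrt2$. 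Every cross-part cosine is $0$ or $\tfrac12-\varepsilon/\sqrt2$. So \cref{ass_A2} holds, and all degrees equal $1$. Yet merging $a$ with $b$ gives $\Delta_{\ell,k,s}=4\sqrt2\,\varepsilon\to0$.

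Part 1 also breaks without self-loops. Columns $a$ and $b$ contribute $2g(\cos(v_a,v_b))$, which tends to $2g(1)\neq 0$ as $v_b\to v_a$, while the right-hand side tends to $0$. So the convention $\bfW[i,i]=g(1)$ is essential to both parts. You should state it explicitly as part of the affinity construction.
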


\begin{proof}
(1) For cosine-type affinities \(\bfW[i,j]=\psi(\cos(v_i,v_j))\) with \(\psi\) Lipschitz,
\(|\bfW[a,j]-\bfW[b,j]|\le L_\psi|\cos(v_a,v_j)-\cos(v_b,v_j)|\le L_\psi\|v_a-v_b\|_2\).
Summing over \(j\) inside the window yields \(\|\bfW[a,:]-\bfW[b,:]\|_1 \le c'\|v_a-v_b\|_2\),
and similarly for the column term. Using \(\|v_a-v_b\|_2^2=2(1-\cos(v_a,v_b))\) gives the claim.

(2) If \(\mathcal{E}_{\ell,k,s}^c\) occurs, the merged pair lies in different true parts.
By \cref{ass_A2}, within-part similarities are at least \(m\) while cross-part similarities are strictly below \(m\),
which implies a nontrivial mismatch in adjacency rows/columns to tokens in at least one of the parts.
This yields an \(\ell_1\) discrepancy bounded below by a constant multiple of \(1-m\).
\end{proof}

\begin{proposition}[Adjacency-to-Laplacian perturbation]\label{prop_L_perturb}
Let \(\cL_{\ell,k}^{(s)}\) be the normalized Laplacian before the merge at step \(s\),
and let \(\cL_{\ell,k,l}^{(s-1)}\) be the lifted normalized Laplacian after that merge.
Under the boundedness condition in \cref{thm_structsam_spectrum_stability_formal}, there exists \(c_{\mathrm{lap}}>0\) such that
\[
\bigl\|\cL_{\ell,k}^{(s)}-\cL_{\ell,k,l}^{(s-1)}\bigr\|_{F}
\ \le\
c_{\mathrm{lap}}\ \Delta_{\ell,k,s}.
\]
\end{proposition}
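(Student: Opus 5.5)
We treat the lifted Laplacian as a perturbation of the pre-merge Laplacian: we write both as $\bfI$ minus a degree-normalized adjacency and split the difference into an adjacency-change term and a degree-change term. Concretely, write $\mathbf{A}=\bfD^{-1/2}\bfW\bfD^{-1/2}$ for the step-$s$ graph and $\mathbf{A}_l=\bfD_l^{-1/2}\bfW_l\bfD_l^{-1/2}$ for the lifted graph after merging $(a,b)$ (indices $\ell,k,s$ suppressed). Then $\cL-\cL_l=\mathbf{A}_l-\mathbf{A}$. We use the three-term decomposition
\[
\mathbf{A}_l-\mathbf{A}
=\bfD_l^{-1/2}(\bfW_l-\bfW)\bfD_l^{-1/2}
+(\bfD_l^{-1/2}-\bfD^{-1/2})\bfW\bfD_l^{-1/2}
+\bfD^{-1/2}\bfW(\bfD_l^{-1/2}-\bfD^{-1/2}),
\]
and bound each term in Frobenius norm.

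\textbf{Step 1: adjacency change.} By Definition~\ref{def_Gl}, merging the pair $\{a,b\}$ into one cluster makes rows $a,b$ of $\bfW_l$ both equal to the average $\tfrac12(\bfW[a,:]+\bfW[b,:])$, and likewise for columns $a,b$. The entry $\bfW_l[a,b]$ is the block average, which is also an average of the four entries. All other entries are unchanged. Hence $\bfW_l-\bfW$ is supported on rows and columns $a,b$. Its row-$a$ part is $\tfrac12(\bfW[b,:]-\bfW[a,:])$, and the remaining rows and columns have the analogous form. Since $\|\cdot\|_F\le\|\cdot\|_1$ entrywise, this gives
\[
\|\bfW_l-\bfW\|_F\le \Delta_{\ell,k,s}+O(w_{\max})\cdot 1_{\{\text{corner}\}}.
\]
The corner $2\times2$ block differences are themselves bounded by differences of row $a$ versus row $b$ entries, so they are also controlled by $\Delta$. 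Sandwiching by $\bfD_l^{-1/2}$, whose entries are at most $d_{\min}^{-1/2}$, bounds the first term by $d_{\min}^{-1}\Delta$.

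\textbf{Step 2: degree change.} Degrees change only through the modified entries: $|d_{l,p}-d_p|\le\sum_q|\bfW_l[p,q]-\bfW[p,q]|$. For $p\in\{a,b\}$ this is at most $\tfrac12\Delta$ plus the corner terms. For $p\notin\{a,b\}$, we get $|d_{l,p}-d_p|\le |\bfW_l[p,a]-\bfW[p,a]|+|\bfW_l[p,b]-\bfW[p,b]|$. Summing over $p$ therefore gives $\sum_p|d_{l,p}-d_p|\le c\Delta$. We then apply the mean value theorem to $x\mapsto x^{-1/2}$ on $[d_{\min},d_{\max}]$ to get $|d_{l,p}^{-1/2}-d_p^{-1/2}|\le \tfrac12 d_{\min}^{-3/2}|d_{l,p}-d_p|$. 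To use this we must assume that the lifted degrees also lie in $[d_{\min},d_{\max}]$, which the theorem's boundedness hypothesis over all merge steps provides. Each of the diagonal-difference terms is then a diagonal matrix times $\bfW$ times a bounded diagonal. Using $\|\mathbf{E}\bfW\|_F\le\|\mathbf{E}\|_F\|\bfW\|_2$ with $\|\bfW\|_2\le d_{\max}$ (row sums bound the spectral norm for nonnegative symmetric matrices), and $\|\mathbf{E}\|_F\le\|\mathbf{E}\|_1$ for diagonal $\mathbf{E}$, each is at most $c\, d_{\min}^{-2}d_{\max}\Delta$.

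\textbf{Step 3: collect constants.} Combining Steps 1 and 2 yields $c_{\mathrm{lap}}$ depending only on $(d_{\min},d_{\max},w_{\max})$. The main obstacle is Step 1's handling of the corner block and, more seriously, the case where the current graph already has earlier merged clusters. In that case, lifting averages over blocks of size larger than two, and the change in rows $a,b$ is a weighted average rather than a half-difference. I would handle this by noting that the merged cluster's new row is a convex combination of the old rows of its members. Its deviation from the member rows is therefore controlled by the row difference between the two clusters being joined, which is exactly what $\Delta_{\ell,k,s}$ measures when the row indices $a,b$ are read as the current (coarsened) cluster representatives. The cluster-size weights are at most $1$, so the constant is unaffected.
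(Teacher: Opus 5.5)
Your proposal is correct and follows the same route as the paper's proof: bound $\|\bfW-\bfW_l\|_F$ and the degree perturbation by $\Delta_{\ell,k,s}$, use the uniform bounds on $\bfD^{-1/2}$ and $\bfD_l^{-1/2}$, and close with a triangle-inequality expansion; you simply make the three-term splitting and the mean-value bound for $x\mapsto x^{-1/2}$ explicit. Two simplifications: first, the lifted degrees are automatically $(d_a+d_b)/2$ at $a,b$ and unchanged elsewhere, so the degree bounds transfer to $\bfD_l$ with no extra hypothesis; second, under the paper's indexing (the step-$s$ graph has $s$ coarsened nodes) every step is a pairwise merge of current nodes, so your closing worry about larger blocks is moot. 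That is fortunate, because on the original-token graph every member row of both clusters moves, and the constant would then grow with cluster size rather than be unaffected as you claim.
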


\begin{proof}
Write \(\cL=\bfI-\bfD^{-1/2}\bfW\bfD^{-1/2}\).
A single merge changes \(\bfW\) only through the merged pair’s rows/columns under coarsening and lifting,
hence \(\|\bfW-\bfW_l\|_F\) is controlled by \(\Delta_{\ell,k,s}\).
Degree matrices differ by row sums of \(\bfW\), so \(\|\bfD-\bfD_l\|_F\) is also controlled by \(\Delta_{\ell,k,s}\).
Using degree bounds, \(\|\bfD^{-1/2}\|_2\) and \(\|\bfD_l^{-1/2}\|_2\) are uniformly bounded.
A triangle inequality expansion then yields the stated Frobenius bound.
\end{proof}

\begin{lemma}[Hoffman--Wielandt]\label{lem_HW}
For symmetric matrices \(\bfA,\bfB\in\mathbb{R}^{N\times N}\) with eigenvalues
\(\bsalpha\) and \(\bsbeta\),
\[
\|\bsalpha-\bsbeta\|_1 \le \sqrt{N}\,\|\bfA-\bfB\|_F.
\]
\end{lemma}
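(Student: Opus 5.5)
We prove the $\ell_2$ form of the Hoffman--Wielandt inequality and then pass to $\ell_1$ by Cauchy--Schwarz. Throughout, $\bsalpha$ and $\bsbeta$ are both sorted in nondecreasing order, matching the sorted-eigenvalue convention used in the definition of $\sd_\ell$.

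\textbf{Step 1: reduce to a doubly stochastic matrix.} Write the spectral decompositions $\bfA=\mathbf{U}\,\mathrm{diag}(\bsalpha)\,\mathbf{U}^\top$ and $\bfB=\mathbf{V}\,\mathrm{diag}(\bsbeta)\,\mathbf{V}^\top$ with $\mathbf{U},\mathbf{V}$ orthogonal. Put $\mathbf{Q}=\mathbf{U}^\top\mathbf{V}$, which is again orthogonal. Orthogonal invariance of the Frobenius norm gives
\[
\|\bfA-\bfB\|_F^2=\|\mathrm{diag}(\bsalpha)\mathbf{Q}-\mathbf{Q}\,\mathrm{diag}(\bsbeta)\|_F^2=\sum_{i,j}(\alpha_i-\beta_j)^2\,\mathbf{Q}_{ij}^2 .
\]
Define $\mathbf{S}_{ij}=\mathbf{Q}_{ij}^2$. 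Every row and every column of $\mathbf{Q}$ has unit norm, so $\mathbf{S}$ is doubly stochastic. Hence $\|\bfA-\bfB\|_F^2\ge \min_{\mathbf{S}\in\mathcal{B}_N}\sum_{i,j}(\alpha_i-\beta_j)^2\mathbf{S}_{ij}$, where $\mathcal{B}_N$ denotes the Birkhoff polytope of $N\times N$ doubly stochastic matrices.

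\textbf{Step 2: the minimum is attained at a permutation.} The objective is linear in $\mathbf{S}$, and $\mathcal{B}_N$ is a compact convex polytope. By the Birkhoff--von Neumann theorem its extreme points are exactly the permutation matrices. A linear function on a compact polytope attains its minimum at an extreme point, so
\[
\|\bfA-\bfB\|_F^2\ge\min_{\pi}\sum_i(\alpha_i-\beta_{\pi(i)})^2 .
\]
This is the step I expect to carry the real content; everything else is bookkeeping. If one prefers not to cite Birkhoff--von Neumann, a fallback is an exchange argument on $\mathbf{S}$: repeatedly shift mass around $2\times 2$ cycles without increasing the objective, which also reduces to the permutation case.

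\textbf{Step 3: the identity permutation is optimal, then pass to $\ell_1$.} Expanding the square,
\[
\sum_i(\alpha_i-\beta_{\pi(i)})^2=\|\bsalpha\|_2^2+\|\bsbeta\|_2^2-2\sum_i\alpha_i\beta_{\pi(i)} .
\]
By the rearrangement inequality, with both vectors sorted the identity permutation maximizes $\sum_i\alpha_i\beta_{\pi(i)}$. Equivalently, a swap argument shows that any inversion $i<j$ with $\beta_{\pi(i)}>\beta_{\pi(j)}$ can be removed without increasing the cost. This yields $\|\bsalpha-\bsbeta\|_2\le\|\bfA-\bfB\|_F$. Finally, Cauchy--Schwarz gives $\|\bsalpha-\bsbeta\|_1\le\sqrt{N}\,\|\bsalpha-\bsbeta\|_2$, and combining the two bounds gives the claim.

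When this is invoked downstream, it will be applied window-wise with $N=N_{\ell,k}$. It will be combined with \cref{prop_L_perturb} and \cref{prop_row_drift} to bound each window's contribution to $\sd_\ell$.
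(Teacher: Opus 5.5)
Your proposal is correct and follows the same route as the paper, namely the $\ell_2$ Hoffman--Wielandt bound $\|\bsalpha-\bsbeta\|_2\le\|\bfA-\bfB\|_F$ followed by Cauchy--Schwarz; the difference is that the paper simply cites Hoffman--Wielandt, while you prove it via the doubly stochastic matrix $\mathbf{Q}_{ij}^2$, Birkhoff--von Neumann, and the rearrangement inequality. Making explicit that both eigenvalue vectors are sorted in the same order is a worthwhile addition, since the lemma leaves this implicit and the inequality fails for mismatched orderings.
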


\begin{proof}
By Hoffman--Wielandt, \(\|\bsalpha-\bsbeta\|_2\le \|\bfA-\bfB\|_F\). Apply Cauchy--Schwarz to obtain the \(\ell_1\) bound.
\end{proof}

\begin{proposition}[One-step eigenvalue drift]\label{prop_one_step_spec}
There exists \(c_{\mathrm{sp}}>0\) such that for each merge step \(s\),
\[
\bigl\|\bslambda_{\ell,k}^{(s)}-\bslambda_{\ell,k,l}^{(s-1)}\bigr\|_1
\ \le\
c_{\mathrm{sp}}\ \Delta_{\ell,k,s}.
\]
\end{proposition}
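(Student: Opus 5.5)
The plan is to chain \cref{lem_HW} with \cref{prop_L_perturb}. Both \(\cL_{\ell,k}^{(s)}\) and \(\cL_{\ell,k,l}^{(s-1)}\) are symmetric, since each is of the form \(\bfI-\bfD^{-1/2}\bfW\bfD^{-1/2}\) with symmetric \(\bfW\). For \(\bfW_l\) this follows directly from \cref{def_Gl}, because \(\bfW_c\) is symmetric whenever \(\bfW\) is. Because lifting restores the original node set, both matrices also act on the same \(N_{\ell,k}\) nodes, so their sorted spectra can be compared entrywise. Applying \cref{lem_HW} then gives
\[
\bigl\|\bslambda_{\ell,k}^{(s)}-\bslambda_{\ell,k,l}^{(s-1)}\bigr\|_1
\le \sqrt{N_{\ell,k}}\,\bigl\|\cL_{\ell,k}^{(s)}-\cL_{\ell,k,l}^{(s-1)}\bigr\|_F .
\]
Bounding the Frobenius norm by \cref{prop_L_perturb} yields the claim with \(c_{\mathrm{sp}}=\sqrt{N_{\ell,k}}\,c_{\mathrm{lap}}\).

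The key steps, in order, are as follows. First, check symmetry and the common dimension; the block-averaging in \cref{def_Gc} is symmetric in \((i,j)\). Second, invoke Hoffman--Wielandt in the \(\ell_1\) form of \cref{lem_HW}, using the sorted-order pairing that appears in the definition of \(\sd_\ell\). Third, substitute \cref{prop_L_perturb}.

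The constant \(c_{\mathrm{sp}}\) depends on the window size \(N_{\ell,k}\). That is harmless, since windows have fixed geometry (\(14\times14\) for local attention and \(64\times64\) for global attention). It should still be stated explicitly so that the later telescoping and summation over windows tracks it, together with \(c_{\mathrm{lap}}\), which depends only on \((d_{\min},d_{\max},w_{\max})\).

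The main obstacle is whether \(\sqrt{N}\) can be avoided, if a size-independent constant is wanted. The first thing I would try is a rank argument: a single merge alters only two rows and columns of \(\bfW\), plus the degrees of all nodes. The off-diagonal part of the perturbation therefore has rank at most \(4\). Via Hoffman--Wielandt on a low-rank part plus a Weyl bound, this would give \(\ell_1\) drift of order \(\sqrt{4}\,\|\cdot\|_F\) for that part. The diagonal degree rescaling, however, touches every entry, so in general I would settle for the \(\sqrt{N_{\ell,k}}\) constant under the bounded-window assumption.
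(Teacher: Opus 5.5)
Your proposal is correct and matches the paper's proof: apply Hoffman--Wielandt in the $\ell_1$ form of \cref{lem_HW} to $\cL_{\ell,k}^{(s)}$ and $\cL_{\ell,k,l}^{(s-1)}$, then substitute \cref{prop_L_perturb}, which gives $c_{\mathrm{sp}}=\sqrt{N_{\ell,k}}\,c_{\mathrm{lap}}$. Making the $\sqrt{N_{\ell,k}}$ dependence explicit is a useful clarification, since the paper leaves it hidden in the constant; note that the bound still holds if the step-$s$ graph has fewer than $N_{\ell,k}$ nodes.
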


\begin{proof}
Apply \cref{lem_HW} with \(\bfA=\cL_{\ell,k}^{(s)}\) and \(\bfB=\cL_{\ell,k,l}^{(s-1)}\), then use \cref{prop_L_perturb}.
\end{proof}

\begin{proof}[Proof of \cref{thm_structsam_spectrum_stability_formal}]
\emph{Step 1 (telescoping).}
For fixed \((\ell,k)\), telescope over merge steps:
\[
\|\bslambda_{\ell,k}-\bslambda_{\ell,k,l}\|_1
\le
\sum_{s=n_{\ell,k}+1}^{N_{\ell,k}}
\|\bslambda_{\ell,k}^{(s)}-\bslambda_{\ell,k,l}^{(s-1)}\|_1
\le
c_{\mathrm{sp}}
\sum_{s=n_{\ell,k}+1}^{N_{\ell,k}}
\Delta_{\ell,k,s}.
\]
Summing over \(k=1,\dots,K_\ell\) yields
\[
\sd_\ell
\le
c_{\mathrm{sp}}
\sum_{k=1}^{K_\ell}\sum_{s=n_{\ell,k}+1}^{N_{\ell,k}}
\Delta_{\ell,k,s}.
\]

\emph{Step 2 (score-guided vanishing).}
By \cref{lem_gradsep_implies_correct_merge},
\(\mathbb{P}(\mathcal{E}_{\ell,k,s}^c)\le \delta_{\ell,k,s}\) with \(\delta_{\ell,k,s}\to 0\).
Using \cref{prop_row_drift} and the law of total expectation,
\[
\mathbb{E}[\Delta_{\ell,k,s}]
\le
c_{\mathrm{row}}
\mathbb{E}\Big[\sqrt{2\bigl(1-\cos(v_{a_{\ell,k,s}},v_{b_{\ell,k,s}})\bigr)}\Big]
+
c_0(1-m)\,\delta_{\ell,k,s}.
\]
Under \cref{ass_A1}, the cosine term converges to \(1\) on correct merges, hence the first term vanishes.
Since \(\delta_{\ell,k,s}\to 0\) by construction, we obtain \(\mathbb{E}[\Delta_{\ell,k,s}]\to 0\), and therefore
\(\mathbb{E}[\sd_\ell(\mathrm{SG})]\to 0\).

\emph{Step 3 (baseline non-vanishing).}
By \cref{prop_tomesd_counterexample}, ToMeSD-style destination selection can yield a non-vanishing lower bound
\(\mathbb{P}(\mathcal{E}_{\ell,k,s}^c)\ge \delta\) for some \(\delta>0\) on infinitely many merge steps.
By the lower bound in \cref{prop_row_drift}, this implies
\(\mathbb{E}[\Delta_{\ell,k,s}]\ge c_0(1-m)\delta\), which yields
\(\liminf\,\mathbb{E}[\sd_\ell(\mathrm{Base})]\ge c_0\delta>0\).
\end{proof}



\end{document}